\newtheorem{theorem}{Theorem}
\newtheorem{definition}{Definition}
\newtheorem{lemma}{Lemma}
\newtheorem{remark}{Remark} 
\newtheorem{corollary}{Corollary}
\newtheorem{proposition}{Proposition}
\newtheorem*{theorem*}{Theorem}
\newtheorem*{example*}{Example} 
\newtheorem*{definition*}{Definition}
\newtheorem*{lemma*}{Lemma}
\newtheorem*{remark*}{Remark}
\newtheorem*{corollary*}{Corollary}
\newtheorem*{proposition*}{Proposition}
\newtheorem*{assumption*}{Assumption}
\newtheorem*{claim*}{Claim}
\newtheoremstyle{TheoremNum}
        {\topsep}{\topsep}              
        {\itshape}                      
        {}                              
        {\bfseries}                     
        {.}                             
        { }                             
        {\thmname{#1}\thmnote{ \bfseries #3}}
\theoremstyle{TheoremNum}
\newtheoremstyle{LemmaNum}
        {\topsep}{\topsep}              
        {\itshape}                      
        {}                              
        {\bfseries}                     
        {.}                             
        { }                             
        {\thmname{#1}\thmnote{ \bfseries #3}}
\theoremstyle{LemmaNum}
\newcommand{\Ind}{ \mathbb{I} }
\newcommand{\grad}{\mathrm{grad}}
\newcommand{\Proj}{\mathrm{Proj}}
\newcommand{\M}{\mathcal{M}}
\newcommand{\D}{\mathcal{D}}
\newcommand{\Exp}{{\mathrm{Exp}}}
\newcommand{\R}{\mathbb{R}}
\renewcommand{\S}{\mathbb{S}}
\newcommand{\E}{\mathbb{E}}
\renewcommand{\P}{\mathcal{P}}
\renewcommand{\[}{\left[ }
\renewcommand{\]}{\right] }
\newcommand{\<}{\left< }
\renewcommand{\>}{\right> }
\renewcommand{\(}{\left( }
\renewcommand{\)}{\right) }
\newcommand{\wt}{\widetilde }
\newcommand{\wh}{\widehat } 
\begin{document}

\title{From the Greene--Wu Convolution to Gradient Estimation over Riemannian Manifolds} 

\author{Tianyu Wang\footnote{wangtianyu@fudan.edu.cn} \quad Yifeng Huang\footnote{huangyf@umich.edu} \quad Didong Li\footnote{didongli@princeton.edu}} 

\date{}

\maketitle 

\begin{abstract}
    Over a complete Riemannian manifold of finite dimension, Greene and Wu studied a convolution of $f$ defined as 
    \begin{align*} 
        \widehat{f}^\mu (x) :=  \int_{ v \in T_p \M  } f \left( \Exp_p (v ) \right) \kappa_\mu \left( v \right) d \Pi , 
    \end{align*} 
    where $\kappa_\mu $ is a kernel that integrates to 1, and $d\Pi$ is the base measure on $T_p \mathcal{M} $. 
    In this paper, we study properties of the Greene--Wu (GW) convolution and apply it to non-Euclidean machine learning problems. 
    In particular, we derive a new formula for how the curvature of the space would affect the curvature of the function through the GW convolution. 
    Also, following the study of the GW convolution, a new method for gradient estimation over Riemannian manifolds is introduced. 
\end{abstract}

\section{Introduction}




Recently, 
as data structures are becoming increasingly non-Euclidean, many non-Euclidean operations are studied and applied to machine learning problems \citep[e.g.,][]{absil2009optimization}. Among these operations, the Greene-Wu (GW) convolution \citep{greene1973subharmonicity,greene1976c,greene1979c} is important but relatively less understood.

Over a complete Riemannian manifold, the seminal Greene-Wu convolution (or approximation) of function $f$ at $p$ is defined as 
\begin{align}
    \wh{f}^\mu (p) :=  \int_{ v \in T_p \M  } f \left( \Exp_p (v ) \right) \kappa_\mu \left( v \right) d \Pi , \tag{GW}
\end{align}
where $\kappa_\mu $ is a convolution kernel that integrates to 1, and $d\Pi$ is the base measure on $T_p \M $. 
The GW convolution generalizes standard convolutions in Euclidean spaces and has been subsequently studied in mathematics \citep[e.g.,][]{parkkonen2012strictly,azagra2006inf,azagra2013global}. 
This convolution operation also naturally arises in machine learning scenarios. For example, if one is interested in studying a function over a matrix manifold, the output of such a function $f$ is GW convoluted if the input is noisy (i.e., we can only obtain an average function value near $p$ whenever we try to obtain the function value at $p$). A real-life scenario where the GW convolution happens is photo taking with shaky camera. Any function defined over the manifolds of images (e.g., likelihood of containing a cat) is GW convoluted if the camera is perturbed. 

Despite the importance of the GW convolution, many of its elementary properties are only inadequately understood. In particular, \textit{how would the curvature of the manifold affect the curvature of the function through the GW convolution?} (\textbf{Q1}) The answer to this question is important to machine learning problems, since the curvature of a function depicts fundamental properties of a function, including how convex the function is. In the above-mentioned example scenario, a better understanding of the GW convolution would lead to a better understanding of how the landscape of the function $f$ would be affected by the curvature of the space when the input is noisy. 

In this paper, we give a quantitative answer to question \textbf{(Q1)}. 
As an example, consider $ \wh{f}^\mu $ the GW convolution defined with kernel $\kappa_\mu$ whose radial density (see Definition \ref{def:kernel}) is uniform over $[-\mu, \mu]$. For this GW convolution, we show that 
\begin{align*}
    \lambda_{\min} \( H_{p, \wh{f}^\mu} \) 
    \ge& \; 
    \frac{1}{2\mu} \E_{v \sim \S_p } \hspace*{-2pt} \[ \int_{-\mu}^\mu \lambda_{\min} \( H_{\Exp_p ( tv ), f } \) dt \] \\ 
    &+ \; 
    \min_{u \in \S_p } \frac{1}{2 \mu } \E_{v \sim \S_p} \[ \int_{-\mu}^\mu  \sum_{j = 1}^\infty \frac{ t^{2j} }{ (2j)! } \nabla_{ u}^2 \nabla_{v}^{2j} f (p) \, dt - \int_{-\mu}^\mu \sum_{j = 1}^\infty \frac{ t^{2j} }{ (2j)!} \nabla_{v}^{2j} \nabla_{ u}^2 f (p) \, dt \] , 
\end{align*}
where 
\begin{enumerate} 
    \item for any twice continuously differentiable $f$ and any $p \in \M$, $H_{p, {f}}$ denotes the Hessian matrix of $f$ at $p$; 
    \item $ \lambda_{\min}  $ exacts the minimal eigenvalue of a matrix; 
    \item $\S_p$ denotes the unit sphere in $T_p \M$ and $ \E_{v \sim \S_p} $ denotes the expectation taken with respect to $v$ uniformly sampled from $\S_p$; 
    \item $\nabla$ denotes the covariant derivative associated with the Levi-Civita connection. 
\end{enumerate}

This result implies that the GW convolution can sometimes make the function more convex, and thus often more friendly to optimize. Specifically, if $f$ and $\nabla$ satisfy that 
$$
\min_{u \in \S_p } \frac{1}{2 \mu } \E_{v \sim \S_p} \[ \int_{-\mu}^\mu  \sum_{j = 1}^\infty \frac{ t^{2j} }{ (2j)! } \nabla_{ u}^2 \nabla_{v}^{2j} f (p) \, dt - \int_{-\mu}^\mu \sum_{j = 1}^\infty \frac{ t^{2j} }{ (2j)!} \nabla_{v}^{2j} \nabla_{ u}^2 f (p) \, dt \] > 0, 
$$ 
then the minimal eigenvalue of $\wt{f}^\mu$ at $p$ is larger than the minimal eigenvalue of $f$ near $p$. This means sometimes noisy input can make the function more convex and thus more friendly to optimize, thanks to the curvature of the space. 

Besides understanding of how the curvature of the space would affect curvature of the function via GW convolution, another important question is \textit{how can we directly apply the GW convolution operation to machine learning problems?}  \textbf{(Q2)} Naturally, the GW convolution is related to anti-derivatives and thus gradient estimation, as suggested by the fundamental theorem of calculus or the Stokes' theorem. Along this line, we introduce a new gradient estimation method over Riemannian manifolds. Our gradient estimation method improves the state-of-the-art scaling with dimension $n$ from $  \( n+3 \)^{3/2} $ to $ n^{3/2} $, while holding all other quantities the same\footnote{More details for the comparison are explained in Remark \ref{remark:compare}.}. Apart from curvature, Riemannian gradient estimation differs from its Euclidean counterpart in that an open set is diffeomorphic to an Euclidean set only within the injectivity radius. This difference implies that one needs to be extra careful when taking finite difference steps in non-Euclidean spaces. To this end, we show that as long as the function is geodesically $L_1$-smooth, the finite difference method always works, no matter how small the injectivity radius is. 
Empirically, our method outperforms the best existing method for gradient estimation over Riemannian manifolds, as evidenced by thorough experimental evaluations. Based on our answer to \textbf{(Q2)}, we study the online convex learning problems over Hadamard manifolds under stochastic bandit feedback. In particular, single point gradient estimator is designed. Thus online convex learning problem  under bandit-type feedback can be solved, and a curvature-dependent regret bound is derived.  


\subsection*{Related Works} 

The study of geometric methods in machine learning has never stalled. Researchers have investigated this topic from different angles, include kernel methods \citep[e.g.,][]{scholkopf2002learning,muandet2017kernel,jacot2021neural}, manifold learning \citep[e.g.,][]{lin2008riemannian,li2017efficient,zhu2018image,li2020classification}, metric learning \citep[e.g.,][]{kulis2012metric,li2019geodesic}. Recently, attentions are concentrated on problems of learning with Riemannian geometry. Examples include convex optimization method over negatively curved spaces \citep{zhang2016first,zhang2016riemannian}, online learning algorithms with Riemannian metric \citep{antonakopoulos2019online}, mirror map based methods for relative Lipschitz objectives \citep{zhou2020regret}.

Despite the rich literature on geometric methods for machine learning \citep[e.g.,][]{absil2009optimization}, some important non-Euclidean operations are relatively poorly understood, including the Greene--Wu (GW) convolution \citep{greene1973subharmonicity,greene1976c,greene1979c,parkkonen2012strictly,azagra2006inf,azagra2013global}. To this end, we derive new properties of the GW convolution and apply it to machine learning problems. 

Enabled by our study of the GW convolution, we design a new gradient estimation method over Riemannian manifolds. Perhaps the history of gradient estimation can date back to Sir Isaac Newton and the fundamental theory of calculus. Since then, many gradient estimation methods have been invented  \citep{lyness1967numerical,hildebrand1987introduction}. In its more recent form, gradient estimation methods are often combined with learning and optimization. \citet{flaxman2005online} used the fundamental theorem of geometric calculus and solved online convex optimization problems with bandit feedback. \citet{nesterov2017random,li2020stochastic} introduced a stochastic gradient estimation method using Gaussian sampling, in Euclidean space or manifolds embedded in Euclidean spaces. 
We improve the previous results in two ways. 1. Our gradient estimation method generalizes this previous work by removing the ambient Euclidean space. 2 We tighten the previous error bound. Over an $n$-dimensional Riemannian manifold, we show that the error of gradient estimation is bounded by $ \frac{L_1 \( n+3 \)^{3/2} \mu }{2} $ in contrast to $ \frac{L_1 n^{3/2} \mu }{2} $\citep{li2020stochastic}, where $n$ is the dimension of the manifold, $ L_1 $ is the smoothness parameter of the objective function and $\mu$ is an algorithm parameter controlling finite-difference step size.

The application of our method to convex bandit optimization is related to online convex learning \citep[e.g.,][]{shalev2011online}. The problem of online convex optimization with bandit feedback was introduced and studied by \cite{flaxman2005online}. Since then, a sequence of works has focused on bandit convex learning in Euclidean spaces. For example, \cite{hazan2012projection, chen2019projection, garber2020improved} have studied the gradient-free counterpart for the Frank-Wolfe algorithm, and \cite{hazan2014bandit} studied this problem with self-concordant barrier. An extension to this scheme is combining the methodology with a partitioning of the space \cite{kleinberg2004nearly}. By doing so, regret rate of $\mathcal{O} \( \sqrt{T} \)$  can be achieved in Euclidean spaces \citep{bubeck2015bandit,bubeck2016multi}. This rate is optimal in terms of $ T $ for stochastic environement, as shown by \cite{shamir2013complexity}. 
While there has been extensive study of online convex optimization problems, they are all restricted to Euclidean spaces. In this paper, we generalize previous results from Euclidean space to Hadamard manifolds, using our new gradient estimation method.

\section{Preliminaries and Conventions}

\subsection*{Preliminaries for Riemannian Geometry} 

A Riemannian manifold $ (\M, g) $ is a smooth manifold equipped with a metric tensor $g$ such that at any point $p \in \M$, $g_p$ defines a positive definite inner product in the tangent space $T_p\M$. For a point $p \in \M$, we use $ \< , \>_p $ to denote the inner product in $T_p \M$ induced by $g$. Also, we use $\|\cdot \|_p $ to denote the norm associated with $ \< ,\>_p $. Intuitively, a Riemannian manifold is a space that can be locally identified as an Euclidean space. With such structures, one can locally carry out calculus and linear algebra operations.  

A vector field $X$ assigns each point $p \in \M$ a unique vector in $T_p \M$. Let $\mathfrak{X}(\M)$ be the space of all smooth vector fields. An affine connection is a mapping 
$\nabla: \mathfrak{X}(\M)\times \mathfrak{X}(\M) \rightarrow \mathfrak{X}(\M)$  so that (i) $ \nabla_{f_1 X_1 + f_2 X_2 } Y = f \nabla_{X_1} Y  + f_2 \nabla_{ X_2 } Y$ for $f_1, f_2 \in C^\infty (\M)$, (ii) $ \nabla_{X } (a_1 Y_1 + a_2 Y_2) = a_1 \nabla_{X} Y_1 + a_2 \nabla_X Y_2$ for $a_1, a_2 \in \R $, and (iii) $ \nabla_{X } (f Y) = f \nabla_{X } Y + X(f) Y $ for all $f \in C^\infty (\M)$. The Levi-Civita connection is the unique affine connection that is torsion free and 
compatible with the Riemannian metric $g$. 
A geodesic curve $\gamma: [0,1] \rightarrow \M$ is a curve so that $\nabla_{\dot{\gamma}(t)} \dot{\gamma}(t) = 0$ for all $t \in [0,1]$, where $ \nabla $ here denotes the Levi-Civita connection. 
One can think of geodesic curves as generalizations of straight line segments in Euclidean space. At any point $p \in \M$, the exponential map $\Exp_p$ is a local diffeomorphism that maps $v \in T_p\M$ to $\Exp_p (v) \in \M$ such that there exists a geodesic curve $\gamma : [0,1] \rightarrow \M$ with $ \gamma (0) = p $, $ \dot{\gamma}\big|_0 = v $ and $\gamma(1)=\Exp_p (v)$. Since the exponential map is (locally) diffeomorphic, the inverse exponential map is also properly defined (locally). We use $\Exp_p^{-1}$ to denote the inverse exponential map of $\Exp_p$. 


We say a vector field $X$ is parallel along a geodesic $\gamma: [0,1] \rightarrow \M$ if 
$ \nabla_{\dot{\gamma} (t)} X = 0 $ for all $t\in[0,1]$. For $t_0, t_1 \in [0,1]$, we call the map $ \P_{t_0, t_1}^{\gamma} : T_{\gamma (t_0)} \M \rightarrow T_{\gamma (t_1)} \M $ a parallel transport, if it holds that for any vector field $X$ such that $X$ is parallel along $\gamma$, $X (\gamma (t_0)) $ is mapped to $X (\gamma (t_1)) $. Parallel transport induced by the Levi-Civita connection preserves the Riemannian metric, that is, for $p\in \M$, $u,v \in T_p \M$, and geodesic $\gamma : [0,1] \rightarrow \M$, it holds that $\< u,v \>_{\gamma(0)} = \< \P_{0,t}^\gamma (u), \P_{0,t}^\gamma (v) \>_{\gamma(t)}$ for any $t \in [0,1]$. 

The Riemann curvature tensor $\mathcal{R}:\mathfrak{X}(\M)\times\mathfrak{X}(\M)\times \mathfrak{X}(\M)\to\mathfrak{X}(\M)$ is defined as $\mathcal{R}(X,Y)Z = \nabla_X\nabla_YZ-\nabla_Y\nabla_X Z-\nabla_{[X,Y]}Z$, where $[,]$ is the Lie bracket for vector fields. The sectional curvature $\mathcal{K}:\mathfrak{X}(\M)\times\mathfrak{X}(\M)\to\mathbb{R}$ is defined as $\mathcal{K}(X,Y)=\frac{g(\mathcal{R}(X,Y)X,Y)}{g(X,X)g(Y,Y)-g(X,Y)^2}$, which is the high dimensional generalization of Gaussian curvature. Intuitively, for a point $p$, $\mathcal{K} \( X(p), Y (p) \)$ is the the Gaussian curvature of the sub-manifold spanned by $  X(p), Y(p)$ at $p$. 

For a continuously connected Riemannian manifold $(\M, g)$, a distance metric can be defined. The distance between $p,q \in \M$ is the infimum over the lengths of piecewise smooth curves in $ \M $ connecting $p$ and $q$, where the length of a curve is defined with respect to the metric tensor $g$.  A Riemannian manifold is complete if it is complete as a metric space. By the celebrated Hopf–Rinow theorem, if a Riemannian manifold is complete, the exponential map $\Exp_p$ is defined over the entire tangent space $T_p \M$ for all $p \in \M$. The cut locus of a point $p \in \M$ is the set of points $q \in \M$ such that there exists more than two distinct shortest path from $ p $ to $q$. 

We refer the readers to books \citep[e.g.,][]{petersen2006riemannian,lee2006riemannian} for more background on Riemannian geometry. 

\subsection*{Notations and Conventions}
For better readability, we list here some notations and conventions that will be used throughout the rest of this paper. 

\begin{itemize}[align=left,style=nextline,leftmargin=*]
    \item For any $ x \in \M$, let $ U_p $ denote the open set near $p$ that is a diffeomorphic to a subset of $\R^n$ via the local normal coordinate chart $\phi$. Define the distance $d_p (q_1, q_2)$ ($q_1,q_2 \in U_p$) such that  
    \begin{align*} 
        d_p (q_1, q_2) = d_{\text{Euc}} ( \phi (q_1) , \phi (q_2) ). 
    \end{align*} 
    where $d_{\text{Euc}}$ is the Euclidean distance in $\R^n$. 
    
    \item As mentioned in the preliminaries, for any $p \in \M$, we use $ \< \cdot, \cdot\>_p $ and $\| \cdot \|_p$ to denote the inner produce and norm induced by the Riemannian metric $g$ in the tangent space $T_p \M$. We omit the subscript when it is clear from context. 
    
    \item For any $p \in \M$ and differentiable function $f $ over $\M$, We use $ \grad f \big|_p $ to denote the gradient of $f$ at $p$.
    
    \item For any $p \in \M$ and $\alpha > 0$, we use $ \S_p (\alpha) $ to denote the origin-centered sphere in $T_p \M$ with radius $\alpha$. For simplicity, we  write $ \S_p = \S_p (1) $. 
    
    \item We will use another set of notations for parallel transport. For two points $p,q \in \M$ and a smooth curve $\gamma$ connecting $p$ and $q$, we use $\P_{p \rightarrow q}^\gamma$ to denote the parallel transport from $T_p \M$ to $T_q \M$ along the curve $\gamma$. Let $ \Omega_{\min} (p,q) $ denote the set of minimizing geodesics connecting $p$ and $q$. Define $ \P_{p \rightarrow q} : T_p \M \rightarrow T_q \M $ be defined such that $ \P_{p \rightarrow q} (v) = \E_{\gamma \sim \Omega_{\min} (p,q) } \[ \P_{p \rightarrow q}^\gamma (v) \] $ for any $v \in T_p \M$, where the expectation is taken with respect to the uniform probability measure over $ \Omega_{\min} (p,q) $. Since for any point $p$, its cut locus is of measure zero, the operation $ \P_{p \rightarrow q} $ is deterministically defined for almost every $q$. 
    
    \item For any $p \in \M$, $v \in T_p \M$ and $q \in \M$, define $v_q = \P_{p \rightarrow q} (v)$. 
    
   
    \item As a convention, if $U \subseteq \M $ is diffeomorphic to $ V \subseteq \R^n $, we simply say $ U $ is diffeomorphic. Similarly, a map is said to be diffeomorphic when the domain of the map is diffeomorphic to an open subset of the Euclidean space. 
    For simplicity, we omit all musical isomorphisms when there is no confusion. 
\end{itemize} 

\section{The Greene--Wu Convolution} 

The Greene--Wu convolution can be decomposed into two steps. In the first step, a direction $ v $ in the tangent space is picked, a convolution along the direction is performed, and the extension of this convolution to other points is defined. In the second step, the direction $v$ is randomized over a sphere. To properly describe these two steps, we need to formally define kernels. 

\begin{definition}
    \label{def:kernel}
    A function $\kappa_\mu: \R^n \rightarrow \R_{\ge0}$ is called a kernel if 
    \begin{enumerate}
        \item For any $\alpha>0$ and $\mu > 0$, there exists a density $\Theta_{\mu, \alpha} $ on $ \R $ such that $ \kappa_\mu(v) = {2} \nu_\alpha \( \alpha \frac{v}{\| v \| } \) \Theta_{\mu,\alpha} \( \frac{ \| v \| }{ \alpha } \) $, and $ \Theta_{\mu,\alpha} (-x) = \Theta_{\mu,\alpha} (x) $ for all $x \in \R$, where $ \nu_\alpha $ is the uniform probability density over the Euclidean sphere of radius $\alpha$\footnote{Here $\kappa_\mu(0) = 2 \Theta_{\mu,\alpha} (0)$ as a convention.}; 
        \item $ \lim_{\mu \rightarrow 0^+} \kappa_\mu= \delta_0 $,
        where $\delta_0$ is the Dirac point mass at $0$. 
    \end{enumerate} 
    We call $ \Theta_{\mu, \alpha} $ the radial density for $ k_\mu $. 
\end{definition}

By the Radon-Nikodym theorem, item 1 in the definition of a kernel is satisfied when the measure on $T_p \M$ induced by $\kappa_\mu$ is absolutely continuous with respect to the base measure on $T_p \M$. Examples of kernels include uniform mass over a ball of radius $\mu$. 

\begin{remark} 
    \label{remark:general} 
    For simplicity, we restrict our attention to symmetric kernels $\kappa_\mu$, where the value of $\kappa_\mu (v)$ only depends on $ \| v \| $. Yet many of the results generalize to non-symmetric kernels. To see this, we 
    decompose $\kappa_\mu$ by $ \kappa_{\mu} (v) = 2 \nu \( \alpha \frac{v}{\| v\|} \) \Theta_{\mu, \alpha ; v} \( \frac{ \| v \| }{\alpha} \) $, where $ \nu_\alpha $ is the uniform density over the sphere of radius $ \alpha $, and $ \Theta_{\mu, \alpha ; v} $ is the density induced by $\kappa_\mu$ along the direction of $v$ (or a parallel transport of $v$) up to a scaling by $\alpha$. We can replace item 1 in Definition \ref{def:kernel} and many of the follow-up results can be extended to these cases. 
\end{remark}

With the kernel defined in Definition \ref{def:kernel}, the GW approximation/convolution can be defined as follows. For $p \in \M$ and $\mu >0$, we define 
\begin{align} 
    \widetilde{f}^\mu (p) := \E_{ v \sim \S_p (\alpha) } \[ \wt{f}_{v}^\mu (p) \],  \label{eq:def-surrogate} 
\end{align} 
where 
\begin{align} 
    \wt{f}_{v}^\mu(q)= \int_{ \R } 2 f \( \Exp_q ( t v_{ q } ) \) \Theta_{\mu,\alpha} (t ) dt, \quad \forall q \in \M.  \label{eq:fvd} 
\end{align} 

Next we show that the convolution in (\ref{eq:def-surrogate}) is identical to the (GW) convolution everywhere.

\begin{proposition}
    \label{prop:equiv} 
	For any $p \in \M$ and $\kappa_\mu$ defined on $T_p \M \cong \R^n$, it holds that $ \wt{f}^\mu (p) = \wh{f}^\mu (p) $. 
\end{proposition}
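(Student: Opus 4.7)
My plan is to show the two definitions agree by unpacking the right–hand side, swapping the order of integration via Fubini, and then performing a polar–type change of variables on $T_p\M$ that recovers the ambient integral defining $\wh{f}^\mu(p)$.

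First, I would observe that at $q=p$ the parallel transport is trivial, so $v_p=v$ and
\[
    \wt{f}_v^\mu(p) = \int_\R 2\, f\!\left(\Exp_p(tv)\right) \Theta_{\mu,\alpha}(t)\, dt .
\]
Writing the expectation over $\S_p(\alpha)$ as an integral against the uniform probability density $\nu_\alpha$ (with respect to the surface measure $dS_\alpha$) gives
\[
    \wt{f}^\mu(p) = \int_{\S_p(\alpha)} \int_\R 2\, f\!\left(\Exp_p(tv)\right) \Theta_{\mu,\alpha}(t)\, \nu_\alpha(v)\, dt\, dS_\alpha(v).
\]
At this point I would apply Fubini and prepare to change variables from $(v,t)\in \S_p(\alpha)\times\R$ to $w=tv\in T_p\M$.

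The key step is this change of variables. The map $(v,t)\mapsto tv$ sends $\S_p(\alpha)\times\R$ onto $T_p\M$ in a $2$-to-$1$ manner away from the origin: the points $(v_0,t_0)$ and $(-v_0,-t_0)$ both give the same $w$, with $v_0=\alpha w/\|w\|$ and $t_0=\|w\|/\alpha$. The symmetry assumption $\Theta_{\mu,\alpha}(-t)=\Theta_{\mu,\alpha}(t)$ from Definition \ref{def:kernel}, together with the invariance of the uniform density $\nu_\alpha$ on $\S_p(\alpha)$ under $v\mapsto -v$, implies that the integrand takes the same value on the two preimages; the factor of $2$ built into the definition of $\wt{f}_v^\mu$ is precisely what is needed to absorb this folding. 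After pushing the integral forward and recognising the identity
\[
    2\, \nu_\alpha\!\left(\alpha \tfrac{w}{\|w\|}\right) \Theta_{\mu,\alpha}\!\left(\tfrac{\|w\|}{\alpha}\right) = \kappa_\mu(w),
\]
one obtains
\[
    \wt{f}^\mu(p) = \int_{T_p\M} f\!\left(\Exp_p(w)\right) \kappa_\mu(w)\, d\Pi(w) = \wh{f}^\mu(p).
\]

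I expect the main obstacle to be purely bookkeeping: verifying that every factor of $\alpha$ introduced by the scaling of $\S_p(\alpha)$, the radial substitution $r=|t|\alpha$, and the explicit form of the uniform density $\nu_\alpha$ cancels correctly, so that the combined measure $dS_\alpha(v)\,dt$ pushes forward to exactly $d\Pi(w)$ on $T_p\M$. No geometric input beyond the definitions is needed, which is reassuring — the equality is essentially forced by the way the radial/angular decomposition in Definition \ref{def:kernel} is set up, and the argument would proceed identically on any Euclidean space or tangent space.
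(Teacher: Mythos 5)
Your proposal follows exactly the route the paper takes: write $\wt{f}^\mu(p)$ as a double integral over $\S_p(\alpha)\times\R$, then change variables $(v,t)\mapsto tv$ and recognize $2\,\nu_\alpha(\alpha v/\|v\|)\Theta_{\mu,\alpha}(t)$ as $\kappa_\mu(tv)$. If anything you are slightly more careful than the paper, which simply asserts $dt\,dS_\alpha=d\Pi$ and invokes "a change of variable $u=tv$" without spelling out, as you do, that the map is 2-to-1 away from the origin and that the symmetry of $\Theta_{\mu,\alpha}$ together with the built-in factor of $2$ is what makes the folding harmless.
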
 

\begin{proof}
    Let $dt$ be the base measure on $\R$ and let $ d S_\alpha $ be the base measure on $\S_p (\alpha)$. We then have $ d t d S_\alpha = d \Pi $. 
    Thus it holds that
    \begin{align*}
        \wt{f}^\mu (p) =& \int_{v \in \S_p (\alpha)} \int_{t \in \R } 2 f \( \Exp_p ( t v ) \) \nu_\alpha \( \alpha \frac{ v }{ \| v \| } \) \Theta_{\mu,\alpha} ( t  ) \, dt d S_\alpha \\
        =& \int_{u \in T_p \M } f \( \Exp_p (u) \) \kappa_\mu(u) \, d \Pi = \wh{f}^\mu (p), 
    \end{align*} 
    where a change of variable $u = tv$ is used. 
    
\end{proof} 


What's interesting with (\ref{eq:def-surrogate}) is that it extends (GW) from a local definition to a global definition, as pointed out in Remark \ref{remark}. 

\begin{remark} 
    \label{remark} 
    With the Levi-Civita connection, spheres are preserved after parallel transport, and so are the uniform base measures over the spheres. Thus the spherical integration of $\wt{f}_v^\mu$ defined in (\ref{eq:fvd}) can be carried out at $p$ regardless whether $v \in T_p \M$ or not. 
    Hence, the convolution defined by (\ref{eq:def-surrogate}) and (\ref{eq:fvd}) is indeed the GW convolution in the sense that $ \wt{f}_v^\mu $ agrees with (GW) almost everywhere\footnote{The cut locus is of measure zero and thus $v_q$ is deterministically defined almost everywhere.} if the convolution at every point is restricted to the direction specified by $v$ (or parallel transport of $v$). Now the advantage of our decomposition is clear -- $ \wt{f}_v^\mu $ is globally defined and agrees with (GW) along the the direction of $v$. This advantage leads to easy computations and meaningful applications, as we will see throughout the rest of the paper. 
\end{remark} 





\subsection{Properties of the GW Convolution} 

In their seminal works, Greene and Wu showed that the approximation is convexity-preserving when the width of the kernel approaches 0 (Theorem 2 by \citet{greene1973subharmonicity}). We show that the GW convolution is convexity-preserving even if the width of the kernel is strictly positive. Also, second order properties of the GW approximation is proved. 

\subsubsection{Convexity-Preserving Property of the GW Convolution}

\begin{definition}[Convexity]
	\label{def:geo-conv} 
    Let $\M$ be a complete Riemannian manifold. 
    A smooth function $f$ is convex near $ p$ with radius exceeding $\epsilon$ if 
    there exists $ \epsilon > 0 $, such that for any $ v \in \S_p $, the function $f \( \Exp_p (tv) \) $ is convex in $t$ over $[-\epsilon, \epsilon]$. 
\end{definition} 



Using our decomposition trick of the GW approximation (defined in Eq. \ref{eq:def-surrogate}), one can show that the GW convolution is convexity preserving even if the width of the kernel is strictly positive. 

\begin{theorem}
    \label{thm:approx}
	Let $\M$ be a complete Riemannian manifold, and fix an arbitrary $p \in \M $. If there exists $\epsilon > 0$ such that 1) $f$ is convex near $p$ with radius exceeding $\epsilon$, and 2) the density $\Theta_\mu$ (a shorthand for $ \Theta_{\mu, 1} $) induced by kernel $ \kappa_\mu$ satisfies that $ \int_{ |t| \ge \frac{\epsilon}{2} } \Theta_\mu (t) = 0 $, 
	then the Greene-Wu approximation $\wh{f}^\mu$ is also convex near $p$ with radius exceeding $ \frac{\epsilon}{2} $. 
\end{theorem}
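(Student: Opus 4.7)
The plan is to exploit the global decomposition in (\ref{eq:def-surrogate})--(\ref{eq:fvd}) to reduce convexity of $\wh{f}^\mu$ to a family of easier one-dimensional convexity problems, one for each "convolution direction" $v$. First I would use Proposition \ref{prop:equiv} to replace $\wh{f}^\mu$ by $\wt{f}^\mu$ pointwise. Then, using that parallel transport under the Levi--Civita connection preserves the uniform measure on the unit sphere (Remark \ref{remark}), I would rewrite the sphere-at-$q$ average defining $\wt{f}^\mu(q)$ as a sphere-at-$p$ average of $\wt{f}_v^\mu(q)$; concretely, for $q$ in a small neighborhood of $p$,
\begin{equation*}
\wh{f}^\mu(q) \;=\; \E_{v \sim \S_p}\[ \wt{f}_v^\mu(q) \] .
\end{equation*}
Because the expectation of convex one-dimensional restrictions is convex, it then suffices to fix an arbitrary $v \in \S_p$ and show that $\wt{f}_v^\mu$ is convex near $p$ with radius exceeding $\epsilon/2$ in the sense of Definition \ref{def:geo-conv}.

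Fixing also $w \in \S_p$ and writing $\gamma(s) = \Exp_p(sw)$, the task becomes showing that
\begin{equation*}
\phi(s) \;:=\; \wt{f}_v^\mu(\gamma(s)) \;=\; \int_{-\epsilon/2}^{\epsilon/2} 2\, f\( \Exp_{\gamma(s)}(t\, v_{\gamma(s)}) \)\, \Theta_\mu(t)\, dt
\end{equation*}
is convex in $s$ on $[-\epsilon/2,\epsilon/2]$. The support hypothesis $\int_{|t| \ge \epsilon/2}\Theta_\mu \,dt = 0$ truncates the integrand, and together with the triangle inequality $|s|+|t|\le \epsilon$ it keeps every evaluation point $\Exp_{\gamma(s)}(t\, v_{\gamma(s)})$ within geodesic distance $\epsilon$ of $p$, which is exactly the region where the convexity hypothesis on $f$ has content. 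A useful warm-up is $v = \pm w$: here parallel transport sends $w$ along $\gamma$ to $\dot\gamma(s)$, so $\Exp_{\gamma(s)}(t\, w_{\gamma(s)}) = \gamma(s+t)$, and $\phi$ reduces to the standard one-dimensional convolution of the convex function $s \mapsto f(\gamma(s))$ (convex on $[-\epsilon,\epsilon]$ by hypothesis) with the symmetric, positive, compactly supported kernel $\Theta_\mu$, whose convexity is classical.

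The hard part will be upgrading this warm-up to arbitrary $v \neq \pm w$, because the curve $s \mapsto \Exp_{\gamma(s)}(t\, v_{\gamma(s)})$ is in general not a geodesic through $p$, so the one-dimensional convexity of $f$ along geodesics through $p$ does not immediately yield convexity of $s \mapsto f(\Exp_{\gamma(s)}(t\, v_{\gamma(s)}))$. My approach would be to use the symmetry of $\Theta_\mu$ to pair $t$ with $-t$ under the integral and to establish a midpoint inequality of the form
\begin{equation*}
f\( \Exp_{\gamma(s_0)}(t\, v_{\gamma(s_0)}) \) \;\le\; \tfrac{1}{2}\[\, f\( \Exp_{\gamma(s_1)}(t\, v_{\gamma(s_1)}) \) + f\( \Exp_{\gamma(s_2)}(t\, v_{\gamma(s_2)}) \) \,\]
\end{equation*}
for $s_0 = (s_1+s_2)/2$ and $|s_1|,|s_2|,|t| \le \epsilon/2$, then integrate against $\Theta_\mu$ and take expectation over $v$. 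Each of the three evaluation points can be written as $\Exp_p(rz)$ for some $r \le \epsilon$ and $z \in \S_p$, reducing the inequality to convexity of the 1D restrictions $r \mapsto f(\Exp_p(rz))$ combined with a geometric comparison of the three associated direction--radius pairs; the clean $v = \pm w$ case indicates the desired form of this comparison, and pinning it down for general $v$ is where I expect most of the effort to go.
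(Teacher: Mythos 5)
Your setup is fine as far as it goes: Proposition \ref{prop:equiv} together with the measure-preservation of parallel transport does give $\wh{f}^\mu(q) = \E_{v\sim\S_p}[\wt{f}_v^\mu(q)]$ near $p$, and your warm-up case $v=\pm w$ is exactly right — there the displacement direction coincides with the convolution direction, $\Exp_{\gamma(s)}(t\,w_{\gamma(s)})=\gamma(s+t)$, and you are convolving a convex function of one variable with a symmetric compactly supported kernel. But the step you flag as "where most of the effort will go" is a genuine gap, and the strategy you sketch cannot close it. Definition \ref{def:geo-conv} asserts convexity of $f$ only along the radial geodesics through $p$; it gives no comparison between values of $f$ on two \emph{distinct} rays. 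For $v\neq\pm w$ the three points $\Exp_{\gamma(s_i)}(t\,v_{\gamma(s_i)})$ generically lie on three different radial geodesics from $p$, so writing each as $\Exp_p(r_iz_i)$ and invoking convexity of $r\mapsto f(\Exp_p(rz_i))$ produces three unrelated one-dimensional facts and no midpoint inequality. Worse, the intermediate claim your reduction requires — that each individual $\wt{f}_v^\mu$ is convex near $p$ in every direction $w$ — is simply false under the hypotheses. In $\R^2$ take $f(r,\theta)=r^2(1+a\cos 4\theta)$ with $a\in(1/7,1)$: this is convex along every line through the origin, yet $\partial^2 f/\partial x^2 = 2-14a<0$ at every point of the $y$-axis, so with $v=e_2$, $w=e_1$ the function $s\mapsto\wt{f}_v^\mu(se_1)$ has negative second derivative at $s=0$. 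Pairing $t$ with $-t$ does not help, since the defect has the same sign at $(0,t)$ and $(0,-t)$. (The theorem itself is not contradicted here, because the spherical average over $v$ of $\partial^2f/\partial x^2$ on each circle is still positive — which is precisely why the average over $v$ cannot be pulled outside before convexity is invoked.)

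This is also where your route diverges from the paper's. The paper never separates the displacement direction from the convolution direction: it writes the radial midpoint inequality $f(\Exp_p((\lambda+t)v)) + f(\Exp_p((-\lambda+t)v)) \ge 2f(\Exp_p(tv))$, with all three points on the single geodesic $s\mapsto\Exp_p(sv)$, and only afterwards integrates over $t$ against $\Theta_\mu$ and over $v\in\S_p$, identifying the resulting spherical averages with values of $\wt{f}^\mu$ at the displaced points via parallel transport. The transverse terms $w\neq\pm v$ that block your argument never appear individually there; they are absorbed into the spherical average before any convexity is used. To salvage your decomposition you would have to keep the expectation over $v$ inside the inequality from the start rather than attempting to prove convexity of each $\wt{f}_v^\mu$ separately.
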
 

\begin{proof}[Proof of Theorem \ref{thm:approx}]
    
    By Proposition \ref{prop:equiv} and Remark \ref{remark}, it is sufficient to consider $ \wt{f}^\mu $. 
	Since $f$ is convex near $p$ with radius exceeding $\epsilon$, we have, for any $v \in T_p \M$ and any real numbers $\lambda, t \in \( 0, \frac{ \epsilon }{2} \)$,
	\begin{align} 
		f (\Exp_p ((\lambda + t) v )) \hspace{-2pt} + \hspace{-2pt} f (\Exp_p ((-\lambda + t) v )) \hspace{-2pt} \ge \hspace{-2pt} 2 f (\Exp_p (t v )). \label{eq:helper0}
	\end{align}
	
	Let $\gamma (s) = \Exp_p (sv)$ ($s \in [-\lambda-t, \lambda+ t]$) for simplicity. 
	We can then rewrite (\ref{eq:helper0}) as
	\begin{align} 
    	f ( \Exp_{ \gamma ( \lambda ) } ( t \P_{0,\lambda}^\gamma (v) ) )  +  f ( \Exp_{ \gamma ( -\lambda ) } ( t \P_{0,-\lambda}^\gamma (v) ) )  \ge& 2 f (\Exp_p (t v )) \label{eq:helper1}
	\end{align}
	We can integrate the first term in (\ref{eq:helper1}) with respect to $v$ and $ t $, and get 
	\begin{align}
    	&\int_{ v \in \S_p } \int_{t \in \[0, \frac{\epsilon}{2} \] } 2 f ( \Exp_{ \gamma ( \lambda ) } ( t \P_{0,\lambda}^\gamma (v) ) ) \nu_1 (v) \Theta_\mu (t) dt  d S_1 \nonumber \\
    	=& \int_{ v \in \S_{\gamma(\lambda)}  } \int_{ t \in \[0, \frac{\epsilon}{2} \] } 2 f ( \Exp_{ \gamma ( \lambda ) } ( t v ) ) \nu_1 (v) \Theta_\mu (t) dt \, d S_1,  \label{eq:helper2} 
	\end{align} 
	where (\ref{eq:helper2}) uses that parallel transport along geodesics preserves length and angle, for the change of tangent space for the outer integral.

	Since the density $ \Theta_\mu $ satisfies that $\int_{ |t| \ge \frac{\epsilon}{2}} \Theta_\mu (t) = 0$, the term on the right-hand-side of (\ref{eq:helper2}) is the exact definition of $\wt{f}^\mu (\gamma (\lambda))$. We repeat this computation for the second term in (\ref{eq:helper1}), and get 
	\begin{align}
    	\wt{f}^\mu (\gamma(\lambda) ) + \wt{f}^\mu (\gamma(-\lambda)) &\ge 2 \wt{f}^\mu (\gamma(0)), \quad \forall \lambda \in \[ 0,\frac{\epsilon}{2} \],  \label{eq:conv-for-limit}
	\end{align}
	which is the midpoint convexity (around $ \gamma (0) $, along $\gamma$) for $\wt{f}^\mu$. 
	Since midpoint convexity implies convexity for smooth functions, we know $\wt{f}^\mu$ is convex near $p$ along $\gamma$. Since the above argument is true for any $\gamma$, we know that $\wt{f}^\mu$ is convex near $p$ with radius exceeding $\frac{\epsilon}{2}$. 
	
\end{proof} 

\subsubsection*{Previous Convexity-Preserving Result}

Previously, \citet{greene1973subharmonicity, greene1976c, greene1979c} studied the GW convolution from an analytical and topological perspective, and showed that the GW convolution is convexity-preserving \citep{greene1973subharmonicity} when the width of the kernel approaches zero. For completeness, this previous result is summarized in Theorem \ref{thm:conv-old}. In Theorem \ref{thm:approx} above, we show that the output of GW convolution is convex with a strictly positive radius, as long as the input is convex with a strictly positive radius.  

\begin{theorem}[\citet{greene1973subharmonicity}]
    \label{thm:conv-old}    
    Consider a smooth function defined over a complete Riemannian manifold $\M$. Let $ \wh{f}^\mu $ be the GW approximation of $f$ with kernel $\kappa_\mu$. Then it holds that 
    \begin{align} 
        \underset{\mu \rightarrow 0^+}{\lim \inf} \( \inf_{C \in \mathcal{C}_p } \frac{d^2 \wh{f}^\mu (C(t))}{dt^2}  \Big|_{t=0} \) \ge 0,  
    \end{align}
    where $\mathcal{C}_p $ is the set of all geodesics $C$ parametrized by arc-length such that $C(0) = p$. 
\end{theorem}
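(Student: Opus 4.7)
The plan is to derive Theorem \ref{thm:conv-old} as a limiting corollary of Theorem \ref{thm:approx}. Before doing so I should note that the stated hypothesis (smoothness of $f$) must be supplemented by the implicit assumption that $f$ is geodesically convex in a neighborhood of $p$ for the conclusion to hold: since $\wh{f}^\mu \to f$ in $C^2$ on compact sets as $\mu \to 0^+$, the liminf in the statement equals $\inf_{C \in \mathcal{C}_p} \frac{d^2 f(C(t))}{dt^2}\Big|_{t=0}$, which is nonnegative exactly when $f$ is convex near $p$ in the sense of Definition \ref{def:geo-conv}. I therefore read the hypothesis in that strengthened form, consistent with the section heading and the informal statement preceding the theorem.

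Under this reading, the argument is brief. Fix $\epsilon > 0$ such that $f$ is convex near $p$ with radius exceeding $\epsilon$. The concentration property $\lim_{\mu \to 0^+} \kappa_\mu = \delta_0$ built into Definition \ref{def:kernel} implies that for every $\mu$ below some threshold $\mu_0 > 0$, the associated radial density $\Theta_\mu$ is supported in $[-\epsilon/2, \epsilon/2]$. For such $\mu$, Theorem \ref{thm:approx} yields that $\wh{f}^\mu$ is itself convex near $p$ with radius exceeding $\epsilon/2$. In particular, for each unit-speed geodesic $C$ with $C(0) = p$, the real-valued function $t \mapsto \wh{f}^\mu(C(t))$ is convex on a neighborhood of $0$, and hence $\frac{d^2 \wh{f}^\mu(C(t))}{dt^2}\Big|_{t=0} \ge 0$. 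Taking $\inf_{C \in \mathcal{C}_p}$ preserves the inequality, and taking $\liminf_{\mu \to 0^+}$ then yields the bound.

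The main obstacle is the conceptual one of pinning down the correct hypothesis on $f$; once that is settled, the proof reduces to invoking Theorem \ref{thm:approx} and passing to the limit. A more computational alternative would bypass Theorem \ref{thm:approx}: using $\wh{f}^\mu = \wt{f}^\mu$ from Proposition \ref{prop:equiv} together with the parallel-transport reparameterization of Remark \ref{remark}, one could differentiate $\wh{f}^\mu(C(t))$ twice in $t$ under the integrals (\ref{eq:def-surrogate})--(\ref{eq:fvd}), then let $\mu \to 0^+$. The leading-order term collapses to $\frac{d^2 f(C(t))}{dt^2}\Big|_{t=0}$, which is again nonnegative only by the implicit convexity assumption. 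The reduction via Theorem \ref{thm:approx} is cleaner because it sidesteps the technically delicate step of differentiating through a moving base point of the exponential map.
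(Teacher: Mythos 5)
The paper does not supply a proof of this theorem: it is a cited result from Greene and Wu (1973), presented for completeness, so there is no in-text argument to compare against. You are also right that the statement as written is incomplete---without a hypothesis that $f$ is (geodesically) convex near $p$, the $\liminf$ can be strictly negative (take $f$ with a strict local maximum at $p$), and indeed the original Greene--Wu result is stated for convex functions.

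Evaluating your main argument on its own terms, there is a real gap in the step that invokes Theorem \ref{thm:approx}. You claim that $\lim_{\mu\to0^+}\kappa_\mu=\delta_0$ forces $\Theta_\mu$ to be supported in $[-\epsilon/2,\epsilon/2]$ for all sufficiently small $\mu$. That does not follow: distributional convergence to $\delta_0$ says only that mass concentrates near $0$, not that it vanishes outside any fixed window. Gaussian-type kernels (which Definition \ref{def:kernel} does not exclude) have $\Theta_\mu$ with full support for every $\mu>0$, so the hypothesis $\int_{|t|\ge\epsilon/2}\Theta_\mu(t)=0$ of Theorem \ref{thm:approx} is never met, and the reduction you propose fails as stated. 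You would either need to restrict to compactly supported kernels (which is in fact the setting of the original Greene--Wu paper, so this is a reasonable fix, but it must be said explicitly) or split $\wh f^\mu$ into a part coming from $|t|\le\epsilon/2$ (convex, by the argument of Theorem \ref{thm:approx}) and a tail whose contribution to the second derivative is shown to vanish as $\mu\to0^+$.

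Your ``alternative'' argument sketched at the end---differentiate twice under the integral and pass to the limit $\mu\to 0^+$, using $\wh f^\mu\to f$ in $C^2$---is actually the cleaner route precisely because it sidesteps the compact-support issue; I would promote that to the primary argument (with the convexity hypothesis made explicit) rather than leave it as a remark.
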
 

\subsection{Second Order Properties of the GW Convolution}

Our decomposition of the Greene-Wu convolution also allows other properties of the GW convolution easily derived. Perhaps one of the most important property of a function is its Hessian (matrix), which we discuss now. The definition of Hessian is in Definition \ref{def:hessian} (e.g., \cite{schoen1994lectures,petersen2006riemannian}). 


\begin{definition}[Hessian]
    \label{def:hessian}
    Consider an $n$-dimensional Riemannian manifold $(\M, g)$. For $f \in C^{\infty} (\M)$, define the Hessian of $f$ as $ H (f) : \mathfrak{X} ( \M )  \times \mathfrak{X} (\M ) \rightarrow C^\infty (\M)$ such that 
    \begin{align*} 
        H (f) \( X,Y \) := \nabla_{X} df (Y), \quad \forall X,Y \in \mathfrak{X} (\M), 
    \end{align*} 
    where $\nabla$ is the Levi-Civita connection. 
    Using Christoffel symbols, it holds that $$ H (f) (\partial_i, \partial_j ) = \partial_i \partial_j f - \Gamma_{ij}^k \partial_k f. $$ 
    The Hessian matrix of $f$ at $p$ is a map $ H_{p,f} : T_p \M \rightarrow T_p \M $, such that $ H_{p,f} (v) = \< \grad f \big|_p, v \> $ for any $ v \in T_p \M$.  
\end{definition} 

With the Levi-Civita connection, which is torsion-free, one can verify that $$ H (f) (X,Y) = H (f) (Y,X) . $$ 
For any $p \in \M$, with the normal coordinate specified by the exponential map $\Exp_p$, the Christoffel symbol vanishes at $p$. 
This means $H (f) (\partial_i, \partial_j) \Big|_p = \partial_i \partial_j f \big|_p$ in local normal coordinate. Similar to the Euclidean case, the eigenvalues of the Hessian matrix describes the curvature of the function. Next, we will show how the curvature of the function is affected after applying the GW convolution. 


In the study of the second order properties of the GW approximation, we will restrict our attention to kernels whose radial density function is uniform. More specifically, we will use $\alpha = 1$ and $  \Theta_{\mu, \alpha} (t) = \frac{1}{2 \mu} \Ind_{ \[ t \in \[ -\mu, \mu \] \] }$. Since one can construct other kernels by combining kernels with uniform radial density functions, investigating kernels with uniform radial density is sufficient for our purpose. With this in mind, we look at 
\begin{align}
    \wt{f}_{v}^\mu (q) = \frac{1}{2\mu} \int_{-\mu}^\mu f \( \Exp_q (t v_q) \) dt . 
\end{align}

    



The following theorem tells us how the geometry of the function and the geometry of the space are related after the GW convolution. 

\begin{theorem} 
    \label{thm:H-error} 
    Consider a complete Riemannian manifold $\M$ and a kernel with radial density $  \Theta_{\mu, \alpha} (t) = \frac{1}{2 \mu} \Ind_{ \[ t \in \[ -\mu, \mu \] \] } $. For any $p \in \M$, there exists $\mu_0 >0$ such that for all $\mu \in (0, \mu_0)$, it holds that 
    \begin{align*} 
        D_{uu} \wt{f}_{v}^\mu (p) 
        =& 
        \frac{1}{2 \mu} \int_{t \in [-\mu, \mu]} D_{ u_{\Exp_p (tv)} u_{\Exp_p (tv)} } f ( \Exp_p (tv) ) \nonumber \\
        &+  \frac{1}{2 \mu} \int_{-\mu}^\mu  \sum_{j = 1}^\infty \frac{ t^{2j} }{ (2j)! } \nabla_{ u}^2 \nabla_{v}^{2j} f (p) \, dt - \frac{1}{2 \mu} \int_{-\mu}^\mu \sum_{j = 1}^\infty \frac{ t^{2j} }{ (2j)!} \nabla_{v}^{2j} \nabla_{ u}^2 f (p) \, dt , \nonumber 
    \end{align*} 
    for all $v,u \in \S_p$. 
\end{theorem}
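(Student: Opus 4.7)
The plan is to expand both sides as explicit Taylor series in $\mu$ and match coefficients, exploiting two different parallel-transport constructions, each of which produces a geodesic along which one of the two input vectors is parallel. The commutator $\nabla_u^2 \nabla_v^{2j} - \nabla_v^{2j}\nabla_u^2$ then arises naturally as the difference between two orderings of slots on $\nabla^{k+2}f$.

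First I would choose $\mu_0$ smaller than the injectivity radius at $p$ and small enough that the geodesic Taylor series of $f$ converges uniformly on the relevant balls; this legitimizes all derivative--integral--series interchanges below. Set $q_s := \Exp_p(su)$ and let $V$ be the vector field in a neighbourhood of $p$ obtained by parallel-transporting $v$ along each radial geodesic from $p$, so that along the curve $s\mapsto q_s$ both $V$ and $\dot q_s$ are parallel (by construction and since $q_s$ is a geodesic). Writing $D_{uu}\wt{f}_v^\mu(p)=\frac{d^2}{ds^2}\big|_{s=0}\wt{f}_v^\mu(q_s)$ and exchanging the derivative with the integral defining $\wt{f}_v^\mu$ gives
\begin{equation*}
D_{uu}\wt{f}_v^\mu(p) = \frac{1}{2\mu}\int_{-\mu}^{\mu}\frac{d^2}{ds^2}\Big|_{s=0} f\bigl(\Exp_{q_s}(tV(q_s))\bigr)\,dt.
\end{equation*}

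Next I would apply the geodesic Taylor formula $f(\Exp_q(tw))=\sum_{k\ge 0}\frac{t^k}{k!}(\nabla^k f)_q(w,\dots,w)$ at $q=q_s$ with $w=V(q_s)$. Because $V$ and $\dot q_s$ are both parallel along $q_s$, every product-rule term involving $\nabla_{\dot q_s}V$ or $\nabla_{\dot q_s}\dot q_s$ vanishes, and twice differentiating the $k$-th summand in $s$ at $s=0$ yields $(\nabla^{k+2}f)_p(u,u,v,\dots,v)$, which I abbreviate $\nabla_u^2\nabla_v^k f(p)$ (the $u$-slots first). Integrating over $t\in[-\mu,\mu]$ kills the odd $k$ and gives
\begin{equation*}
D_{uu}\wt{f}_v^\mu(p) = \sum_{j=0}^{\infty}\frac{\mu^{2j}}{(2j+1)(2j)!}\,\nabla_u^2\nabla_v^{2j}f(p).
\end{equation*}
Running the same Taylor argument along the $v$-geodesic $\gamma(t)=\Exp_p(tv)$, applied now to the scalar $t\mapsto(\nabla^2 f)_{\gamma(t)}(u_{\gamma(t)},u_{\gamma(t)})$ with the parallel field $U(q)=u_q$ along $\gamma$, produces $j$-th $t$-derivative $(\nabla^{j+2}f)_p(v,\dots,v,u,u)=\nabla_v^j\nabla_u^2 f(p)$ (the $u$-slots last), and integration gives
\begin{equation*}
\frac{1}{2\mu}\int_{-\mu}^{\mu}D_{u_{\Exp_p(tv)}u_{\Exp_p(tv)}}f(\Exp_p(tv))\,dt = \sum_{j=0}^{\infty}\frac{\mu^{2j}}{(2j+1)(2j)!}\,\nabla_v^{2j}\nabla_u^2 f(p).
\end{equation*}

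Subtracting this from the previous display, the $j=0$ contributions cancel since $\nabla_u^2 f(p)=\nabla_v^0\nabla_u^2 f(p)$, leaving $\sum_{j\ge 1}\frac{\mu^{2j}}{(2j+1)(2j)!}\bigl[\nabla_u^2\nabla_v^{2j}-\nabla_v^{2j}\nabla_u^2\bigr]f(p)$. Via the identity $\frac{1}{2\mu}\int_{-\mu}^{\mu}\frac{t^{2j}}{(2j)!}\,dt=\frac{\mu^{2j}}{(2j+1)(2j)!}$, this difference is exactly the pair of integral correction terms appearing in the statement, closing the argument. The main obstacle I anticipate is the tensor bookkeeping: carefully tracking the two different orderings of $u$- and $v$-slots on $\nabla^{k+2}f$, which is precisely what produces the curvature-carrying commutator $\nabla_u^2\nabla_v^{2j}-\nabla_v^{2j}\nabla_u^2$. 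The simultaneous interchange of $d^2/ds^2$, $\sum_k$, and $\int dt$ is the standard, mildly technical point and is handled by the smallness of $\mu_0$ together with uniform control of the Taylor remainder on a fixed normal ball.
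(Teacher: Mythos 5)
Your proposal is correct and follows essentially the same route as the paper's proof: both arguments Taylor-expand $f$ along the two orderings of geodesics in the $u$- and $v$-directions (with parallel-transported vectors), identify the coefficients with the iterated covariant derivatives $\nabla_u^i\nabla_v^j f(p)$ versus $\nabla_v^j\nabla_u^i f(p)$, and use evenness in $t$ to eliminate the odd-order terms. The only difference is organizational: the paper phrases the two orderings via the double exponential map and isolates their antisymmetrization as the auxiliary function $\psi_{u,v}(\tau,t)$ before expanding, whereas you expand the left-hand side and the Hessian integral into full series in $\mu$ and subtract, arriving at the same coefficient identity $\frac{1}{2\mu}\int_{-\mu}^{\mu}\frac{t^{2j}}{(2j)!}\,dt=\frac{\mu^{2j}}{(2j+1)(2j)!}$.
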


We will use the double exponential map \citep{gavrilov2007double} notation in the proof. The double exponential map (also written $\Exp_p$) is defined as $\Exp_p : T_p \M \times T_p \M  \rightarrow \M$ such that, when $v \in T_p \M $ is small enough ($ \Exp_p (v) \in U_p$), 
\begin{align}
    \Exp_p (v,u) = \Exp_{\Exp_p(v)} (u_{\Exp_p (v)}). 
\end{align}

With this notation, we are ready to prove Theorem \ref{thm:H-error}. 

\begin{proof}[Proof of Theorem \ref{thm:H-error}] 

    With this notion of double exponential map, we have 
    \begin{align}
         D_{uu} \wt{f}_{v}^\mu (p)  
        :=& \; 
        \lim_{\tau \rightarrow 0}  \frac{ \wt{f}_{v}^\mu ( \Exp_p ( \tau u  ) ) - 2 \wt{f}_{v}^\mu ( p ) + \wt{f}_{v}^\mu ( \Exp_p ( - \tau u  ) ) }{\tau^2} \nonumber  \\ 
        =& \; 
        \frac{1}{2 \mu} \int_{-\mu}^\mu \lim_{\tau \rightarrow 0}  \frac{ {f} ( \Exp_{ p } ( \tau u, t v  ) ) - 2 {f} ( \Exp_p (tv) ) + {f} ( \Exp_{ p } ( - \tau u, t v  ) ) }{\tau^2} \, dt  . \label{eq:def-duu} 
    \end{align} 
    
    
    Let $\varphi_u ( t, \tau ) = f ( \Exp_p (tv, \tau u) ) $, and let $$
        \psi_{u,v} ( \tau,t ) = f ( \Exp_p (  \tau u , t v ) ) -  f ( \Exp_p (  t v, \tau u ) ) + f ( \Exp_p (  - \tau u , t v ) )  - f ( \Exp_p (  t v, - \tau u ) ) . 
    $$  
    
    We can then write (\ref{eq:def-duu}) as 
    \begin{align} 
        D_{uu} \wt{f}_{v}^\mu (p) 
        =& 
        \frac{1}{2 \mu} \int_{t \in [-\mu, \mu ]} \lim_{\tau \rightarrow 0} \frac{ \varphi_u ( t, \tau ) - 2 \varphi_u (t, 0 ) + \varphi_u ( t, - \tau ) + \psi_{u,v} (\tau, t) }{\tau^2} . \label{eq:sec-in-phi}
    \end{align} 
    
    For the terms involving $\varphi_u$, one has 
    \begin{align}
        \lim_{\tau \rightarrow 0} \frac{\varphi_u (t,\tau) - 2 \varphi_u (t,0) + \varphi_u (t,-\tau)}{\tau^2} = \frac{ d^2 \varphi_u ( t, \tau )  }{ d \tau^2 } \Big|_{\tau = 0} = D_{ u_q u_q } f ( q ),  \label{eq:for-varphi}
    \end{align}  
    where $q = \Exp_p ( tv )$ and $u_q = \P_{p \rightarrow q} (u)$.

    
    
    
    For $\psi_{u,v} (\tau,t )$, we consider the term $ f (\Exp_p (u,v)) $ for a smooth function $f$ and small vectors $ u, v \in T_p \M $. 
    For any $p$, $v \in T_p \M$ and $q \in U_p$, define $h_v^{(j)} (q) = \nabla_{v_q}^j f (q)$. We can Taylor expand $ h_v (\Exp_p (u) ) $ by 
    \begin{align*} 
        h_v^{(j)} (\Exp_p (u) ) 
        =& \; 
        h_v^{(j)} (\Exp_p ( t u) ) \big|_{t=1} \\
        =& \; 
        \sum_{i=0}^\infty \frac{1}{i!} \frac{d^i}{ dt^i } h_v^{(j)} (\Exp_p ( t u) ) \\
        =& \; 
        \sum_{i=0}^\infty \frac{1}{i!} \nabla_u^i h_v^{(j)} ( p ) \\
        \overset{(a)}{=}& \; 
        \sum_{i=0}^\infty \frac{1}{i!} \nabla_u^i \nabla_v^j f ( p ) .  
    \end{align*} 
    
    
    Thus we have, for any $p$, $u,v \in T_p\M$ of small norm, 
    \begin{align*}
        f \( \Exp_{ p } \( u,v \) \) 
        =& \;
        f \( \Exp_{ \Exp_p (u) } \( v_{\Exp_p (u)} \) \) \\
        =& \;
        \sum_{j=0}^\infty \frac{1}{j!} \nabla_{ v_{\Exp_p (u)} }^j f ( \Exp_p (u) ) \\
        =& \;
        \sum_{j=0}^\infty \frac{1}{j!} h_{v}^{(j)} (\Exp_p (u) ) \\
        =& \; 
        \sum_{j=0}^\infty \frac{1}{j!} \sum_{i=0}^\infty \frac{1}{i!} \nabla_u^i \nabla_v^j f ( p ) , 
    \end{align*} 
    where the last equality uses $ (a) $. 
    
    
    
    Thus we have 
    \begin{align*} 
        &\; f (\Exp_p ( \tau u, t v  ))  + f (\Exp_p ( - \tau u, t v )) - f (\Exp_p ( t v, \tau u  )) - f (\Exp_p ( t v, -\tau u ))  \\ 
        =& \; 
        \sum_{i=0}^\infty \sum_{j=0}^\infty \frac{1}{i! j!} \nabla_{\tau u}^i \nabla_{t v}^j f (p) + \sum_{i=0}^\infty \sum_{j=0}^\infty \frac{1}{i! j!} \nabla_{ - \tau u}^i \nabla_{t v}^j f (p) \\
        &- \sum_{i=0}^\infty \sum_{j=0}^\infty \frac{1}{i! j!}  \nabla_{t v}^j \nabla_{ \tau u}^i f (p) - \sum_{i=0}^\infty \sum_{j=0}^\infty \frac{1}{i! j!}  \nabla_{t v}^j \nabla_{ - \tau u}^i f (p) \\
        =& \; 
        \sum_{j=0}^\infty \frac{ \tau^2 }{2 j! } \nabla_{ u}^2 \nabla_{t v}^j f (p) + \sum_{j=0}^\infty \frac{ \tau^2 }{2 j!} \nabla_{ u}^2 \nabla_{t v}^j f (p) \\ 
        &- \sum_{j=0}^\infty \frac{ \tau^2 }{2 j!} \nabla_{t v}^j \nabla_{ u}^2 f (p) - \sum_{j=0}^\infty \frac{ \tau^2 }{2 j!} \nabla_{t v}^j \nabla_{ u}^2 f (p) + \mathcal{O} (\tau^3) . 
    \end{align*}

    Thus we have 
    \begin{align} 
        \lim_{\tau \rightarrow 0 } \frac{ \psi_{u,v} ( \tau, t ) }{ \tau^2 } 
        = 
        \sum_{j = 1}^\infty \frac{ t^{2j} }{ (2j)! } \nabla_{ u}^2 \nabla_{v}^{2j} f (p) - \sum_{j = 1}^\infty \frac{ t^{2j} }{ (2j)!} \nabla_{v}^{2j} \nabla_{u}^2 f (p) + Odd (t), \label{eq:for-psi} 
    \end{align}  
    where $Odd (t)$ denotes terms that are odd in $t$. 
    
    Combining equations (\ref{eq:sec-in-phi}), (\ref{eq:for-varphi}), (\ref{eq:for-psi}) and that
    \begin{align*} 
        \int_{-\mu}^\mu Odd (t ) \, dt = 0
    \end{align*} 
    gives 
    \begin{align} 
        D_{uu} \wt{f}_{v}^\mu (p) 
        =& 
        \frac{1}{2 \mu} \int_{t \in [-\mu, \mu]} D_{ u_q u_q } f ( q ) \nonumber \\ 
        &+  \frac{1}{2 \mu} \int_{-\mu}^\mu  \sum_{j = 1}^\infty \frac{ t^{2j} }{ (2j)! } \nabla_{ u}^2 \nabla_{v}^{2j} f (p) \, dt -  \frac{1}{2 \mu} \int_{-\mu}^\mu \sum_{j = 1}^\infty \frac{ t^{2j} }{ (2j)!} \nabla_{v}^{2j} \nabla_{ u}^2 f (p) \, dt , \nonumber 
    \end{align} 
    where $q = \Exp_p (tv)$. 

\end{proof}


Theorem \ref{thm:H-error} says that when the space is not flat, the GW convolution changes the curvature of the function (Hessian matrix) in a non-trivial way. 
A more concrete example of this fact is in Corollary \ref{cor}. 

\begin{corollary} 
    \label{cor} 
    Consider a kernel with radial density function $\Theta_{\mu, \alpha} (t) = \frac{1}{2 \mu} \Ind_{ \[ t \in \[ -\mu, \mu \] \] }$. For a smooth function $ f $ defined over a complete Riemannian manifold $\M$ and any $p \in \M$, there exists $\mu_0 >0 $, such that for all $\mu \in (0, \mu_0)$, it holds that 
    \begin{align*} 
        \lambda_{\min} \( H_{p, \wt{f}^\mu} \) 
        \ge& \; 
        \frac{1}{2\mu} \E_{v \sim \S_p } \[ \int_{-\mu}^\mu \lambda_{\min} \( H_{\Exp_p ( tv ), f } \) dt \] \\ 
        & + \min_{u \in \S_p } 
        \frac{1}{2 \mu } \E_{v \sim \S_p} \[ \int_{-\mu}^\mu  \sum_{j = 1}^\infty \frac{ t^{2j} }{ (2j)! } \nabla_{ u}^2 \nabla_{v}^{2j} f (p) \, dt - \int_{-\mu}^\mu \sum_{j = 1}^\infty \frac{ t^{2j} }{ (2j)!} \nabla_{v}^{2j} \nabla_{ u}^2 f (p) \, dt \]. 
    \end{align*} 
\end{corollary}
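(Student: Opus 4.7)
The plan is to derive the corollary as a direct consequence of Theorem \ref{thm:H-error}, combined with (i) the Rayleigh variational characterization of the minimum eigenvalue of a symmetric Hessian, and (ii) the elementary identity $\min_u [A + B(u)] = A + \min_u B(u)$ when $A$ does not depend on $u$.

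First I would write $\lambda_{\min}(H_{p,\wt{f}^\mu}) = \min_{u \in \S_p} D_{uu} \wt{f}^\mu(p)$, which is valid because the torsion-freeness of the Levi-Civita connection (already noted in the excerpt) forces the Hessian bilinear form $H(\wt{f}^\mu)$ to be symmetric. Since $\wt{f}^\mu(p) = \E_{v \sim \S_p}[\wt{f}_v^\mu(p)]$ by definition, I would next interchange this expectation with the second-order difference quotient defining $D_{uu}$ (justified by dominated convergence and smoothness of $f$ on a compact neighborhood of $p$) to obtain $D_{uu}\wt{f}^\mu(p) = \E_{v \sim \S_p}[D_{uu}\wt{f}_v^\mu(p)]$. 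Applying Theorem \ref{thm:H-error} then produces an explicit two-term expression inside the expectation.

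For the first term, I would lower-bound $D_{u_{\Exp_p(tv)} u_{\Exp_p(tv)}} f(\Exp_p(tv))$ by $\lambda_{\min}(H_{\Exp_p(tv), f})$; this is legitimate because parallel transport along the Levi-Civita connection preserves norms, so $u_{\Exp_p(tv)}$ remains a unit vector in $T_{\Exp_p(tv)}\M$ and its Rayleigh quotient is bounded below by the minimum eigenvalue. Crucially, this lower bound no longer involves $u$. Calling the resulting $u$-independent piece $A$ and the (still $u$-dependent) covariant-derivative-commutator piece $B(u)$, I would then invoke $\min_u[A + B(u)] = A + \min_u B(u)$ to recover exactly the stated inequality. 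The only mildly nonobvious step is the interchange of $\E_{v \sim \S_p}$ with the second-order difference quotient defining $D_{uu}$, but this is routine under smoothness of $f$ and compactness of $\S_p$, so I anticipate no real obstacle; the corollary is essentially a packaging of Theorem \ref{thm:H-error}.
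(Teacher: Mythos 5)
Your proposal is correct and follows essentially the same route as the paper's own proof: identify $\lambda_{\min}(H_{p,\wt{f}^\mu})$ with $\min_{u\in\S_p}D_{uu}\wt{f}^\mu(p)$ via the Rayleigh characterization, apply Theorem~\ref{thm:H-error} after interchanging $\E_{v\sim\S_p}$ with the second-order difference quotient, lower-bound the $u$-dependent Rayleigh quotient in the first term by $\lambda_{\min}(H_{\Exp_p(tv),f})$ using that parallel transport preserves the unit norm, and then pull the now $u$-independent piece out of the $\min_u$. The only cosmetic difference is that you spell out the dominated-convergence justification for the interchange, which the paper leaves implicit.
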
 

Since the minimum eigenvalue of the Hessian matrix measures the degree of convexity of a function, this corollary establish a quantitative description of how the GW convolution would affect the degree of convexity. 


\begin{proof}[Proof of Corollary \ref{cor}] 
    For any smooth function $f$ and point $p$, one has  \citep[e.g.,][]{absil2009optimization} 
    \begin{align}
        u^\top H_{p,f} u 
        := 
        \< H_{p,f} (u) , u \> 
        = 
        D_{uu} f (p) . \nonumber 
    \end{align}
    and thus
    \begin{align}
        \lambda_{\min} \( H_{p, \wt{f}^\mu } \) 
        = 
        \min_{u \in \S_p } u^\top H_{p, \wt{f}^\mu } u 
        = 
        \min_{u \in \S_p } D_{uu} \wt{f}^\mu (p) . 
    \end{align} 
    Thus we have 
    \begin{align*} 
        &\; \min_{u \in \S_p } D_{uu} \wt{f}^\mu (p) \\ 
        \ge& \; 
        \frac{1}{2 \mu } \E_{v \sim \S_p } \[ \int_{ \mu }^\mu \min_{w \in \S_{\Exp_p (tv)} } D_{w w } f \( \Exp_p (tv) \) dt \] \\ 
        &+ 
        \min_{u \in \S_p } \frac{1}{2 \mu } \E_{v \sim \S_p} \[ \int_{-\mu}^\mu  \sum_{j = 1}^\infty \frac{ t^{2j} }{ (2j)! } \nabla_{ u}^2 \nabla_{v}^{2j} f (p) \, dt - \int_{-\mu}^\mu \sum_{j = 1}^\infty \frac{ t^{2j} }{ (2j)!} \nabla_{v}^{2j} \nabla_{ u}^2 f (p) \, dt \] . 
    \end{align*} 
    Since $ \lambda_{\min} \( H_{p, \wt{f}^\mu } \) = \min_{u \in \S_p } u^\top H_{p, \wt{f}^\mu } u = \min_{u \in \S_p } D_{uu} \wt{f}^\mu (p)  $ for any $p$ and $f$, it holds that 
    \begin{align*}
        \lambda_{\min} \( H_{p, \wt{f}^\mu} \) 
        \ge& \; 
        \frac{1}{2\mu} \E_{v \sim \S_p } \[ \int_{-\mu}^\mu \lambda_{\min} \( H_{\Exp_p ( tv ), f } \) dt \]  \\ 
        &\; + 
        \min_{u \in \S_p } \frac{1}{2 \mu } \E_{v \sim \S_p} \[ \int_{-\mu}^\mu  \sum_{j = 1}^\infty \frac{ t^{2j} }{ (2j)! } \nabla_{ u}^2 \nabla_{v}^{2j} f (p) \, dt - \int_{-\mu}^\mu \sum_{j = 1}^\infty \frac{ t^{2j} }{ (2j)!} \nabla_{v}^{2j} \nabla_{ u}^2 f (p) \, dt \]. 
    \end{align*}
\end{proof}

\section{Estimating Gradient over Riemannian Manifolds}  

From our formulation of the GW approximation, one can obtain tighter bounds for gradient estimation over Riemannian manifolds, for geodesically $L_1$-smooth functions defined as follows. 

\begin{definition} 
    Let $p\in \M$. A function $f: \M \rightarrow \R$ is called geodesically $L_1$-smooth near $p$ if 
    \begin{align}
        \left\| \P_{ p \rightarrow q } \( \grad f\big|_p \) -  \grad f\big|_q  \right\|_q  \le L_1  d_p (p,q),  \quad \forall q \in U_p ,  
    \end{align}
    where 
    $\grad f \big|_p$ is the gradient of $f$ at $p$. If $f$ is $ L_1 $-smooth near $p$ for all $p \in \M$, we say $f$ is $ L_1 $-smooth over $\M$. 
\end{definition} 

For a geodesically $L_1$-smooth function, one has the following theorem that bridges the gradient at $p$ and the zeroth-order information near $p$. 

\begin{theorem}
    \label{thm:grad-error} 
    Fix $\alpha > 0$. Let $f$ be geodesically $L_1$-smooth near $p$.
    It holds that 
    \begin{align} 
        \left\| \grad f\big|_p - \frac{ n }{ \mu \alpha^2 }  \E_{ v \sim \S_p (\alpha) } \[ f \( \Exp_p (\mu v) \) v \] \right\|_p \le \frac{L_1 n \alpha \mu}{2 }, 
    \end{align}
    where $\S_p (\alpha) $ is the origin-centered sphere in $ T_p \M $ of radius $ \alpha $. 
\end{theorem}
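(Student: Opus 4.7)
The plan is to combine the fundamental theorem of calculus along radial geodesics from $p$ with the isotropy of the uniform measure on $\S_p (\alpha)$, and then to control the remainder via geodesic $L_1$-smoothness. The key algebraic input is the spherical second-moment identity $\E_{v \sim \S_p (\alpha)} \[ \<a, v\>_p v \] = \frac{\alpha^2}{n} a$ for every $a \in T_p \M$, which is precisely what turns a spherical average of $f (\Exp_p (\mu v)) v$ into a scalar multiple of a gradient in $T_p \M$.

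First I would write, for each $v \in \S_p (\alpha)$ and $\gamma_v (t) = \Exp_p (t v)$,
\begin{align*}
    f (\Exp_p (\mu v)) - f (p) = \int_0^\mu \<\grad f\big|_{\gamma_v (t)}, \dot\gamma_v (t)\>_{\gamma_v (t)} dt = \int_0^\mu \<\P_{\gamma_v (t) \to p} (\grad f\big|_{\gamma_v (t)}), v\>_p dt ,
\end{align*}
using that $\dot\gamma_v (t)$ is the parallel transport of $v$ along $\gamma_v$ and that Levi-Civita parallel transport is an isometry. Multiplying by $v$ and taking expectation over $v \sim \S_p (\alpha)$ kills the $f (p) v$ contribution by central symmetry of the sphere. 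Splitting $\P_{\gamma_v (t) \to p} (\grad f\big|_{\gamma_v (t)}) = \grad f\big|_p + \Delta_{t, v}$ and invoking the second-moment identity on the first piece gives $\frac{\mu \alpha^2}{n} \grad f\big|_p$, so that multiplying through by $\frac{n}{\mu \alpha^2}$ isolates $\grad f\big|_p$ with remainder
\begin{align*}
    \frac{n}{\mu \alpha^2} \int_0^\mu \E_{v \sim \S_p (\alpha)} \[ \<\Delta_{t, v}, v\>_p v \] dt .
\end{align*}

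To bound this remainder, I would apply Cauchy-Schwarz inside the expectation to get $|\<\Delta_{t, v}, v\>_p| \cdot \|v\|_p \le \alpha^2 \|\Delta_{t, v}\|_p$, and then pull the geodesic $L_1$-smoothness back to $p$: since parallel transport along a single geodesic is an isometric inverse,
\begin{align*}
    \|\Delta_{t, v}\|_p = \left\| \grad f\big|_{\gamma_v (t)} - \P_{p \to \gamma_v (t)} (\grad f\big|_p) \right\|_{\gamma_v (t)} \le L_1 d_p (p, \gamma_v (t)) = L_1 t \alpha ,
\end{align*}
using that $\gamma_v (t)$ corresponds to $t v$ in normal coordinates at $p$, so $d_p (p, \gamma_v (t)) = \|t v\| = t \alpha$. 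Integrating the bound $L_1 t \alpha^3$ over $t \in [0, \mu]$ and multiplying by $\frac{n}{\mu \alpha^2}$ yields exactly $\frac{L_1 n \alpha \mu}{2}$.

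The main subtlety I anticipate is justifying the isometric inversion of parallel transport: the paper defines $\P_{p \to q}$ as an expectation over minimizing geodesics, whereas my argument transports along the specific geodesic $\gamma_v$. For $t$ small enough that $\gamma_v (t)$ lies outside the cut locus of $p$, the minimizer from $p$ to $\gamma_v (t)$ is unique and both notions of parallel transport coincide; since the cut locus has measure zero by the paper's standing convention, this identification is essentially free. Everything else reduces to a routine manipulation of spherical averages and a one-dimensional Taylor argument along geodesics.
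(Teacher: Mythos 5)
Your proposal is correct and follows essentially the same route as the paper's proof: the fundamental theorem of calculus along radial geodesics (the paper's Lemma \ref{lem:fundamental-calculus}), the spherical second-moment identity $\E_{v \sim \S_p(\alpha)}[\<a,v\>v] = \frac{\alpha^2}{n}a$ (Proposition \ref{prop:inner-prod}), and the bound $\|\P_{0,t}^{\gamma_v}(\grad f|_p) - \grad f|_{\gamma_v(t)}\| \le L_1 t \alpha$ from geodesic smoothness combined with Cauchy--Schwarz. The only cosmetic difference is that you integrate over $[0,\mu]$ and kill the $f(p)v$ term by antipodal symmetry, whereas the paper phrases the argument through the surrogate $\wt{f}_v^\mu$ and a symmetric integral over $[-\mu,\mu]$; the underlying estimates are identical.
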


This theorem provides a very simple and practical gradient estimation method: 
At point $p \in \M$, we can select small numbers $\mu$ and $\alpha$, and uniformly sample a vector $v \sim \S_p (\alpha)$. With these $\mu, \alpha, v$, the random vector  
\begin{align} 
    \wh{\grad} f \big|_p  (v) = \frac{ n }{ 2 \alpha^2 \mu } \( f \(\Exp_p (\mu v) \) - f \( \Exp_p (- \mu v) \) \) \label{eq:ball-estimator} 
\end{align} 
gives an estimator of $\grad f \big|_p$.

One can also independently sample $v_1, v_2, v_3, \cdots, v_m$ uniformly from $\S_p$. Using these samples, an estimator for $ \grad f\big|_p$ is
\begin{align} 
    \wh{\grad} f \big|_p (v_1,v_2, \cdots, v_m) 
    = 
    \frac{ n }{2 m  \alpha^2 \mu }  \sum_{i=1}^m \( f \( \Exp_p ( \mu v_i ) \) - f \( \Exp_p ( -\mu v_i ) \) \) v_i . \label{eq:ensembled-estimator}
\end{align} 
Note that Theorem \ref{thm:grad-error} also provides an in expectation bound for the estimator in (\ref{eq:ensembled-estimator}). 
Before proving Theorem \ref{thm:grad-error}, we first present Propositions \ref{prop:smooth} and \ref{prop:inner-prod}, and Lemma \ref{lem:fundamental-calculus}. 

\begin{proposition}
    \label{prop:smooth}
    Consider a continuously differentiable function $f$ defined over a complete Riemannian manifold $\M$. If function $ f $ is geodesically $L_1$-smooth over $\M$, then it holds that 
    \begin{enumerate} 
        \item $ f \( \Exp_p (u) \) \le f ( p ) + \< \grad f \big|_p, u \> + \frac{L_1}{2} \| u \|^2 $ for all $p \in \M$ and $u \in T_p \M$; 
        \item 
        for any $v \in T_p \M$, $ \left\| \P_{0,1}^\gamma \( \grad f \big|_p \) - \grad f \big|_{\Exp_p (v)} \right\| \le L_1 \| v \| $, where $\gamma (t) := \Exp_p (tv)$. 
    \end{enumerate} 
\end{proposition}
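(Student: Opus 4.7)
The plan is to prove part 2 first and then deduce part 1 via the fundamental theorem of calculus along a geodesic. Part 2 is a globalization of the $L_1$-smoothness hypothesis, which a priori is only asserted for $q$ in the normal-coordinate neighborhood $U_p$; once established on the whole manifold it feeds straight into part 1 through a one-dimensional integration.

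For part 2, if $\Exp_p(v) \in U_p$ the claim reduces to the definition: in normal coordinates $d_p(p, \Exp_p(v)) = \|v\|$, the curve $\gamma$ is the unique minimizing geodesic from $p$ to $\Exp_p(v)$, so $\P_{0,1}^\gamma = \P_{p \to \Exp_p(v)}$, and $L_1$-smoothness at $p$ gives the bound. For general $v$ I would partition $[0,1]$ into $N$ equal subintervals with $p_i := \gamma(i/N)$. Since $\gamma([0,1])$ is compact, the injectivity radius has a positive uniform lower bound along $\gamma$, so for $N$ large every $p_{i+1}$ lies in $U_{p_i}$ and the arc of $\gamma$ between them is the unique minimizing geodesic. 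Setting $X_i := \P_{0, i/N}^\gamma(\grad f|_p)$, a direct telescoping using that $X$ is parallel along $\gamma$ yields
\begin{align*}
X_N - \grad f|_{p_N} = -\sum_{i=0}^{N-1} \P_{p_{i+1} \to p_N}^\gamma \bigl[ \grad f|_{p_{i+1}} - \P_{p_i \to p_{i+1}}^\gamma(\grad f|_{p_i}) \bigr].
\end{align*}
The isometry of parallel transport and the triangle inequality then reduce the task to bounding each term $\|\grad f|_{p_{i+1}} - \P_{p_i \to p_{i+1}}^\gamma(\grad f|_{p_i})\|$ by $L_1 d_{p_i}(p_i, p_{i+1}) = L_1 \|v\|/N$ (the identity following because $p_{i+1} = \Exp_{p_i}(v_{p_i}/N)$ and parallel transport preserves length), summing to $L_1 \|v\|$.

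For part 1, with $\gamma(t) := \Exp_p(tu)$, the fundamental theorem of calculus along $\gamma$ gives
\begin{align*}
f(\Exp_p(u)) - f(p) = \int_0^1 \langle \grad f|_{\gamma(t)}, \dot{\gamma}(t) \rangle_{\gamma(t)} \, dt.
\end{align*}
Since $\dot{\gamma}(t) = \P_{0,t}^\gamma(u)$ and parallel transport is an isometry, the integrand rewrites as $\langle \P_{\gamma(t) \to p}^\gamma(\grad f|_{\gamma(t)}), u \rangle_p$. Splitting off the constant part $\langle \grad f|_p, u \rangle_p$ and applying Cauchy--Schwarz together with part 2 applied to the geodesic $s \mapsto \gamma(ts)$ (tangent vector $tu$) gives
\begin{align*}
f(\Exp_p(u)) \le f(p) + \langle \grad f|_p, u \rangle_p + \int_0^1 L_1 t \|u\|^2 \, dt = f(p) + \langle \grad f|_p, u \rangle_p + \frac{L_1}{2}\|u\|^2.
\end{align*}

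The main technical obstacle is the subdivision argument for part 2, specifically the need for the uniform positive lower bound on the injectivity radius along the compact image $\gamma([0,1])$ in order to guarantee both $p_{i+1} \in U_{p_i}$ and the exact normal-coordinate distance identity on each piece. Once that is in hand, all remaining steps are routine consequences of the isometry of parallel transport and the triangle inequality.
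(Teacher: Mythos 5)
Your proof is correct, and the underlying machinery is the same as the paper's: subdivide the geodesic, apply the local $L_1$-smoothness definition on each short arc (where the arc is the unique minimizing geodesic, so the parallel transport in the definition coincides with transport along $\gamma$), and telescope using the isometry of parallel transport. The difference is organizational, and it buys you something. The paper attacks item 1 directly at the discrete level: it telescopes both the directional derivatives $\< \grad f|_{q_i}, u_{q_i}\>$ and the function increments $f(q_i)-f(q_{i-1})$, accumulates a $\tfrac{L_1 N^2\tau^2\|u\|^2}{2}+O(N\tau^2)$ error, and then takes $\tau\to 0$ with $N\tau=1$; item 2 is only sketched as "similar." You instead prove item 2 in full first and then obtain item 1 by a single application of the fundamental theorem of calculus along the geodesic plus Cauchy--Schwarz, with the exact integral $\int_0^1 L_1 t\|u\|^2\,dt=\tfrac{L_1}{2}\|u\|^2$ and no limiting argument or discrete error bookkeeping. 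Your route also makes explicit the one point the paper glosses over in both items, namely that the compactness of $\gamma([0,1])$ gives a uniform positive lower bound on the injectivity radius so that consecutive subdivision points genuinely lie in each other's normal neighborhoods. The telescoping identity you write for $X_N-\grad f|_{p_N}$ and the per-term bound $L_1\|v\|/N$ are both correct.
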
 

\begin{proof}
    For any $p \in \M$ and $u \in T_p \M$, let $q_0 = p, q_1, q_2, \cdots, q_N = \Exp_p ( u )$ be a sequence of points such that $ q_{i+1} \in U_{q_i} $ and $ q_{i+1} = \Exp_{q_i} ( \tau u_{q_i} ) $ for all $\tau > 0$ that is sufficiently small. Since the Levi-Civita  connection preserves the Riemannian metric, we know $ N \tau = 1 $. 
    At any $q_i$, we have 
    \begin{align} 
        &\< \grad f \big|_{q_{i}} , u_{q_i} \> \nonumber \\
        =&
        \< \grad f \big|_{q_{i}} - \P_{ q_{i-1} \rightarrow q_i } \( \grad f  \big|_{q_{i-1}} \) , u_{q_i} \> + \< \grad f \big|_{q_{i-1}} , u_{q_{i-1}} \> \label{eq:use-L-smoothness} \\
        \le& 
        \< \grad f \big|_{q_{i-1}} , u_{q_{i-1}} \> + L_1 \tau \| u \|^2 \nonumber \\
        \le& \cdots \nonumber \\
        \le& \< \grad f \big|_{p} , u \> + i L_1 \tau \| u \|^2 , \nonumber
    \end{align} 
    where (\ref{eq:use-L-smoothness}) uses the definition of geodesic $L_1$-smoothness. 
    
    Also, it holds that
    \begin{align}
        &f \( q_{i} \) - f \( q_{i-1} \) \nonumber \\
        =& 
        \int_{ 0 }^{ \tau } \< \grad f \big|_{ \Exp_{q_{i-1}} ( t u_{q_{i-1}} ) } , u_{ \Exp_{q_{i-1}} ( t u_{q_{i-1}} ) } \> d t .  \qquad (\text{by Lemma \ref{lem:fundamental-calculus}}) \label{eq:for-add-subtract}
    \end{align} 
    By the definition of geodesic $L_1$-smoothness and the Cauchy-Schwarz inequality, it holds that 
    \begin{align}
        &\int_{ 0 }^{ \tau } \< \grad f \big|_{ \Exp_{q_{i-1}} ( t u_{q_{i-1}} ) } , u_{ \Exp_{q_{i-1}} ( t u_{q_{i-1}} ) } \> d t -  \tau \<   \grad f \big|_{ q_{i-1} } , u_{ q_{i-1} } \> \nonumber \\
        \le&
        \int_{ 0 }^{ \tau } \left\|  \grad f \big|_{ \Exp_{q_{i-1}} ( t u_{q_{i-1}} ) } \hspace*{-2pt} - \hspace*{-2pt} \P_{ q_{i-1} \rightarrow \Exp_{q_{i-1}} ( t u_{q_{i-1}}) } \( \grad f \big|_{ q_{i-1} } \) \right\| \| u \|   \,  d t \nonumber \\
        \le& 
        \frac{ L_1 \tau^2 \| u \|^2 }{2} \label{eq:for-sub-diff}. 
    \end{align} 
    Adding and subtracting $\tau \<   \grad f \big|_{ q_{i-1} } , u_{ q_{i-1} } \> $ to (\ref{eq:for-add-subtract}) and using (\ref{eq:for-sub-diff}) gives 
    \begin{align*} 
        f \( q_{i} \) - f \( q_{i-1} \) \le  \frac{ L_1 \tau^2 \| u \|^2 }{2} + \tau \< \grad f \big|_{ q_{i-1} } , u_{ q_{i-1} } \>. 
    \end{align*} 
    
    Thus we have 
    \begin{align}
        f ( \Exp_p (u) ) - f (p) \le&  \sum_{i=1}^{N} f (q_{i}) - f (q_{i-1}) \nonumber \\ 
        \le& \sum_{i=1}^N \( \tau \< \grad f \big|_{ q_{i-1} } , u_{ q_{i-1} } \> + \frac{ L_1 \tau^2 \| u \|^2 }{2} \) \nonumber \\
        \le& 
        N\tau \< \grad f\big|_p , u \> + 
        \sum_{i=1}^N L_1 (i-1) \tau^2 \| u \|^2 + \frac{ L_1 N \tau^2 \| u \|^2 }{ 2 } \nonumber \\ 
        =& 
        N\tau \< \grad f\big|_p , u \> + \frac{ L_1 N^2 \tau^2 \| u \|^2 }{2} + O ( N \tau^2 ),  \label{eq:take-tau-lim}
    \end{align} 
    where the $O \( \cdot \)$ notation omits constants that do not depend on $N$ or $\tau$. 
    Since $N \tau = 1$ and (\ref{eq:take-tau-lim}) is true for arbitrarily small $\tau$, letting $\tau \rightarrow 0$ in (\ref{eq:take-tau-lim}) finishes the proof. 
    
    The second item can be proved in a similar way. Specifically, we can also find a sequence of points $q_0, q_1, \cdots, q_N $ on the curve of $\gamma(t) = \Exp_p (t v)$ with $q_0 = p $ and $q_N = \Exp_p (v)$, and repeatedly use the definition of geodesic $L_1$-smoothness. 
    
\end{proof} 
    
\begin{proposition} 
    \label{prop:inner-prod}
    For any vector $u \in T_p \M $, we have 
    \begin{align*} 
         \E_{ v \sim \S_p (\alpha) } \[ \< u, v \> v \] = \frac{\alpha^2}{n} u , 
         \quad \text{ and } \quad 
         \E_{ v \sim \S_p (\alpha) } \[ \< u, v \>^2 \] =  \frac{\alpha^2}{n} \| u \|^2. 
    \end{align*} 
\end{proposition}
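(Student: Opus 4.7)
The plan is to reduce both identities to a single covariance computation, namely that $\E_{v \sim \S_p(\alpha)}[v v^\top] = \frac{\alpha^2}{n} I$ as an operator on $T_p \M$, and then derive both identities as immediate consequences.

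First, since $T_p \M$ is an $n$-dimensional inner product space with the metric $\langle \cdot, \cdot \rangle_p$, I would fix an orthonormal basis $\{e_1, \dots, e_n\}$ of $T_p\M$ and transfer the uniform probability measure on $\S_p(\alpha)$ to the uniform probability measure on the Euclidean sphere of radius $\alpha$ in $\R^n$. This identification is measure-preserving, so it suffices to prove the statement for the standard Euclidean sphere, where all of the needed symmetries (rotational invariance) are transparent.

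Next, I would consider the symmetric operator $M := \E_{v \sim \S_p(\alpha)}[v v^\top]$ on $T_p\M$. By rotational invariance of the uniform measure on the sphere, $M$ commutes with every orthogonal transformation of $T_p\M$; by Schur's lemma (or by a direct check that every eigenvector argument forces all eigenvalues to agree), $M$ must be a scalar multiple of the identity, say $M = c\, I$. The scalar $c$ is then pinned down by taking the trace: on the one hand $\mathrm{tr}(M) = \E[\|v\|^2] = \alpha^2$, and on the other hand $\mathrm{tr}(c\, I) = c n$, yielding $c = \alpha^2 / n$.

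With $M = \frac{\alpha^2}{n} I$ in hand, the first identity follows immediately since $\E[\langle u, v\rangle v] = \E[v v^\top] u = M u = \frac{\alpha^2}{n} u$, and the second follows from $\E[\langle u, v\rangle^2] = \langle u, \E[v v^\top] u \rangle = \langle u, M u \rangle = \frac{\alpha^2}{n} \|u\|^2$. I do not expect any substantive obstacle here; the only thing to be careful about is invoking rotational symmetry cleanly, which is why I would route the argument through the basis identification with the Euclidean sphere rather than trying to work intrinsically on $T_p\M$.
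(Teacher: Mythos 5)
Your proposal is correct and follows essentially the same route as the paper: both arguments reduce to showing that the second-moment matrix $\E_{v\sim \S_p(\alpha)}[vv^\top]$ equals $\frac{\alpha^2}{n}I$ in an orthonormal basis of $T_p\M$ and then read off both identities. The paper verifies this entrywise (off-diagonal entries vanish by a conditional symmetry argument, diagonal entries are equal by symmetry and sum to $\alpha^2$), whereas you package the same symmetry via Schur's lemma plus a trace computation; the difference is cosmetic.
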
 

\begin{proof} 
    It is sufficient to consider $u = e_i$, where $ \{ e_i \}_i$ is the local canonical basis for $T_p \M$ with respect to $g_p$.
    For any $i,j \in \{1,2,\cdots, n \}$, 
    \begin{align*} 
         \( \E_{ v \sim \S_p (\alpha) } \[ \< e_i, v \> v \] \)_j = \E_{ v \sim \S_p (\alpha) } \[ v_i v_j \].  
    \end{align*} 
    Since $ \E_{ v \sim \S_p (\alpha) } \[ v_j | v_i = x \] = 0$ for any $x$ when $ i \neq j$, we have $\E_{ v \sim \S_p (\alpha) } \[ v_i  v_j \] = 0$. When $ i = j $, by symmetry, we have $\E_{ v \sim \S_p (\alpha) } \[ v_j^2 \] = \frac{\alpha^2 }{n} $ for all $j = 1,2,\cdots, n$. 
    
    We have shown $ \E_{v \sim \S_p (\alpha) } \[ \< u, v \> v \] = \frac{\alpha^2}{n} u $, on which taking inner product with $u$ shows that $\E_{v \sim \S_p (\alpha) } \[ \< u, v \>^2 \] = \frac{\alpha^2}{n} \| u \|^2 $. 
\end{proof} 

\begin{lemma}
    \label{lem:fundamental-calculus}
    Pick $p \in \M$ and $v \in T_p \M$. Let $ \gamma (t) = \Exp_p (t v)$ $(t \in [a,b])$ be a geodesic in $\M$. 
    We then have 
    \begin{align} 
        \int_a^b \< \grad f \big|_{\Exp_p (sv )}, \dot{\gamma}\big|_{s} \> ds = f ( \gamma (b) ) - f ( \gamma (a) ). 
    \end{align} 
\end{lemma}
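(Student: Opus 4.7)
The plan is to reduce this statement to the classical fundamental theorem of calculus applied to the scalar function $g(s) := f(\gamma(s))$ on $[a,b]$. The geodesic $\gamma$ is smooth and $f$ is (at least) continuously differentiable, so $g \in C^1([a,b])$, and the classical FTC gives $\int_a^b g'(s)\,ds = g(b) - g(a) = f(\gamma(b)) - f(\gamma(a))$. The entire content of the lemma is therefore to verify that the integrand on the left-hand side equals $g'(s)$.

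For this identification I would invoke the chain rule together with the defining property of the Riemannian gradient. Namely, $g'(s) = df_{\gamma(s)}(\dot\gamma(s))$ by the chain rule on smooth manifolds, and by the very definition of $\grad f$ as the metric dual of $df$, we have $df_{q}(w) = \langle \grad f|_q, w\rangle_q$ for every $q \in \M$ and $w \in T_q \M$. Setting $q = \gamma(s) = \Exp_p(sv)$ and $w = \dot\gamma|_s$ gives exactly $g'(s) = \langle \grad f|_{\Exp_p(sv)}, \dot\gamma|_s\rangle$, matching the integrand in the statement.

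There is essentially no obstacle beyond this bookkeeping. The only thing worth pointing out is that the argument does not require $\gamma$ to be a geodesic at all; it works for any smooth curve in $\M$. The geodesic hypothesis is used nowhere in the proof and is kept only because the lemma is applied in the proof of Theorem \ref{thm:grad-error} along curves of the form $t \mapsto \Exp_p(tv)$. If one wanted to be extra careful and allow $f$ to be only $C^1$, one would note that the composition $f \circ \gamma$ of a $C^1$ function with a smooth curve is still $C^1$, so the classical FTC (for $C^1$ functions on a compact interval) applies without modification.
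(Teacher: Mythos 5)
Your proof is correct and takes essentially the same route as the paper's: apply the classical fundamental theorem of calculus to $f \circ \gamma$ and identify $\frac{d}{ds} (f \circ \gamma)(s)$ with $\langle \grad f \big|_{\gamma(s)}, \dot{\gamma}\big|_{s} \rangle$ via the chain rule and the definition of the gradient as the metric dual of $df$. Your side remark that the geodesic hypothesis is not actually used is also accurate.
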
 

\begin{proof}
    
    Consider the function $f \circ \gamma : [a, b] \rightarrow \mathbb{R}$. By fundamental theorem of calculus, we have 
    \begin{align*} 
        f ( \gamma (b) ) - f ( \gamma (a) ) 
        &= \int_a^b \frac{d f \circ \gamma}{ dt } {d}t . 
    \end{align*}
    Also, the directional derivative over Riemannian manifolds can be computed by
    \begin{align*}
        \frac{d f \circ \gamma}{ dt } \Bigg|_{s} = \lim_{\tau \rightarrow 0} \frac{ f \circ \gamma (s + \tau ) }{ \tau } = \< \grad f \big|_{\Exp_p (sv )}, \dot{\gamma}\big|_{s} \> . 
    \end{align*} 
    Combining the above results finishes the proof. 
\end{proof}

Lemma \ref{lem:fundamental-calculus} can be viewed as a corollary of the fundamental theorem of calculus, or a consequence of Stokes' theorem. If one views the curve $\gamma$ as a manifold with boundary ($\gamma (a)$ and $\gamma (b)$), and $\dot{\gamma}$ as a vector field parallel to $\gamma$, then applying Stokes' theorem proves Lemma \ref{lem:fundamental-calculus}.


We are now ready to prove Theorem \ref{thm:grad-error}, using our decomposition trick with the radial density function being $ \Theta_{\mu, \alpha} (x) = \frac{1}{2 \mu} \Ind_{ \[ x \in \[ -\mu, \mu \] \] } $. 

\begin{proof}[Proof of Theorem \ref{thm:grad-error}] 

From the definition of directional derivative, we have 
\begin{align} 
    \< \grad \wt{f}_{v}^\mu \big|_p , v \>_p &=  \frac{1}{2\mu} \lim_{ \tau \rightarrow 0 } \hspace{-2pt} \frac{ \int_{-\mu}^\mu \hspace{-2pt} \( \hspace{-2pt} f \hspace{-2pt} \( \Exp_{ \Exp_p (\tau v ) } \hspace{-2pt} \( t v_{\Exp_p (\tau v)} \) \hspace{-2pt} \) \hspace{-2pt} - \hspace{-2pt} f \( \Exp_{p } \hspace{-2pt} \( t v \) \) \hspace{-2pt} \) \hspace{-2pt} dt }{ \tau } \nonumber \\ 
    &=
    \frac{1}{2\mu} \int_{-\mu}^\mu \lim_{ \tau \rightarrow 0 } \frac{ f \( \Exp_p  \( \tau v + t v \) \) - f \( \Exp_{ p } \( t v \) \)  }{ \tau }  dt \nonumber \\ 
    &= 
    \frac{1}{2\mu} \int_{-\mu}^\mu \< \grad f \big|_{ \Exp_p \( t v \)  } , v_{\Exp_p (tv)} \>_{\Exp_p (tv)} dt  \label{eq:above}
\end{align} 

For simplicity, let $\gamma_v (t) = \Exp_p (t v)$ ($t \in [-\mu, \mu]$) for any $v \in T_p (\M)$. 
Since $f$ is geodesically $L_1$-smooth, we have, for any $p$ and $v \in T_p \M$,  
\begin{align}
    &\< \grad {f} \big|_p - \grad \wt{f}_{v}^\mu \big|_p , v \>_p \nonumber \\
    =&
    \hspace{-2pt} \< \hspace{-1pt} \grad {f} \big|_p , v \hspace{-2pt} \>_p \hspace{-2pt} - \hspace{-2pt} \frac{ \int_{-\mu}^\mu \< \grad f \big|_{ \Exp_p \( t v \)  } , v_{\Exp_p (tv)} \>_{\Exp_p (tv)} dt }{2 \mu}  \label{eq:use-above} \\
    =&
    \frac{1}{2\mu} \int_{-\mu}^\mu \( \< \P_{0,t}^{\gamma_v} \( \grad {f} \big|_p\) , v_{\Exp_p (tv)} \>_{\Exp_p (tv)} \) dt \nonumber \\
    &-  \frac{1}{ 2\mu}\int_{-\mu}^\mu \( \< \grad f \big|_{ \Exp_p \( t v \)  } , v_{\Exp_p (tv)} \>_{\Exp_p (tv)} \) dt \label{eq:preserve} \\
    =&
    \frac{1}{2\mu} \int_{-\mu}^\mu \( \< \P_{0,t}^{\gamma_v} \( \grad {f} \big|_p\) - \grad f \big|_{ \Exp_p \( t v \)  }, v_{\Exp_p (tv)} \>_{ \Exp_{p} (tv) } \) dt \nonumber \\ 
    \le&
    \frac{1}{2\mu} \int_{-\mu}^\mu L_1 |t| \alpha^2 dt = \frac{ L_1 \alpha^2 \mu}{2} \label{eq:lip}, 
\end{align} 
where (\ref{eq:use-above}) uses (\ref{eq:above}), (\ref{eq:preserve}) uses the fact that the parallel transport preserves the Riemannian metric, and (\ref{eq:lip}) uses Proposition \ref{prop:smooth} and the Cauchy-Schwarz inequality.

By Proposition \ref{prop:inner-prod}, we have 
\begin{align*}
    \E_{v \sim \S_p (\alpha) } \[ \< \grad f \big|_p, v \>_p v \] = \frac{\alpha^2}{n} \grad f \big|_p, 
\end{align*}  
and thus from (\ref{eq:lip}), we have 
\begin{align}
    &\left\|  \E_{v \sim \S_p (\alpha) } \[ \< \grad \wt{f}_{v}^\mu \big|_p, v \> v \] - \frac{ \alpha^2 }{ n } \grad f \big|_p \right\| \nonumber \\ 
    =& \left\|  \E_{v \sim \S_p (\alpha) } \[ \< \grad \wt{f}_{v}^\mu \big|_p, v \> v - \< \grad f \big|_p, v \> v \] \right\| \nonumber \\ 
    =& \left\|  \E_{v \sim \S_p (\alpha) } \[ \< \grad \wt{f}_{v}^\mu \big|_p - \grad f \big|_p, v \> v \] \right\| \nonumber \\ 
    \le& \frac{ L_1 \alpha^3 \mu}{2}.  \label{eq:est-error} 
\end{align}


Applying Lemma \ref{lem:fundamental-calculus} and the above results gives 
\begin{align}
    & \E_{v \sim \S_p (\alpha) } \[ \< \grad \wt{f}_{v}^\mu \big|_p , v \>_p v \] \nonumber \\ 
    =&  
    \E_{v \sim \S_p (\alpha) } \[ \frac{1}{2 \mu} \int_{- \mu}^\mu \< \grad f \big|_{ \Exp_p \( t v \) } , v_{\Exp_p ( t v ) } \>_{\Exp_p ( t v ) } dt \; v \] & (\text{by Eq. \ref{eq:above}}) \nonumber \\ 
    =& 
    \frac{1}{2\mu} \E_{v \sim \S_p (\alpha) } \[ \( f \( \Exp_p (\mu v) \) - f \( \Exp_p (\mu v) \) \) v \]  & (\text{by Lemma \ref{lem:fundamental-calculus}}) \nonumber \\ 
    =& 
    \frac{1}{\mu} \E_{ v \sim \S_p (\alpha) } \[ f \( \Exp_p (\mu v) \) v \] . \nonumber 
\end{align}

Combining the above result with (\ref{eq:est-error}) gives 
\begin{align}
    \left\| \grad f\big|_p - \frac{ n }{ \mu \alpha^2 } \E_{ v \sim \S_p (\alpha) } \[ f \( \Exp_p (\mu v) \) v \] \right\| \le \frac{L_1 n \alpha \mu}{2 }. 
\end{align} 

\end{proof} 





The variance of the estimator (\ref{eq:ball-estimator}) is shown in Theorem \ref{thm:variance}. 

\begin{theorem}
    \label{thm:variance}
    If $f$ is geodesically $L_1$-smooth, then it holds that 
    \begin{align*} 
        &\E_{v_1, v_2, \cdots, v_m \overset{i.i.d.}{\sim} \S_p (\alpha) } \[ \left\| \wh{\grad} f\big|_p \(v_1, v_2, \cdots, v_m\)  \big|_p \right\|^2 \] \\
        \le&
        \frac{n^2}{m} \( \frac{ L_1 \alpha \mu }{2} + \left\| \grad f \big|_p \right\| \)^2 \\
        &+ \frac{ m-1 }{m} \( \left\| \grad f \big|_p \right\|^2 +  \left\|  \grad f \big|_p  \right\|  L_1 n^{3/2} \alpha \mu  + \frac{ L_1^2 n^3 \alpha^2 \mu^2 }{ 4  } \).  
    \end{align*} 
\end{theorem}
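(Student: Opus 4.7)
The plan is to view $\wh{\grad} f\big|_p(v_1,\dots,v_m)$ as a sample average of i.i.d.\ random vectors in $T_p\M$ and then apply the standard bias-variance decomposition. Setting
$$ X_i \;:=\; \frac{n}{2\alpha^2\mu}\bigl(f(\Exp_p(\mu v_i)) - f(\Exp_p(-\mu v_i))\bigr)\,v_i ,$$
one has $\wh{\grad} f\big|_p = \frac{1}{m}\sum_{i=1}^m X_i$, and since the $X_i$ are i.i.d.\ a direct computation yields
$$ \E\left\| \tfrac{1}{m}\textstyle\sum_i X_i \right\|^2 \;=\; \tfrac{1}{m}\,\E\|X_1\|^2 \;+\; \tfrac{m-1}{m}\,\|\E X_1\|^2 .$$
So my goal reduces to (a) a deterministic upper bound on $\|X_1\|$ that explains the first summand in the theorem and (b) a norm bound on $\E X_1$ that explains the second summand.

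For (a), I would use Proposition \ref{prop:smooth}(1) applied to both $\mu v$ and $-\mu v$ in $T_p\M$. Adding the two inequalities gives, deterministically,
$$ \bigl|f(\Exp_p(\mu v)) - f(\Exp_p(-\mu v))\bigr| \;\le\; 2\mu|\langle \grad f|_p, v\rangle| + L_1\mu^2 \|v\|^2 \;\le\; 2\mu\alpha\|\grad f|_p\| + L_1\mu^2\alpha^2 ,$$
where the second step is Cauchy--Schwarz combined with $\|v\|=\alpha$. Multiplying by $\frac{n}{2\alpha^2\mu}\cdot\alpha$ gives $\|X_1\| \le n\bigl(\|\grad f|_p\| + \tfrac{L_1\alpha\mu}{2}\bigr)$, hence $\E\|X_1\|^2 \le n^2\bigl(\|\grad f|_p\| + \tfrac{L_1\alpha\mu}{2}\bigr)^2$, which is exactly the first summand.

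For (b), the key observation is a symmetry argument: because $v \sim \S_p(\alpha)$ is symmetric under $v \mapsto -v$, we have $\E[f(\Exp_p(-\mu v))v] = -\E[f(\Exp_p(\mu v))v]$, so
$$ \E X_1 \;=\; \frac{n}{\alpha^2\mu}\,\E_{v\sim \S_p(\alpha)}\bigl[ f(\Exp_p(\mu v))\,v \bigr] .$$
Theorem \ref{thm:grad-error} then gives $\|\E X_1 - \grad f|_p\| \le \frac{L_1 n\alpha\mu}{2}$, and the triangle inequality yields $\|\E X_1\| \le \|\grad f|_p\| + \frac{L_1 n\alpha\mu}{2}$. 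Squaring and (optionally) absorbing factors of $n$ into the cross term and quadratic term using $n \le n^{3/2}$ and $n^2 \le n^3$ produces the second summand in the stated form.

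Finally I would combine the two bounds via the variance-decomposition identity to obtain the advertised inequality. There is no serious obstacle here; the only step that requires a bit of care is the symmetrization yielding $\E X_1 = \frac{n}{\alpha^2\mu}\E[f(\Exp_p(\mu v))v]$, which is what bridges the finite-difference form of the estimator to Theorem \ref{thm:grad-error} (stated with a single forward evaluation). Everything else is the standard independent-samples variance identity together with the already established $L_1$-smoothness consequences in Proposition \ref{prop:smooth}.
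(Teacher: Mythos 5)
Your overall plan is the same bias–variance decomposition the paper uses, with the same first-moment and second-moment pieces. Two points are worth flagging.

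First, a gap in step (a). Proposition~\ref{prop:smooth}(1) is a \emph{one-sided} inequality (an upper bound on $f(\Exp_p(u))$). Applying it at $\mu v$ and $-\mu v$ and ``adding'' the two inequalities controls the sum $f(\Exp_p(\mu v))+f(\Exp_p(-\mu v))$, not the absolute difference $|f(\Exp_p(\mu v))-f(\Exp_p(-\mu v))|$. To bound the difference you also need the matching lower bound $f(\Exp_p(u))\ge f(p)+\<\grad f|_p,u\>-\tfrac{L_1}{2}\|u\|^2$, which does hold for geodesically $L_1$-smooth $f$ (e.g.\ by applying Proposition~\ref{prop:smooth}(1) to $-f$, which is also $L_1$-smooth, or by the integral representation of Lemma~\ref{lem:fundamental-calculus}); but this is not what your derivation says. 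The paper sidesteps this by writing the difference directly as $\int_{-\mu}^{\mu}\<\grad f|_{\Exp_p(tv)},v_{\Exp_p(tv)}\>\,dt$ via Lemma~\ref{lem:fundamental-calculus} and then using $L_1$-smoothness on the integrand, arriving at the same bound $L_1\alpha^2\mu^2+2\alpha\mu\|\grad f|_p\|$. Either fix is fine, but as stated your step does not follow.

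Second, your step (b) is actually a cleaner route than the paper's for the cross term. The paper passes through an entrywise $O(\tfrac{L_1n\alpha\mu}{2}\mathbf{1})$ bound and then applies Cauchy--Schwarz coordinatewise, which costs extra $\sqrt{n}$ and $n$ factors and produces the $n^{3/2}$ and $n^3$ in the stated theorem. You instead keep the norm bound $\|\E X_1-\grad f|_p\|\le\tfrac{L_1n\alpha\mu}{2}$ from Theorem~\ref{thm:grad-error} and apply the triangle inequality to $\|\E X_1\|$, which gives $\|\E X_1\|^2\le\|\grad f|_p\|^2+L_1n\alpha\mu\|\grad f|_p\|+\tfrac{L_1^2n^2\alpha^2\mu^2}{4}$ --- a genuinely tighter second summand ($n$ and $n^2$ instead of $n^{3/2}$ and $n^3$). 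The symmetrization $\E[f(\Exp_p(-\mu v))v]=-\E[f(\Exp_p(\mu v))v]$ you use to connect the two-sided estimator to Theorem~\ref{thm:grad-error} is correct. So modulo the fix in (a), your proof is valid and in fact yields a slightly sharper constant than the theorem statement.
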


\begin{proof}
   
    By Lemma \ref{lem:fundamental-calculus} and that $f $ is geodesically $L_1$-smooth, we have, for any $v \in \S_p (\alpha)$,

    \begin{align}
        & \left| f \( \Exp_p (\mu v ) \) - f \( \Exp_p (-\mu v) \) \right|  \nonumber \\ 
        =& 
        \left| \int_{-\mu}^\mu \< \grad f \big|_{\Exp_p (t v )} , v_{\Exp_p (tv) } \> dt - \int_{-\mu}^\mu \< \grad f \big|_p, v \> dt + 2 \mu \< \grad f \big|_p, v \> \right| \nonumber \\ 
        \le& 
        \int_{-\mu }^\mu  \left| \< \grad f \big|_{\Exp_p (t v )} - \P_{p \rightarrow \Exp_p (t v)} \( \grad f \big|_p \), \P_{p \rightarrow \Exp_p (t v)} (v) \>\right| dt \nonumber \\ &+ \left| 2 \mu \< \grad f \big|_p , v \> \right| \nonumber \\ 
        \le& 
        L_1 \int_{-\mu}^\mu  |t| \| v \|^2 dt + 2 \alpha \mu \left\| \grad f \big|_p  \right\| \nonumber \\ 
        \le& L_1 \alpha^2 \mu^2 + 2 \alpha \mu \left\| \grad f \big|_p  \right\| . 
        \label{eq:l1-smooth-b} 
    \end{align} 
    
    By (\ref{eq:l1-smooth-b}), it holds that, 
    \begin{align} 
        \E \[ \left\| \wh{\grad} f\big|_p \(v\) \right\|^2 \] 
        =& 
        \E \[ \frac{n^2}{4\mu^2 \alpha^4 } \( f \( \Exp_p (\mu v) \) - f \( \Exp_p (-\mu v) \)  \)^2 \| v \|^2 \] \nonumber \\ 
        \le& 
        {n^2} \( \frac{ L_1 \alpha \mu }{2} + \left\| \grad f \big|_p \right\| \)^2 
        \label{eq:smooth-b}  
    \end{align}
    
    By Theorem \ref{thm:grad-error}, we have 
    \begin{align*} 
        \E \[ \wh{\grad} f \big|_p (v_i) \]
        = 
        \grad f \big|_p + O \( \frac{ L_1  n \alpha \mu }{ 2 } \mathbf{1} \), 
    \end{align*} 
    where $ \mathbf{1}$ is the all-one vector and the big-O notation here means that, with respect to any coordinate system, each entry in $ \E \[ \wh{\grad} f \big|_p (v_i) \] $ and $\grad f \big|_p $  differs by at most  $ \frac{  L_1 n \alpha \mu }{ 2 } $. 
    
    Since $v_1, v_2, \cdots, v_m$ are mutually independent, we have, for any $i \neq j$, 
    \begin{align*} 
        &\E \< \wh{\grad} f \big|_p (v_i) , \wh{\grad} f \big|_p (v_j) \> \\
        =& 
        \< \grad f \big|_p + O \( \frac{ L_1 n \alpha \mu }{2 } \mathbf{1} \) , \grad f \big|_p + O \(  \frac{ L_1 n \alpha \mu }{2 } \mathbf{1} \)  \> \\ 
        \le& 
        \left\| \grad f \big|_p \right\|^2 +  \left\|  \grad f \big|_p  \right\|  L_1 n^{3/2} \alpha \mu  + \frac{ L_1^2 n^3 \alpha^2 \mu^2 }{ 4 }, 
    \end{align*} 
    where the last inequality uses the Cauchy-Schwartz inequality. 
    Thus by expanding out all terms we have 
    \begin{align*} 
        &\E \[ \left\| \wh{\grad} f\big|_p \(v_1, v_2, \cdots, v_m\) \right\|^2 \]  \\ 
        =& 
        \frac{1}{m^2} \E \[ \sum_{i=1}^m \left\| \wh{\grad} f \big|_p \( v_i \) \right\|^2 \] \\ 
        &+ \sum_{1 \le i,j \le m: i \neq j} \E  \< \grad f \big|_p + O \( \frac{ L_1 n \alpha \mu }{2 } \mathbf{1} \) , \grad f \big|_p + O \(  \frac{ L_1 n \alpha \mu }{2 } \mathbf{1} \) \> \\ 
        \le& 
        \frac{n^2}{m} \( \frac{ L_1 \alpha \mu }{2} + \left\| \grad f \big|_p \right\| \)^2 \\
        &+ \frac{ m-1 }{m} \( \left\| \grad f \big|_p \right\|^2 +  \left\|  \grad f \big|_p  \right\|  L_1 n^{3/2} \alpha \mu  + \frac{ L_1^2 n^3 \alpha^2 \mu^2 }{ 4  } \). 
    \end{align*}  
    
    
\end{proof}



    


\subsection{Previous Methods for Riemannian Gradient Estimation}


Previously, \cite{nesterov2017random,li2020stochastic} have introduced gradient estimators using the following sampler. The estimator is 
\begin{align}
    \wh{\grad } f  \big|_{p} (v_1, v_2, \cdots, v_m) = \frac{1}{2 m \mu } \sum_{i=1}^m \( f \( \Exp_p ( \mu v_i) \) - f \( \Exp_p (- \mu v_i) \) \) v_i, \label{eq:g-estimator} 
\end{align}
where $v_i \overset{i.i.d.}{\sim} \mathcal{N} \( 0, I \)$ are Gaussian vectors, and $\mu > 0$ is a parameter controlling the width of the estimator. We use this Gaussian sampler as a baseline for empirical evaluations. For this estimator,  \cite{li2020stochastic} studied its properties over manifolds embedded in Euclidean space. Their result is in Proposition \ref{prop:prev-garbage}. 

\begin{proposition}[\cite{li2020stochastic}]
    \label{prop:prev-garbage}
    Let $\M$ be a manifold embedded in a Euclidean. If $f: \M \rightarrow \R $ is geodesically $L_1$-smooth over $\M$, then it holds that 
    \begin{align} 
        \left\| \E_{v_i \overset{i.i.d.}{\sim} \mathcal{N} (0, I) } \[ \wh{\grad}f \big|_p (v_1,v_2,\cdots, v_m) \] - \grad f \big|_p \right\| \le \frac{L_1 \( d + 3 \)^{3/2} \mu  }{2}
    \end{align}
    where $ \wh{\grad}f \big|_p (v_1, v_2, \cdots, v_m) $ is the estimator in (\ref{eq:g-estimator}). 
\end{proposition}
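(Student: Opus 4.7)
The plan is to follow the Nesterov--Spokoiny Gaussian smoothing argument, adapted to the Riemannian setting by replacing the ambient vector addition $x + \mu v$ with the exponential map $\Exp_p(\mu v)$. The key identity is that, because $v \sim \mathcal{N}(0, I)$ is symmetric and $\E[v] = 0$, both the antisymmetric finite difference and the ``raw'' one-sided expression compute the same Gaussian-smoothed gradient in $T_p\M$. The bulk of the work is then a single quantitative estimate on how the Gaussian smoothing perturbs the true gradient, with the $L_1$-smoothness hypothesis supplying the pointwise control.

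First, by linearity of expectation and the i.i.d.\ assumption, the $m$-sample estimator has the same expectation as the one-sample estimator. Using the symmetry $v \overset{d}{=} -v$ for $v \sim \mathcal{N}(0, I)$ and the fact that $\E[f(p)\,v]=0$, I would rewrite
\begin{align*}
\E\left[\wh{\grad} f\big|_p(v_1,\ldots,v_m)\right] \;=\; \frac{1}{\mu}\,\E\!\left[\bigl(f(\Exp_p(\mu v)) - f(p)\bigr)\,v\right].
\end{align*}
Next, identifying $T_p\M$ with $\R^d$ via an orthonormal basis, the standard Stein-type identity $\E[\langle u,v\rangle v] = u$ gives $\E[\mu\,\langle \grad f|_p,v\rangle v] = \mu\,\grad f|_p$, so that
\begin{align*}
\E\left[\wh{\grad} f\big|_p(v_1,\ldots,v_m)\right] - \grad f\big|_p \;=\; \frac{1}{\mu}\,\E\!\left[\bigl(f(\Exp_p(\mu v)) - f(p) - \mu\langle \grad f|_p,v\rangle\bigr)\,v\right].
\end{align*}

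The core estimate is a two-sided version of Proposition~\ref{prop:smooth}~(1). Using Lemma~\ref{lem:fundamental-calculus} and the parallel-transport invariance of the metric,
\begin{align*}
\bigl|f(\Exp_p(\mu v)) - f(p) - \mu\langle \grad f|_p,v\rangle\bigr| \;\le\; \int_0^\mu \bigl\|\grad f\big|_{\Exp_p(tv)} - \P_{p\to\Exp_p(tv)}(\grad f|_p)\bigr\|\,\|v\|\,dt \;\le\; \tfrac{L_1\mu^2}{2}\|v\|^2,
\end{align*}
by the geodesic $L_1$-smoothness. Substituting this bound back and pulling it through the expectation gives
\begin{align*}
\left\|\E\!\left[\wh{\grad} f\big|_p(v_1,\ldots,v_m)\right] - \grad f\big|_p\right\| \;\le\; \frac{L_1\mu}{2}\,\E\bigl[\|v\|^3\bigr].
\end{align*}

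The final step invokes the standard Gaussian moment estimate $\E[\|v\|^3] \le (d+3)^{3/2}$ for $v \sim \mathcal{N}(0, I_d)$, which yields the claimed bound. The main obstacle, as I see it, is not any one of these steps individually but the legitimacy of the Stein-type identity and the integration-by-parts manipulation in the manifold setting: one must be careful that the smoothed quantity $\frac{1}{\mu}\E[f(\Exp_p(\mu v))v]$ is genuinely a tangent vector in $T_p\M$ (not an ambient-space object) and that the $L_1$-smoothness control holds along all geodesics $t \mapsto \Exp_p(tv)$ for every $v$ in the support of the Gaussian, which is all of $T_p\M$. Completeness of $\M$ (so that $\Exp_p$ is defined globally) together with the global geodesic $L_1$-smoothness hypothesis handles this, and the remaining calculation is the same as the Euclidean case.
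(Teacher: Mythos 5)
The paper does not actually prove Proposition~\ref{prop:prev-garbage}; it is stated as a citation to \cite{li2020stochastic}, whose argument is the Nesterov--Spokoiny Gaussian-smoothing scheme. Your reconstruction is correct and is essentially that argument: symmetrize to rewrite the expectation as $\frac{1}{\mu}\E[(f(\Exp_p(\mu v)) - f(p))v]$, use $\E[\langle u,v\rangle v]=u$ for $v\sim\mathcal{N}(0,I)$, and control the remainder $f(\Exp_p(\mu v)) - f(p) - \mu\langle \grad f|_p, v\rangle$ via Lemma~\ref{lem:fundamental-calculus} together with parallel transport and geodesic $L_1$-smoothness, which is exactly the two-sided companion of Proposition~\ref{prop:smooth}(1). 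The Gaussian moment bound $\E\|v\|^3\le (d+3)^{3/2}$ is the standard Nesterov--Spokoiny estimate and closes the proof. You are also right to flag that the only genuine extra care needed over the Euclidean case is that the $L_1$-smoothness estimate must hold along the whole geodesic $t\mapsto\Exp_p(tv)$ for every $v$ in the (unbounded) Gaussian support; this is exactly what Proposition~\ref{prop:smooth}(2) supplies, using completeness and global geodesic $L_1$-smoothness, so the bound does not depend on the injectivity radius.
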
  

To fairly compare our method (\ref{eq:ball-estimator}) and this previous method (\ref{eq:g-estimator}), one should set $\alpha = \sqrt{n}$ for our method, as pointed out in Remark \ref{remark:compare}. 

\begin{remark}
    \label{remark:compare}
    If one picks $\alpha = \sqrt{n}$ in Theorem \ref{thm:grad-error}, one can obtain the error bound $\frac{L_1 n^{3/2} \mu }{2}$ for our method, and the error bound for the previous method is $\frac{L_1 (n+3)^{3/2} \mu }{2}$ (Proposition \ref{prop:prev-garbage}). Why should we pick $\alpha = \sqrt{n}$ in Theorem \ref{thm:grad-error}? This is because when $\alpha = \sqrt{n}$, we have $\E_{v \sim \S (\alpha) }[ \| v \|^2 ] = \E_{v \sim \mathcal{N} (0, I) } [\| v \|^2 ] = n$. In words, when $\alpha = \sqrt{n}$, the random vectors used in our method (\ref{eq:ball-estimator}) and the random vector used in the previous method (\ref{eq:g-estimator}) have the same squared norm in expectation. This leads to a fair comparison for the bias, while holding the second moment of the random vector exactly the same. 
\end{remark}

\section{Empirical Studies} 



In this section, we empirically study the spherical estimator in (\ref{eq:ensembled-estimator}), in comparison with the Gaussian estimator in (\ref{eq:g-estimator}). 
The two methods (\ref{eq:ensembled-estimator}) and (\ref{eq:g-estimator}) are compared over three manifolds: 1. the Euclidean space $\R^n$, 2. the unit sphere $\S^{n}$, and 3. the surface specified by the equation $h (x) = \frac{1}{2} \sum_{i=1}^{n/2} x_i^2 - \frac{1}{2} \sum_{i = n/2 + 1}^n x_i^2$. The evaluation is carried out at three different point, one for each manifold. The three manifolds, and corresponding points for evaluation, are listed in Table \ref{tab:exp}. 
Two test functions are used: a. the linear function $ f (x) = \sum_{i=1}^n x_i $, and b. the function $f (x) = \sum_{ i=1 }^n  \sin \( x_i - 1 \) $. 
The two test functions are listed in Table \ref{tab:func}.

\begin{table}[H]
    \centering
    \begin{tabular}{c|c|c|c}
        Label & Manifold & Point $p$  & Exponential map $\Exp_p (v)$ \\  \hline \hline 
        1 & $\R^n $ & $ 0$ & $ v$ \\  
        2 & $\S^{n} $ & $(1,0,\cdots, 0)$ &  $ p \cos (\| v \| ) + \frac{ v }{ \| v \| } \sin \( \| v \| \)$ \\ 
        3 & \makecell{ $(x, h(x)) $, \\ $x \in \R^n$} & $ 0 $ &  $\( v,  \( \frac{\sqrt{ 1 + \sum_{i=1}^n v_i^4  }  }{2} + \frac{ \sinh^{-1} \( \sqrt{ \sum_{i=1}^n v_i^4 } \) }{ \sqrt{ \sum_{i=1}^n v_i^4 } } \) h (v) \) $ \\ 
    \end{tabular}
    \caption{Manifolds used for experiments. All manifolds are of dimension $n$. The coordinate system of the ambient space is used in all settings. } 
    \label{tab:exp}
\end{table} 

\begin{table}[H]
    \centering
    \begin{tabular}{c|c}
        Label & Function \\  \hline \hline 
        a & $f (x) = \sum_{i=1}^n x_i $ \\  
        b & $f (x) = \sum_{i=1}^n \sin (x_i - 1)$ \\ 
    \end{tabular}
    \caption{Functions used for experiments. The coordinate system of the ambient Euclidean space is used for the computations. } 
    \label{tab:func}
\end{table} 

The three manifolds in Table \ref{tab:exp} and the two functions in Table \ref{tab:func} together generate 6 settings. We use 1a, 1b, 2a, 2b, 3a, 3c to label these 6 setting, where 1a refers to the setting over manifold 1 using function a, and so on. The experimental results are summarized in Figures \ref{fig:1a}-\ref{fig:error-3b}. In all figures, ``G'' on the $x$-axis stands for the previous method using Gaussian estimator (\ref{eq:g-estimator}), and ``S'' on the $x$-axis stands for our method using the spherical estimator (\ref{eq:ensembled-estimator}). 
To fairly compare (\ref{eq:g-estimator}) and (\ref{eq:ensembled-estimator}), we use $\alpha = \sqrt{n} $ for all spherical estimators so that for both the spherical estimator and the Gaussian estimator, one has $\E \[ \| v \|^2 \] = n$. 
The figure captions specify the settings used. For example, setting 1a is for function $a$ in Table \ref{tab:func} over manifold 1 in Table \ref{tab:exp}. Below each subfigure, the values of $n$ and $\mu$ are the dimension of the manifold and the parameter used in the estimators (both G and S). 


Since $\E \[ \| v \|^2 \]$ are set to the same value for both G and S, we use 
\begin{align}
    \left\|  \wh{\grad} f \big|_p (v_1, v_2, \cdots, v_m) - \grad f \big|_p \right\| \label{eq:bias-criteria}
\end{align}
to measure the error (bias) of the estimator, and 
\begin{align}
    \frac{1}{m} \sum_{i=1}^m \left\| \wh{\grad} f\big|_p (v_i) \right\|^4 \label{eq:robustness-criteria}
\end{align}
to measure robustness of the estimator. For both the estimation error and the robustness, smaller values mean better performance. For both ``G'' and ``S'' in all figures, each violin plot summarizes $100$ values of (\ref{eq:bias-criteria}) or (\ref{eq:robustness-criteria}), where we use $m = 100$ for all of them. 
Take G and S in Figure \ref{fig:1a} as an example. We compute (\ref{eq:robustness-criteria}) for 100 times for G and $100$ estimations for S, all with $m = 100$. Then we use violin plots to summarize these 100 values of (\ref{eq:robustness-criteria}) for G and the 100 values of (\ref{eq:robustness-criteria}) for S. 
       

Figures \ref{fig:1a}-\ref{fig:3b} plot the robustness of the two estimators $G$ and $S$, and Figures \ref{fig:error-1a}-\ref{fig:error-3b} show the corresponding errors. As shown in the figures, in all of the settings, the spherical estimator S is much more robust than the Gaussian estimator G, while achieving the same level of estimation error. 

\newcommand\SCALE{0.325} 


\begin{figure}[H]
    \centering
    \includegraphics[scale =  \SCALE]{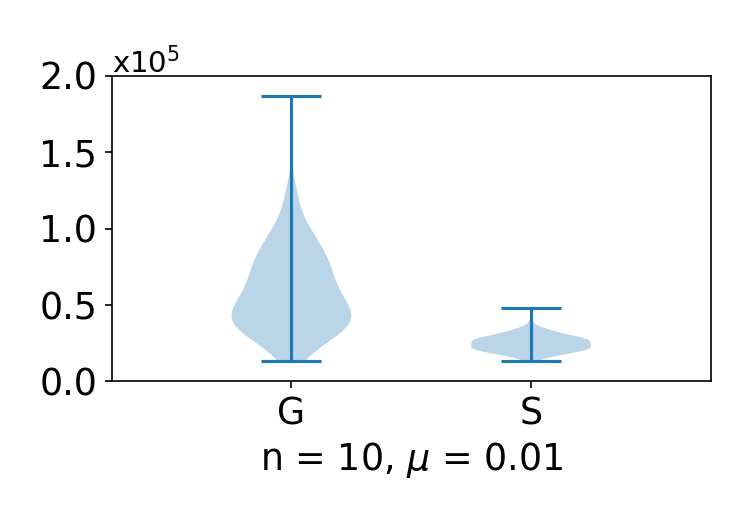}  
    \includegraphics[scale =  \SCALE]{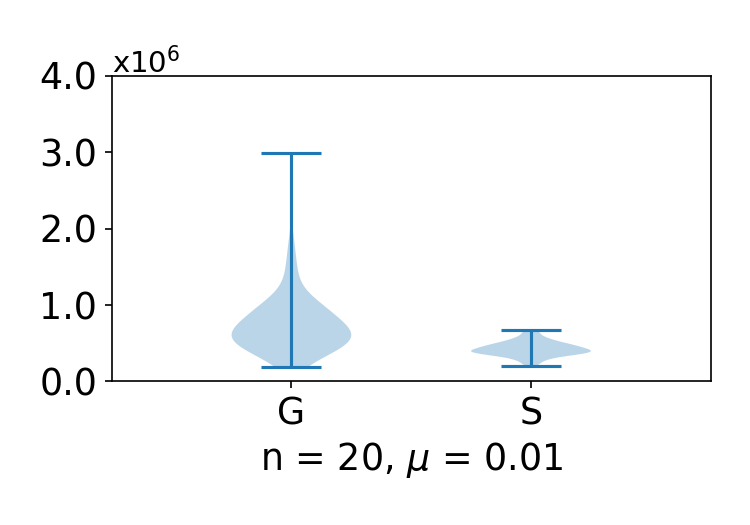}  
    \includegraphics[scale =  \SCALE]{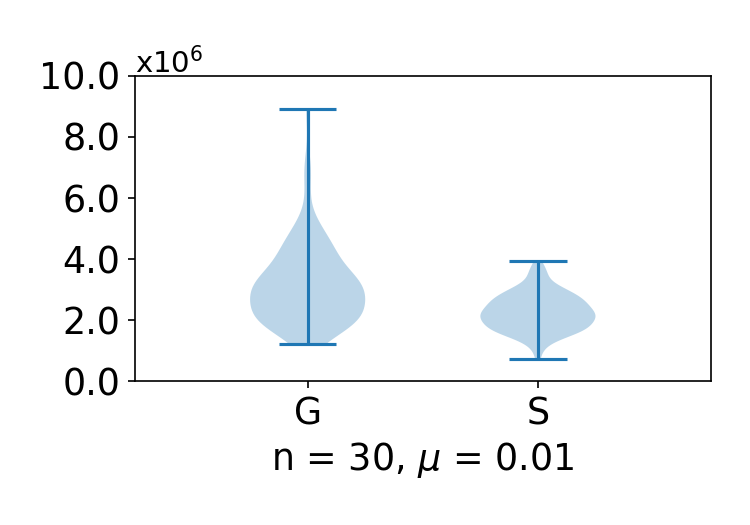} \\
    \includegraphics[scale =  \SCALE]{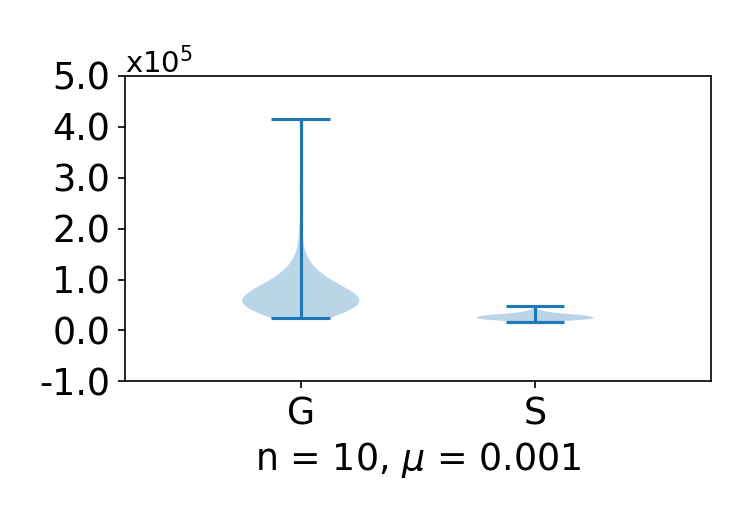}  
    \includegraphics[scale =  \SCALE]{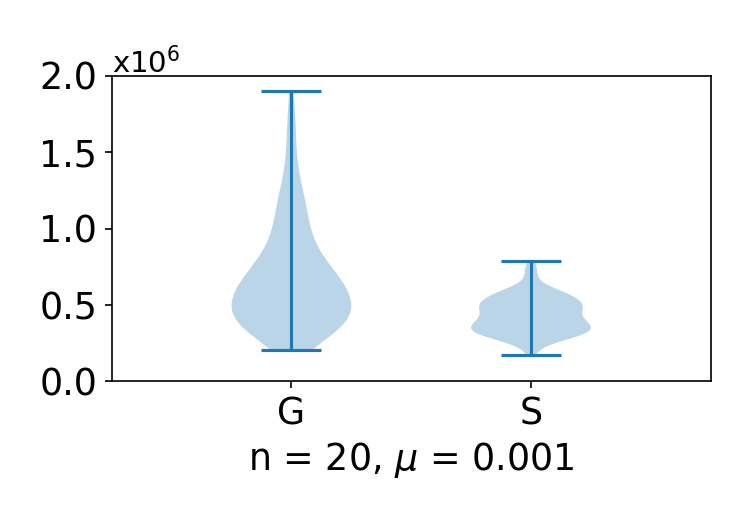}  
    \includegraphics[scale =  \SCALE]{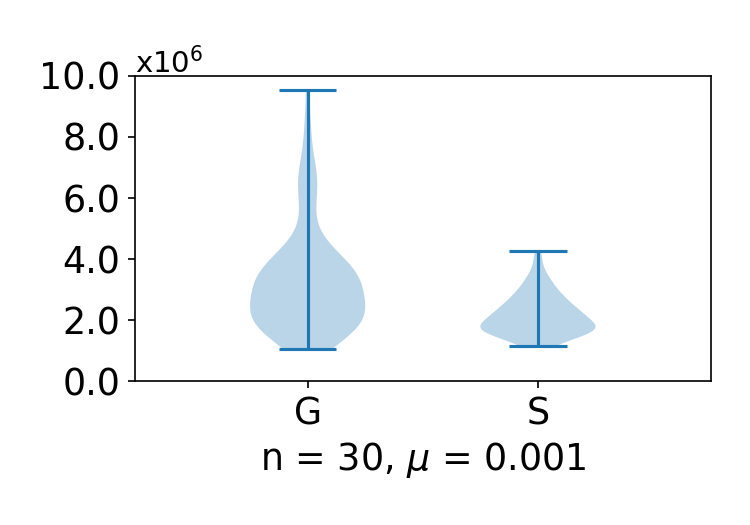} \vspace*{-0.3cm}
    \caption{Plot of $ \frac{1}{m} \sum_{i=1}^m \left\| \wh{\grad} f\big|_p (v_i) \right\|^4 $ for Setting 1a.
    \label{fig:1a} }
\end{figure}

\begin{figure}[H]
    \centering
    \includegraphics[scale =  \SCALE]{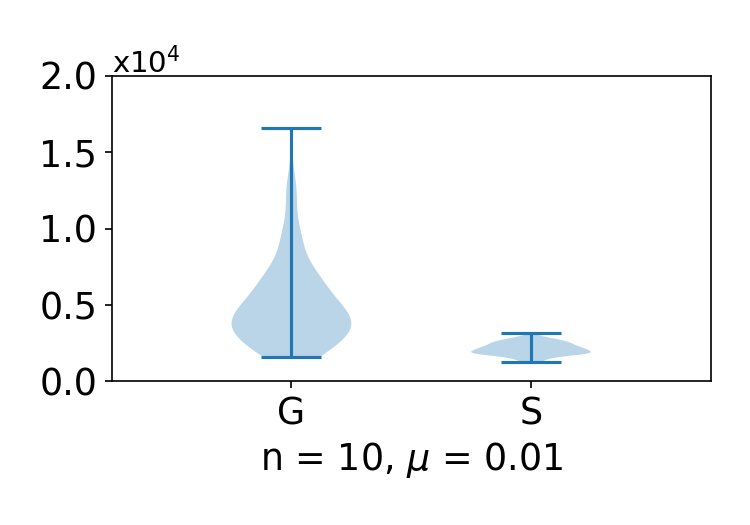}  
    \includegraphics[scale =  \SCALE]{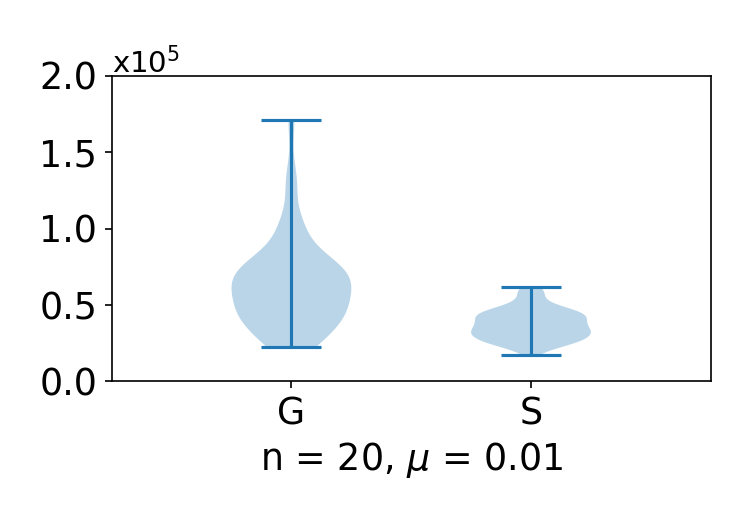}  
    \includegraphics[scale =  \SCALE]{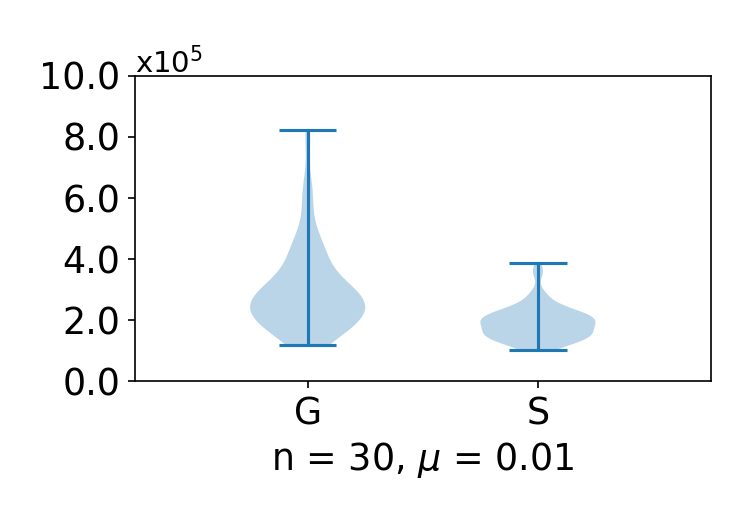} \\
    \includegraphics[scale =  \SCALE]{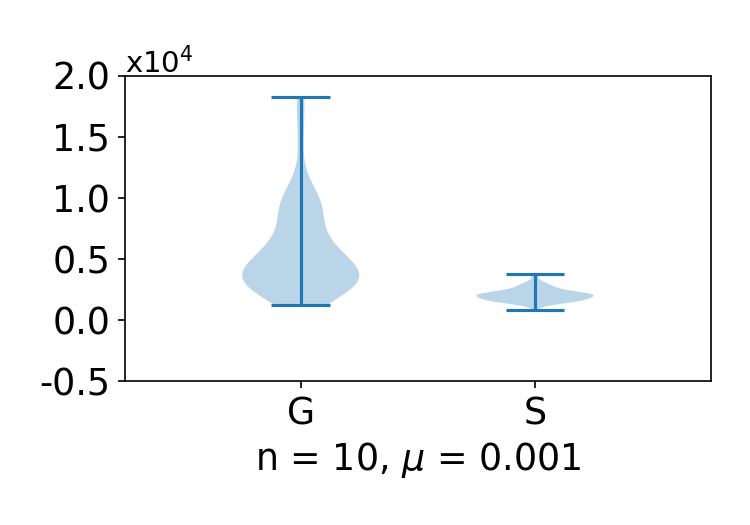}  
    \includegraphics[scale =  \SCALE]{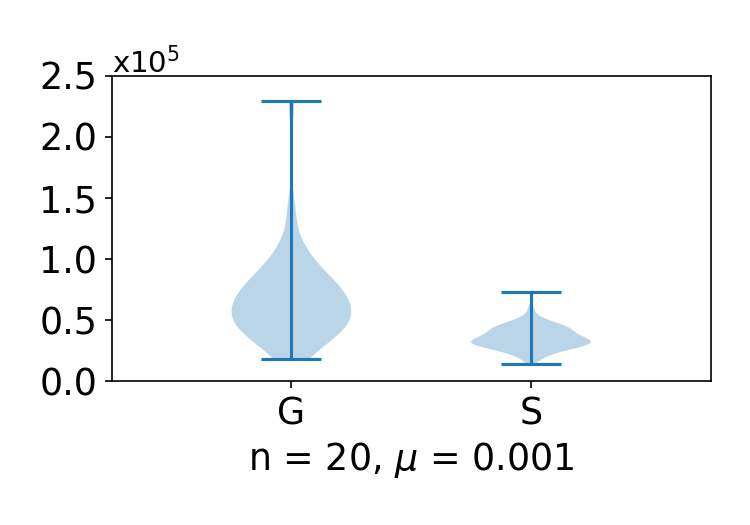}  
    \includegraphics[scale =  \SCALE]{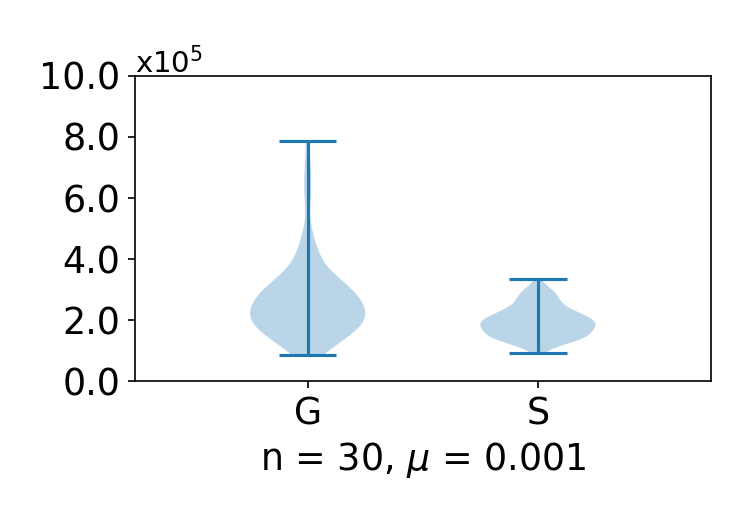} 
    \vspace*{-0.3cm}
    \caption{Plot of $ \frac{1}{m} \sum_{i=1}^m \left\| \wh{\grad} f\big|_p (v_i) \right\|^4 $ for Setting 1b.
    \label{fig:1b}}
\end{figure}

\begin{figure}[H]
    \centering
    \includegraphics[scale =  \SCALE]{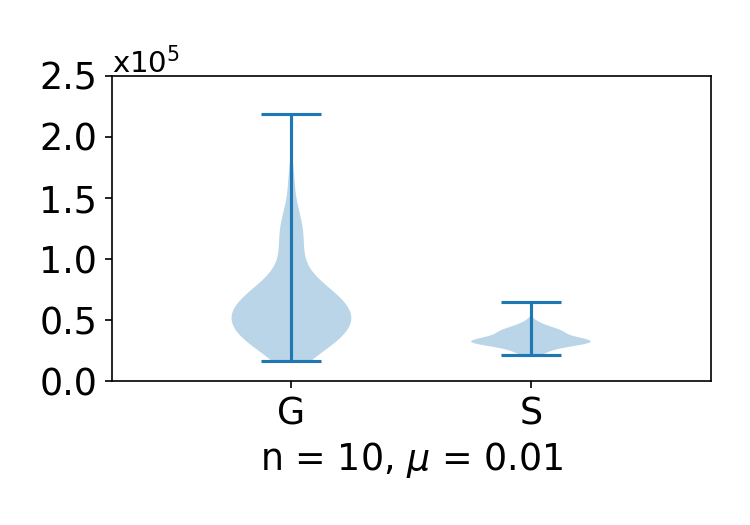}  
    \includegraphics[scale =  \SCALE]{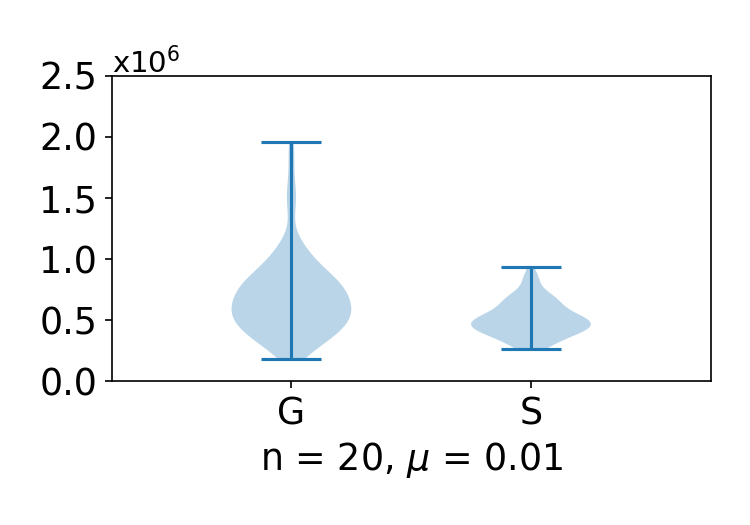}  
    \includegraphics[scale =  \SCALE]{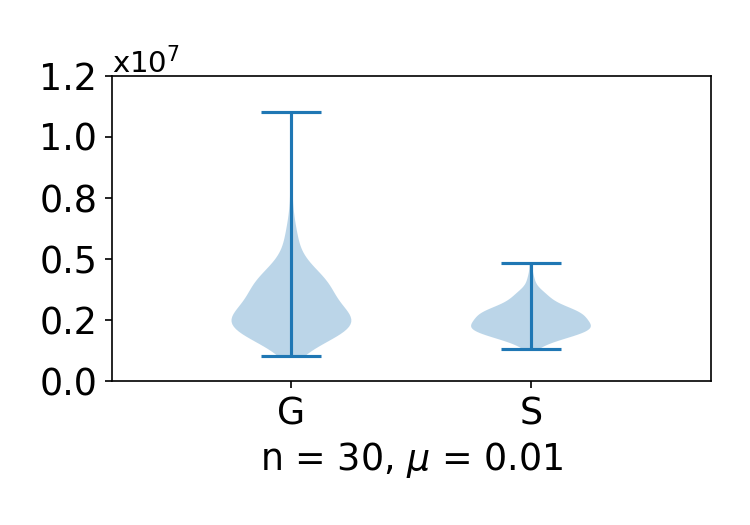} \\
    \includegraphics[scale =  \SCALE]{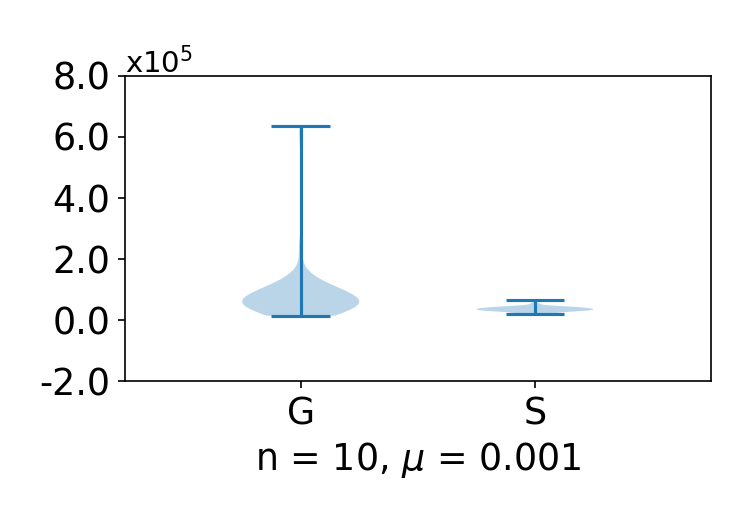}  
    \includegraphics[scale =  \SCALE]{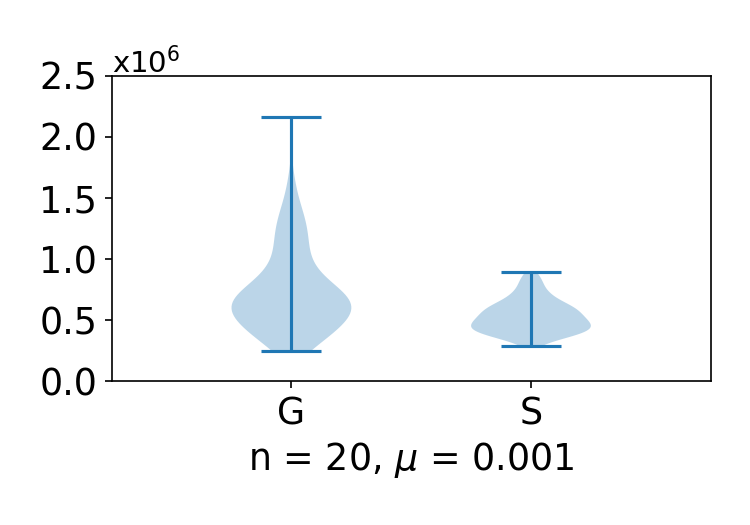}  
    \includegraphics[scale =  \SCALE]{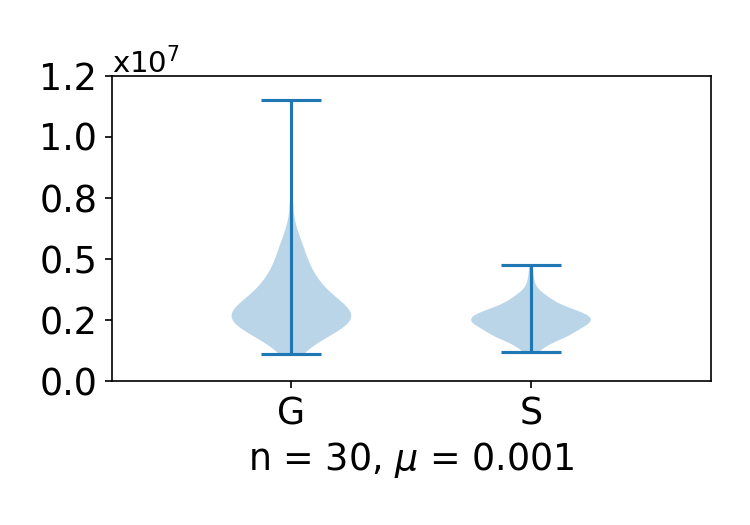} 
    \vspace*{-0.3cm}
    \caption{Plot of $ \frac{1}{m} \sum_{i=1}^m \left\| \wh{\grad} f\big|_p (v_i) \right\|^4 $ for Setting 2a.
    \label{fig:2a}}
\end{figure}

\begin{figure}[H]
    \centering
    \includegraphics[scale =  \SCALE]{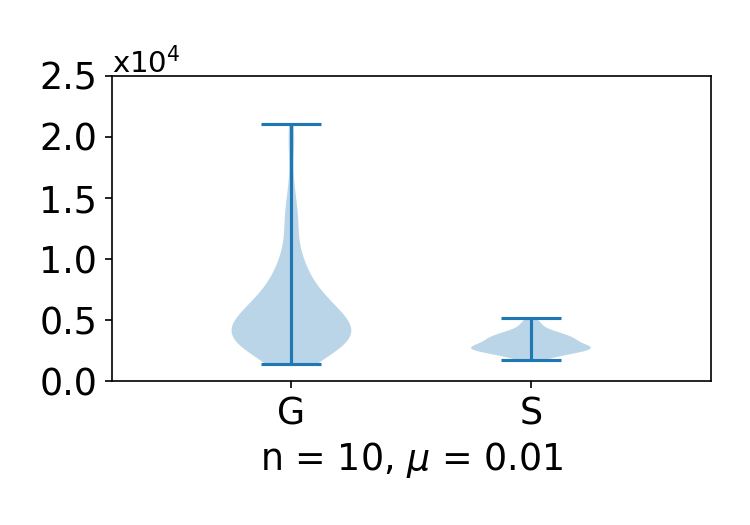}  
    \includegraphics[scale =  \SCALE]{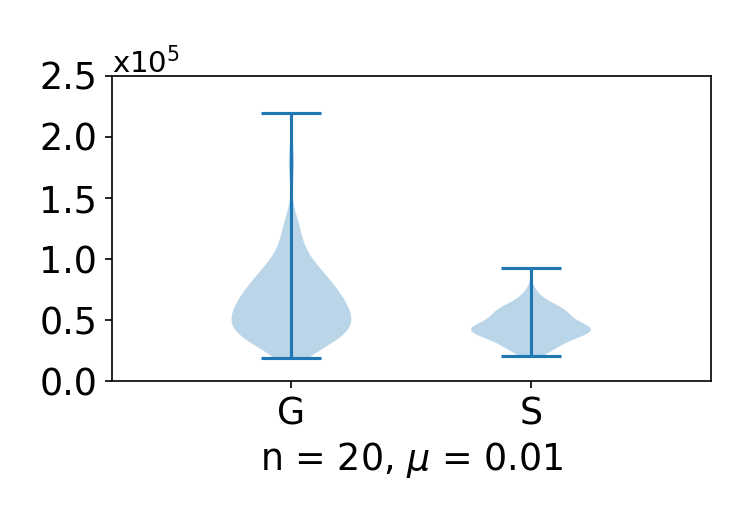}  
    \includegraphics[scale =  \SCALE]{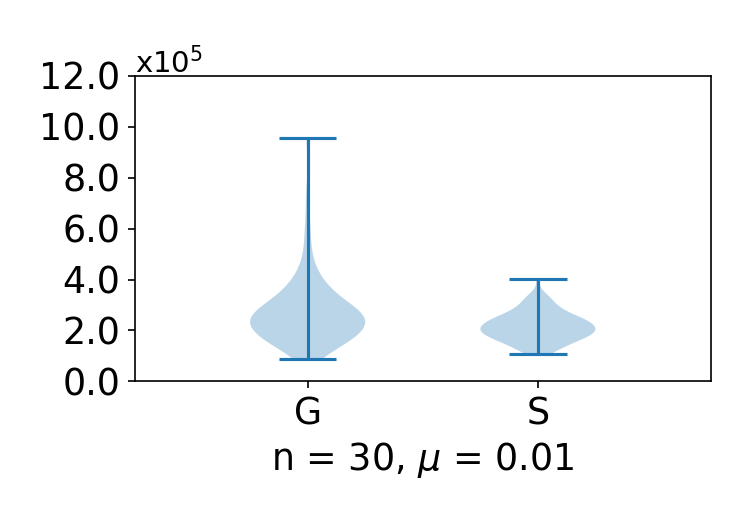} \\
    \includegraphics[scale =  \SCALE]{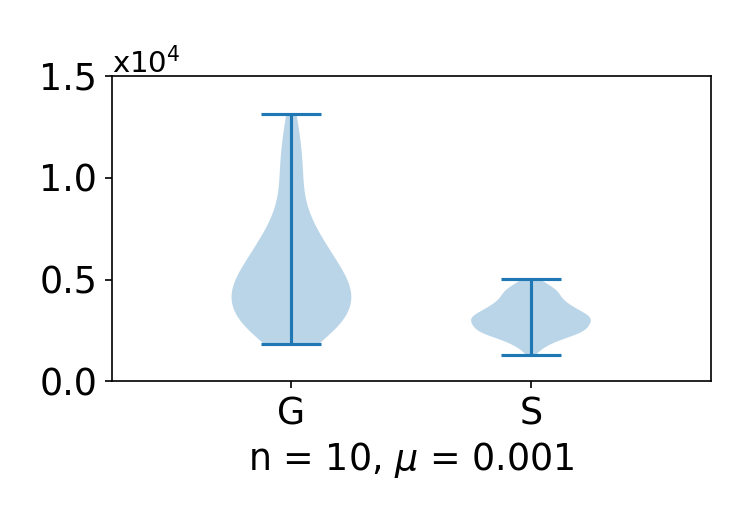}  
    \includegraphics[scale =  \SCALE]{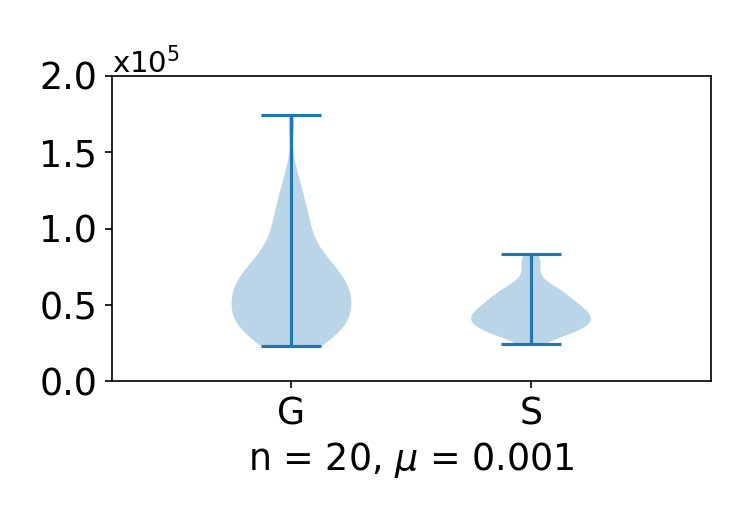}  
    \includegraphics[scale =  \SCALE]{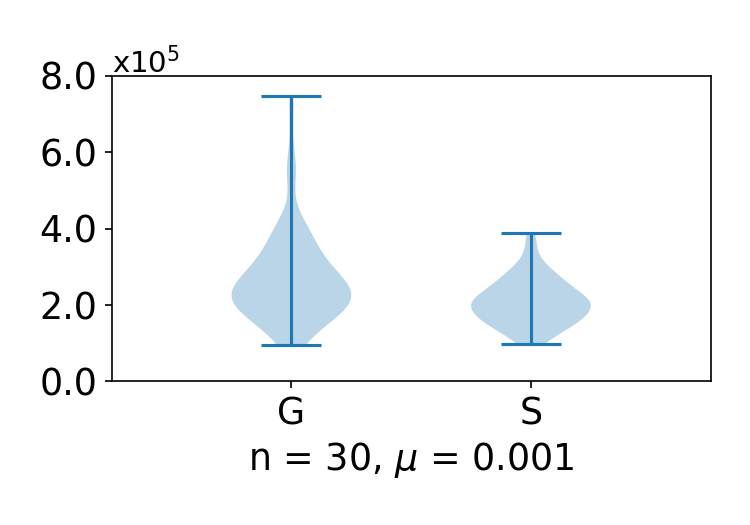} 
    \vspace*{-0.3cm} 
    \caption{Plot of $ \frac{1}{m} \sum_{i=1}^m \left\| \wh{\grad} f\big|_p (v_i) \right\|^4 $ for Setting 2b.
    \label{fig:2b} }
\end{figure}

\begin{figure}[H]
    \centering
    \includegraphics[scale =  \SCALE]{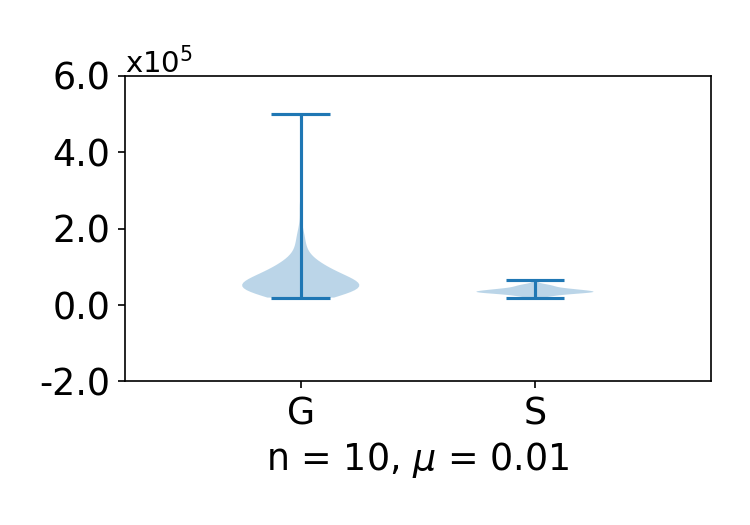}  
    \includegraphics[scale =  \SCALE]{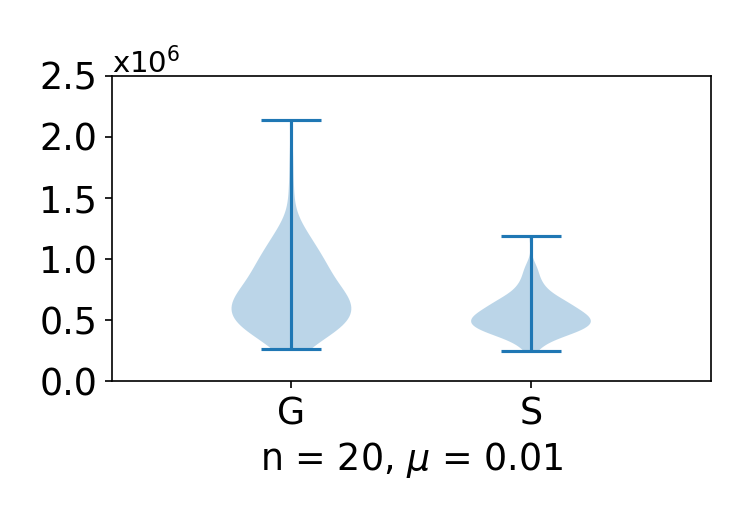}  
    \includegraphics[scale =  \SCALE]{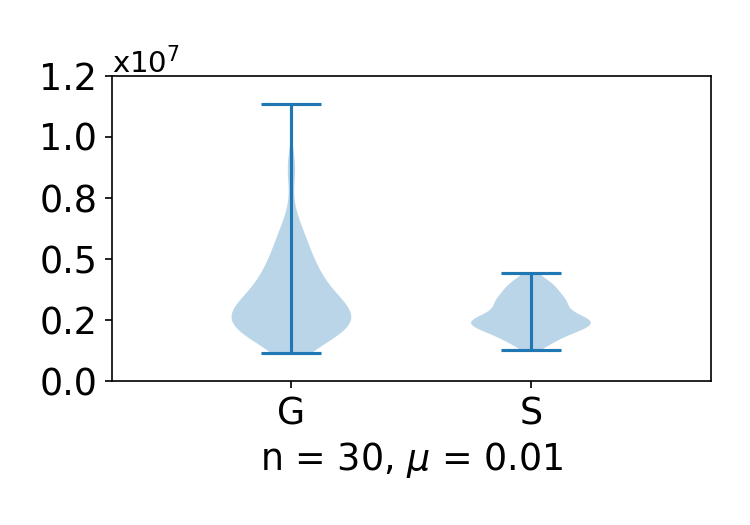} \\
    \includegraphics[scale =  \SCALE]{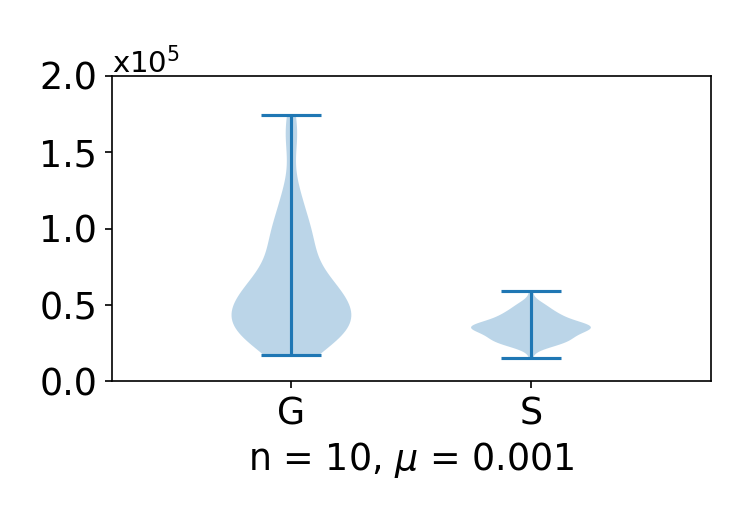}  
    \includegraphics[scale =  \SCALE]{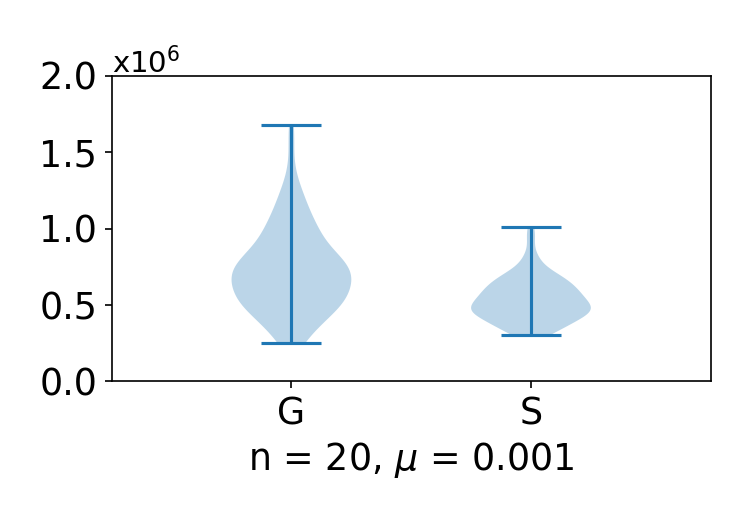}  
    \includegraphics[scale =  \SCALE]{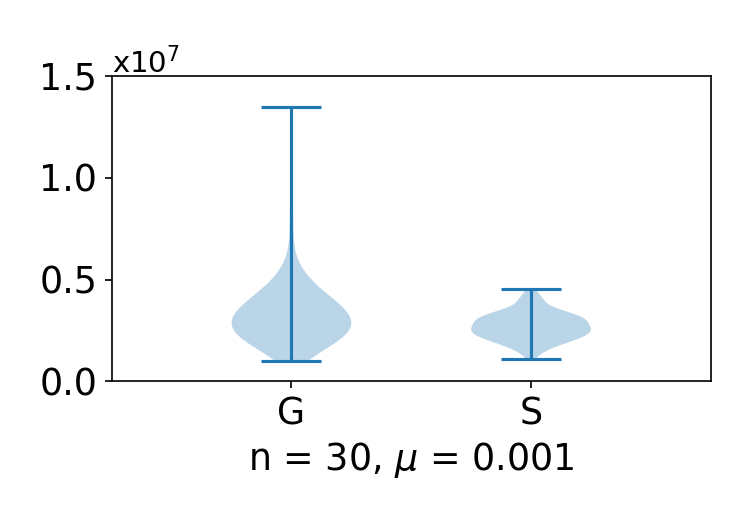}  
    \vspace*{-0.3cm} 
    \caption{Plot of $ \frac{1}{m} \sum_{i=1}^m \left\| \wh{\grad} f\big|_p (v_i) \right\|^4 $ for Setting 3a.
    \label{fig:3a} }
\end{figure}

\begin{figure}[H]
    \centering
    \includegraphics[scale =  \SCALE]{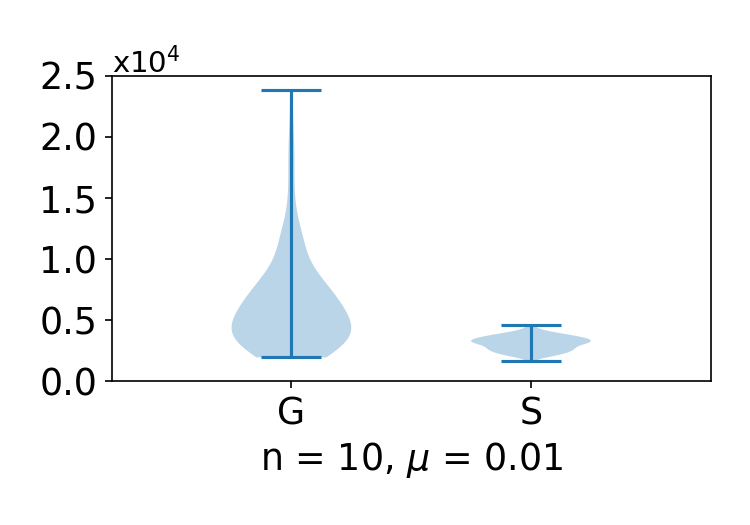} 
    \includegraphics[scale =  \SCALE]{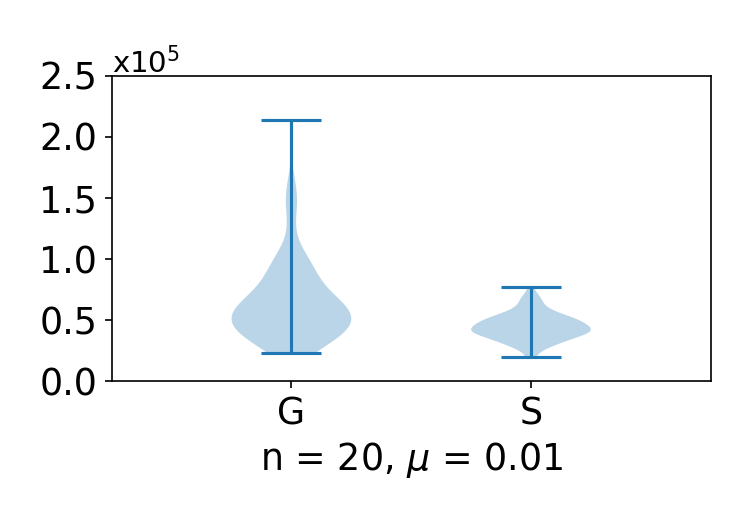} 
    \includegraphics[scale =  \SCALE]{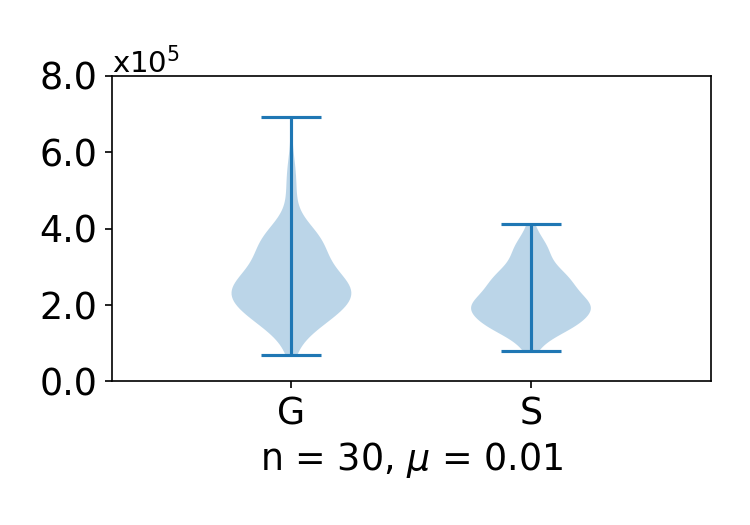} \\
    \includegraphics[scale =  \SCALE]{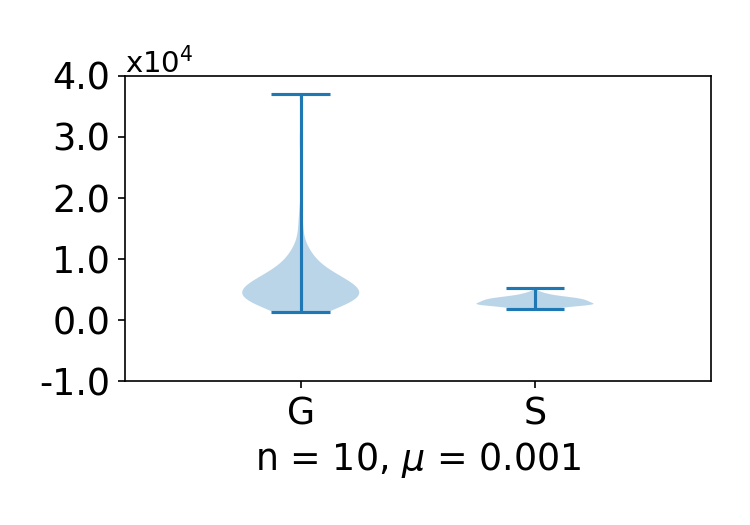} 
    \includegraphics[scale =  \SCALE]{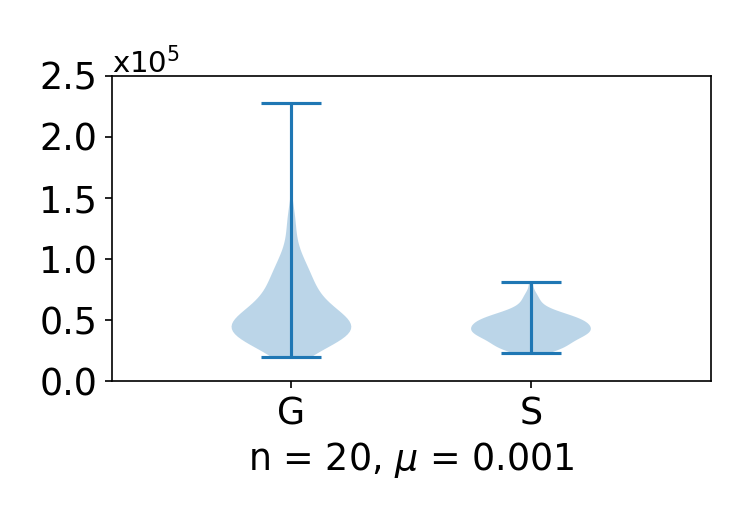} 
    \includegraphics[scale =  \SCALE]{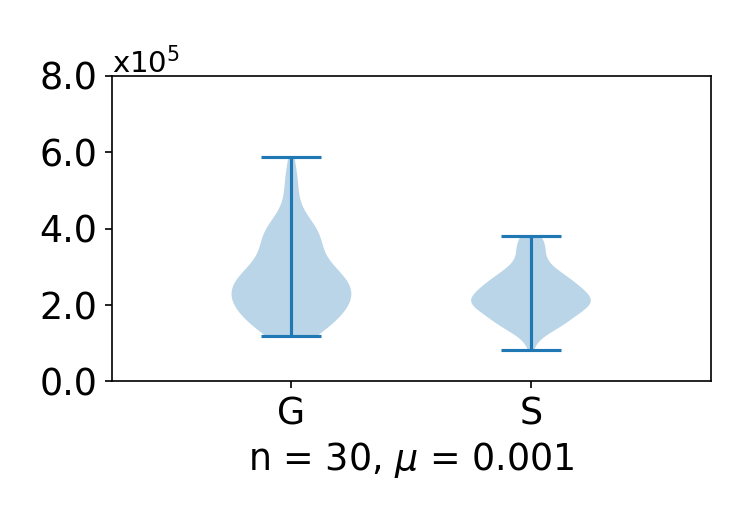} 
    \vspace*{-0.3cm} 
    \caption{Plot of $ \frac{1}{m} \sum_{i=1}^m \left\| \wh{\grad} f\big|_p (v_i) \right\|^4 $ for Setting 3b. 
    \label{fig:3b} }
\end{figure}


\begin{figure}[H]
    \centering
    \includegraphics[scale =  \SCALE]{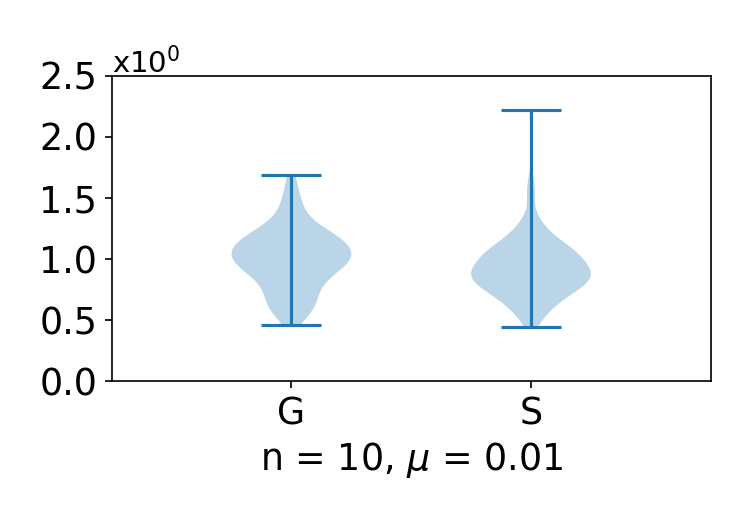}  
    \includegraphics[scale =  \SCALE]{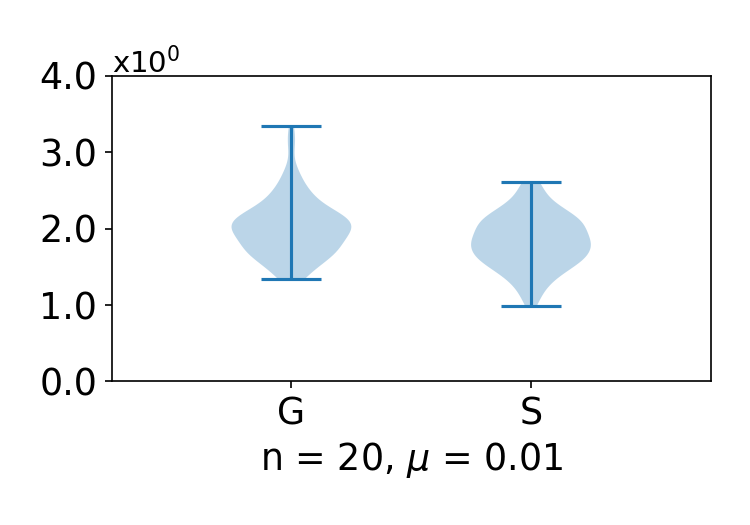}  
    \includegraphics[scale =  \SCALE]{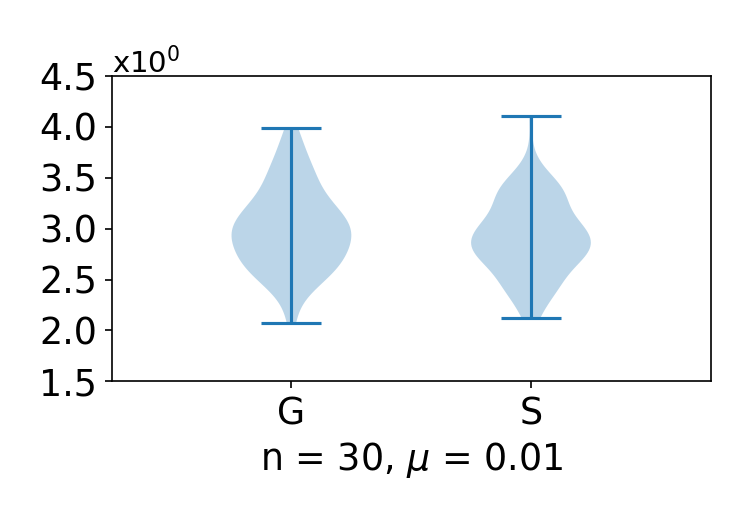} \\
    \includegraphics[scale =  \SCALE]{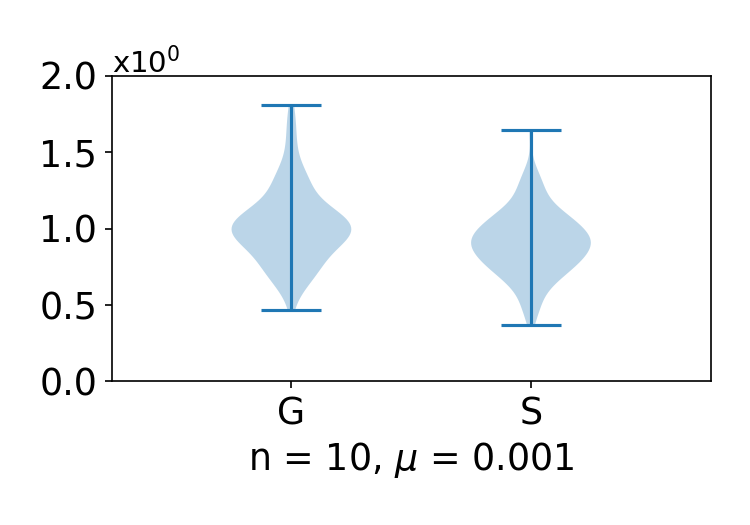}  
    \includegraphics[scale =  \SCALE]{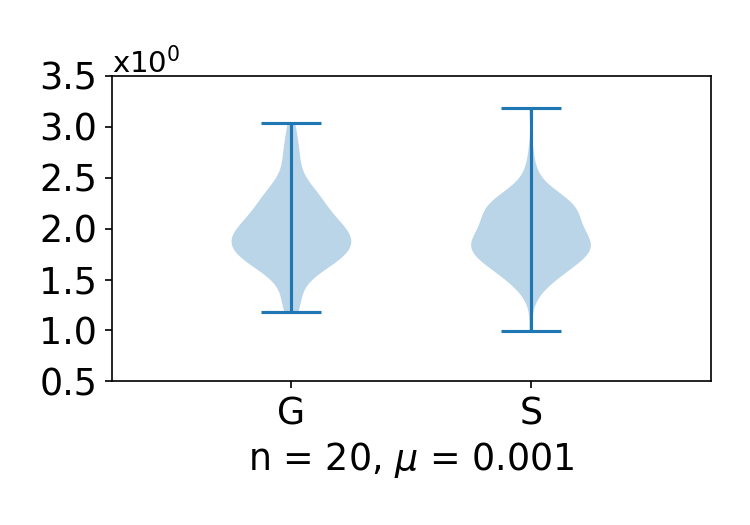}  
    \includegraphics[scale =  \SCALE]{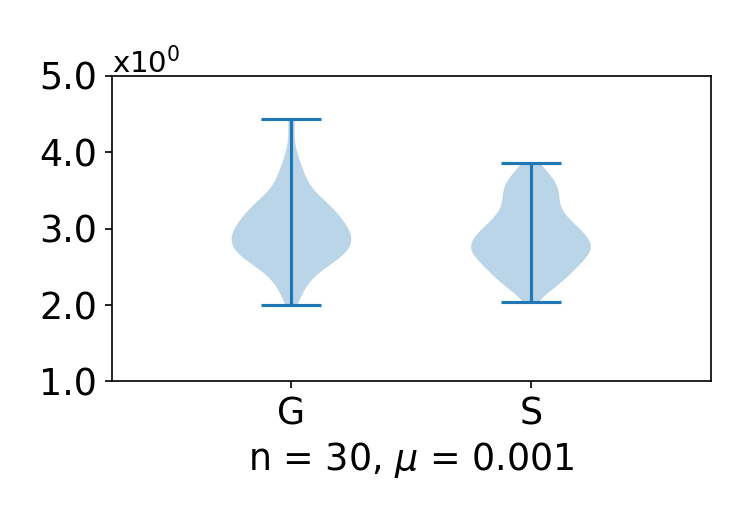} \vspace*{-0.3cm}
    \caption{Plot of $ \left\| \wh{\grad} f\big|_p (v_1, v_2, \cdots, v_m) - \grad f \big|_p \right\| $ for Setting 1a.
    \label{fig:error-1a} }
\end{figure}

\begin{figure}[H]
    \centering
    \includegraphics[scale =  \SCALE]{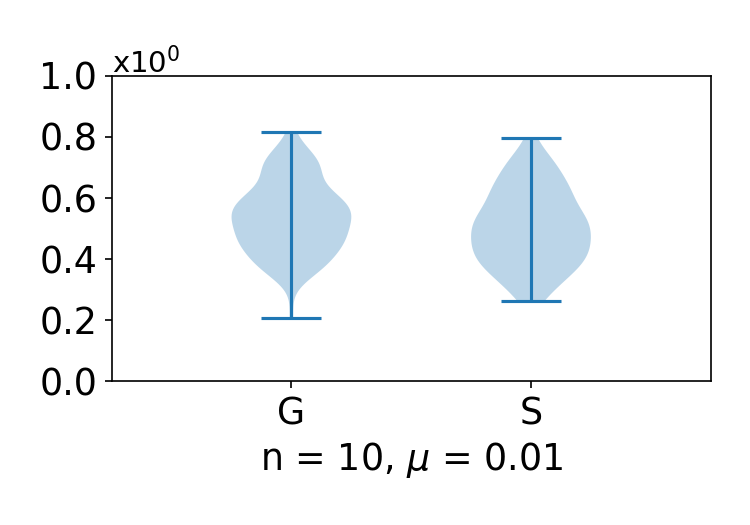}  
    \includegraphics[scale =  \SCALE]{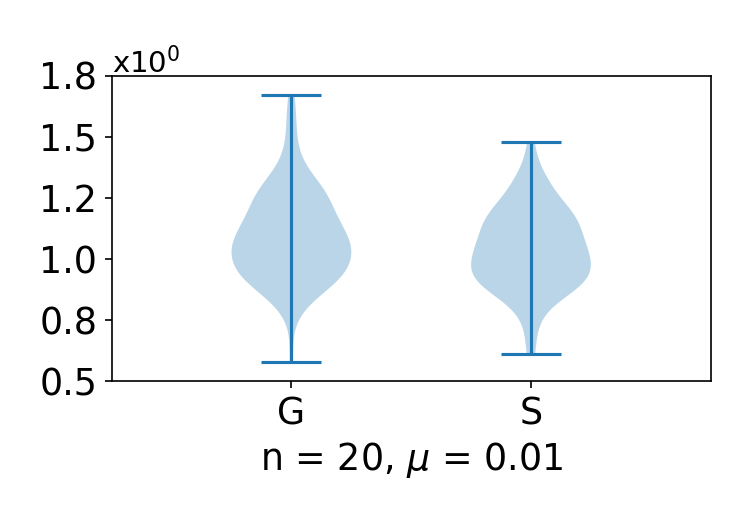}  
    \includegraphics[scale =  \SCALE]{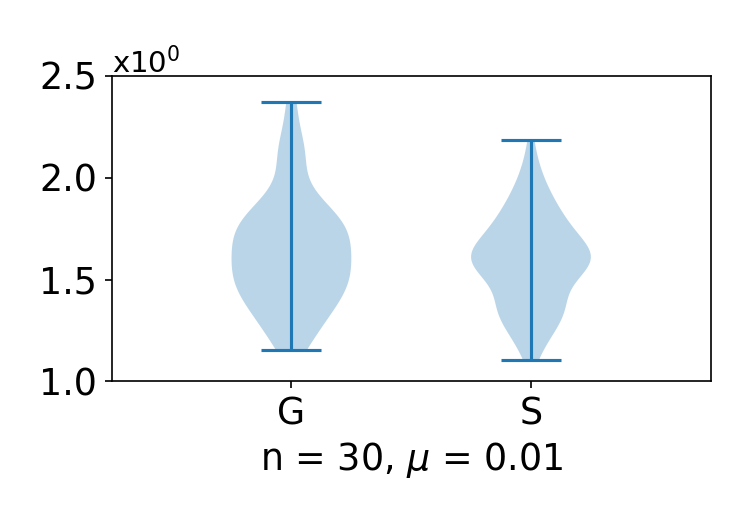} \\
    \includegraphics[scale =  \SCALE]{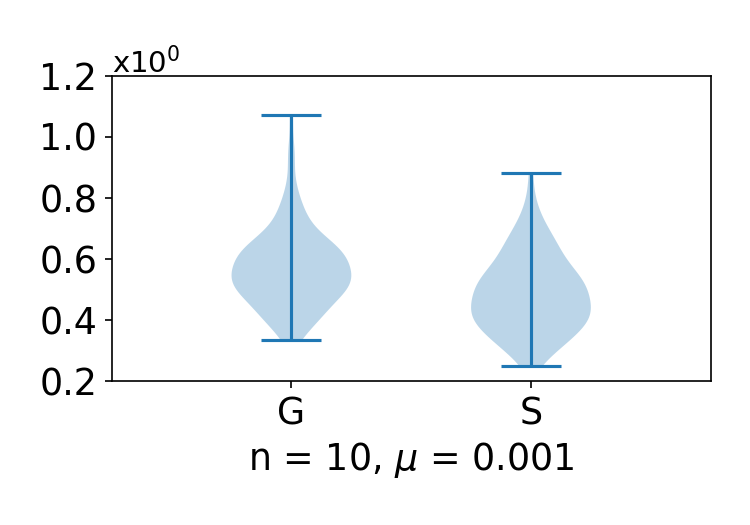}  
    \includegraphics[scale =  \SCALE]{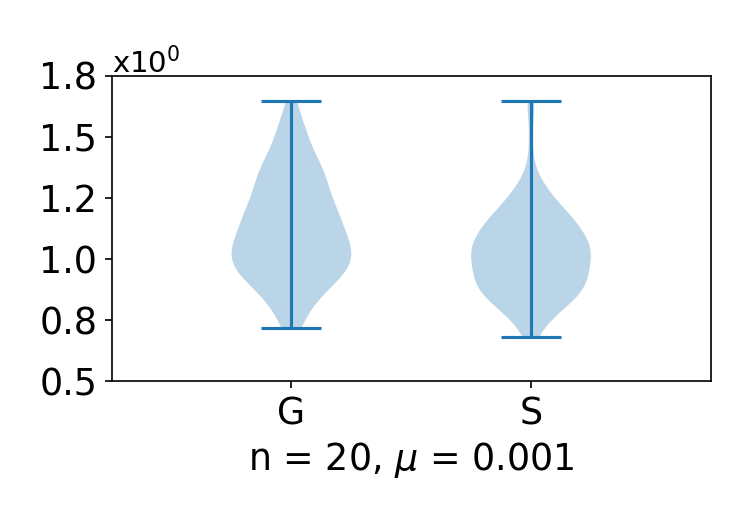}  
    \includegraphics[scale =  \SCALE]{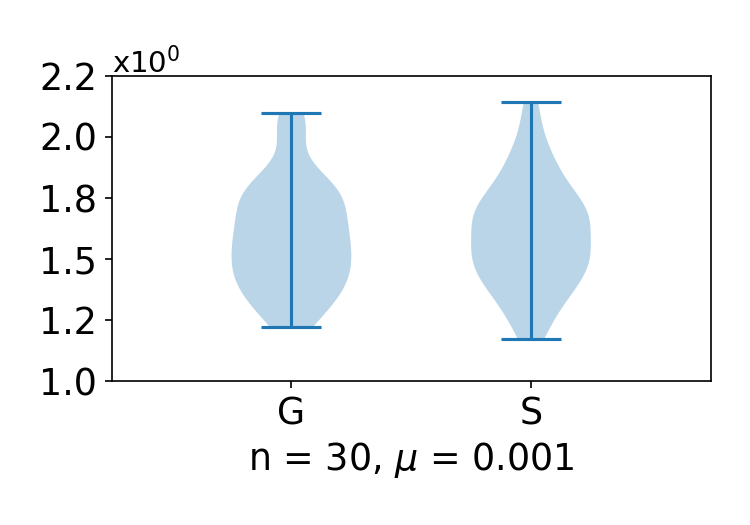} 
    \vspace*{-0.3cm}
    \caption{Plot of $ \left\| \wh{\grad} f\big|_p (v_1, v_2, \cdots, v_m) - \grad f \big|_p \right\| $ for Setting 1b.
    \label{fig:error-1b}}
\end{figure}

\begin{figure}[H]
    \centering
    \includegraphics[scale =  \SCALE]{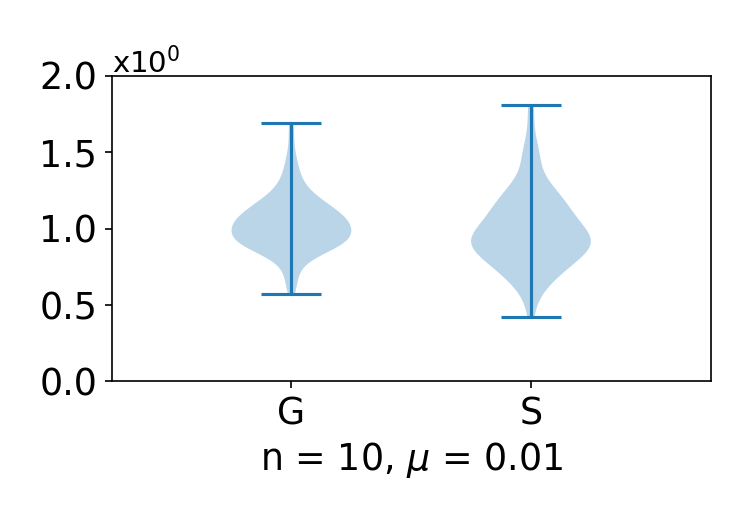}  
    \includegraphics[scale =  \SCALE]{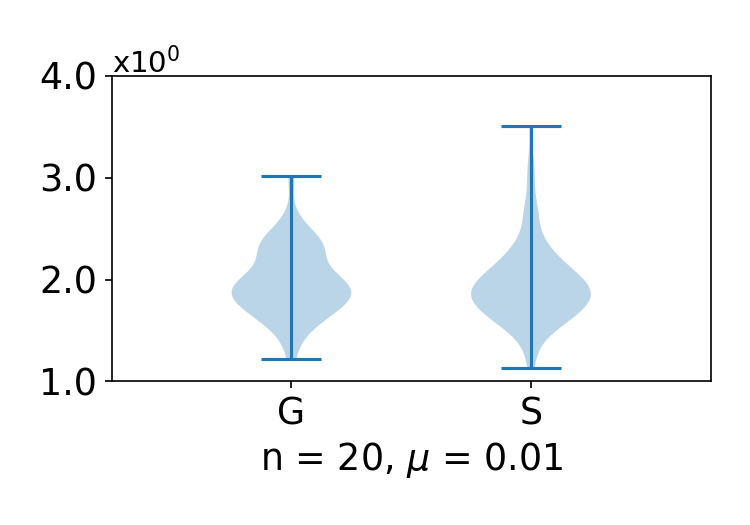}  
    \includegraphics[scale =  \SCALE]{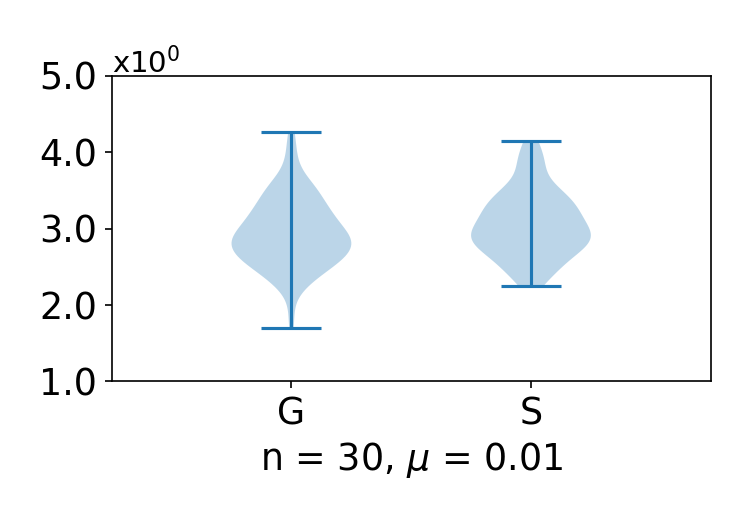} \\
    \includegraphics[scale =  \SCALE]{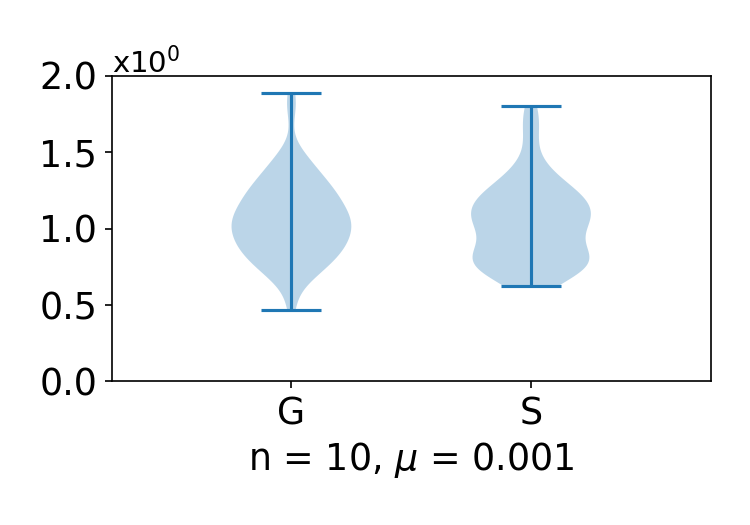}  
    \includegraphics[scale =  \SCALE]{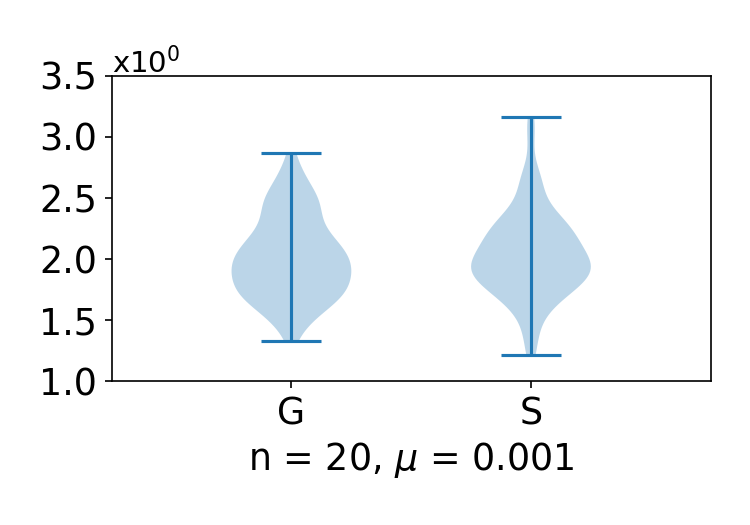}  
    \includegraphics[scale =  \SCALE]{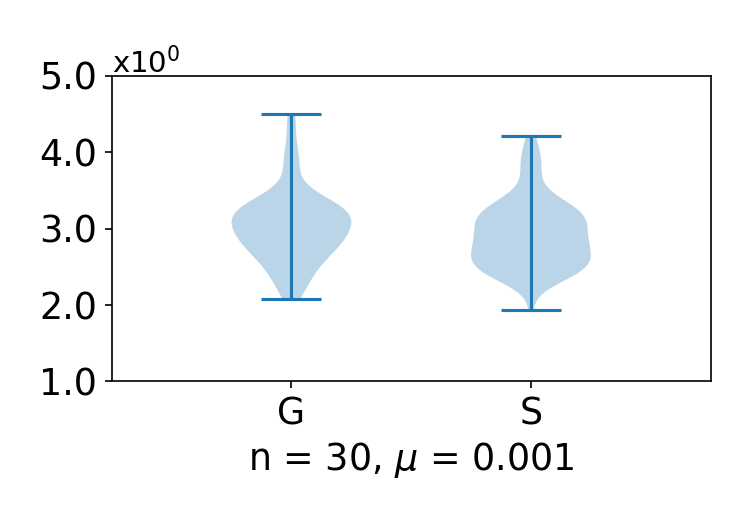} 
    \vspace*{-0.3cm}
    \caption{Plot of $ \left\| \wh{\grad} f\big|_p (v_1, v_2, \cdots, v_m) - \grad f \big|_p \right\| $ for Setting 2a.
    \label{fig:error-2a}}
\end{figure}

\begin{figure}[H]
    \centering
    \includegraphics[scale =  \SCALE]{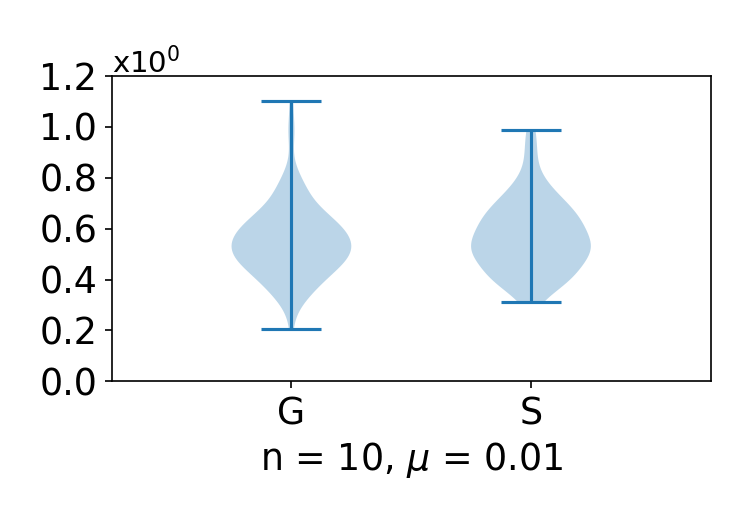}  
    \includegraphics[scale =  \SCALE]{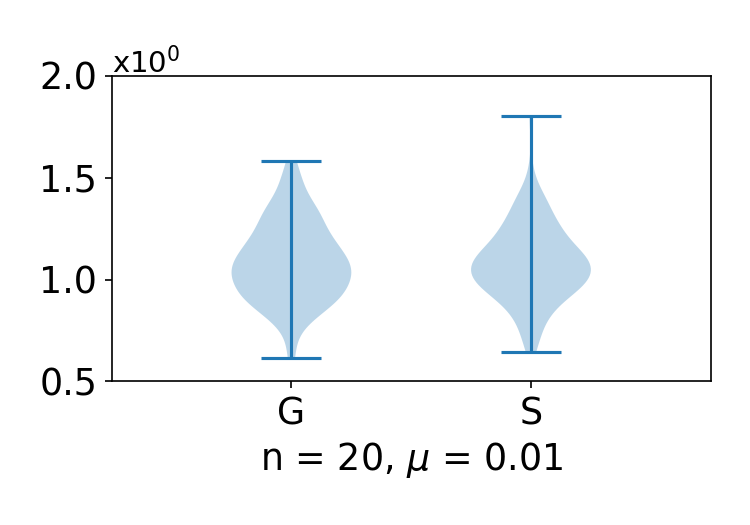}  
    \includegraphics[scale =  \SCALE]{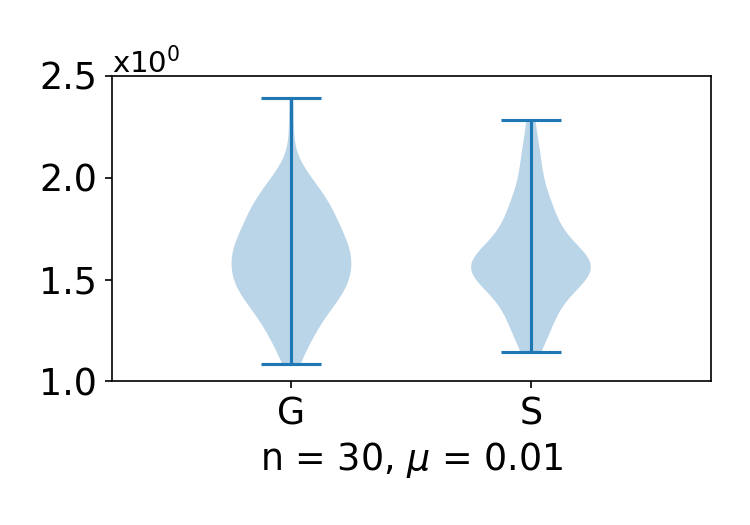} \\
    \includegraphics[scale =  \SCALE]{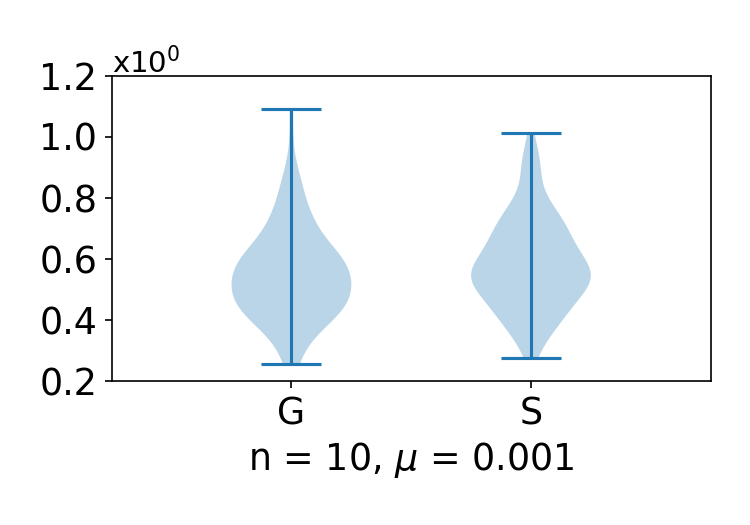}  
    \includegraphics[scale =  \SCALE]{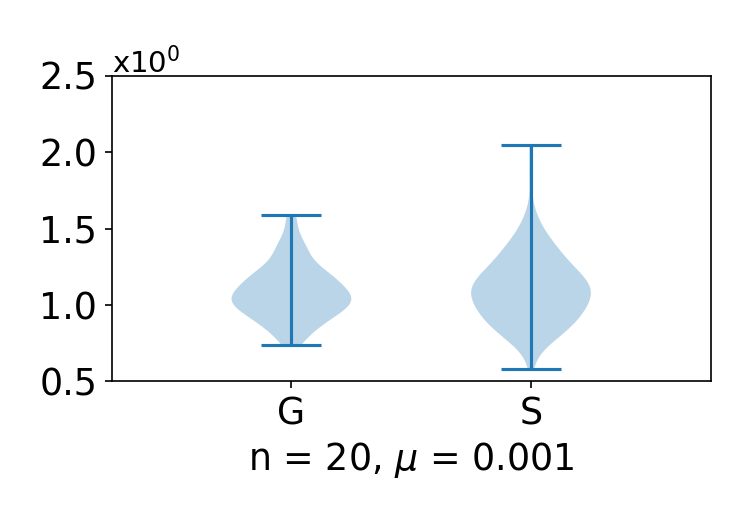}  
    \includegraphics[scale =  \SCALE]{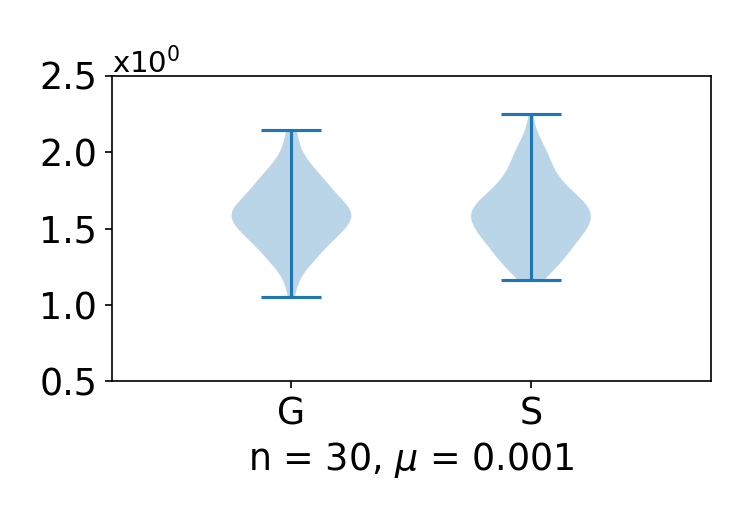} 
    \vspace*{-0.3cm} 
    \caption{Plot of $ \left\| \wh{\grad} f\big|_p (v_1, v_2, \cdots, v_m) - \grad f \big|_p \right\| $ for Setting 2b.
    \label{fig:error-2b} } 
\end{figure}

\begin{figure}[H]
    \centering
    \includegraphics[scale =  \SCALE]{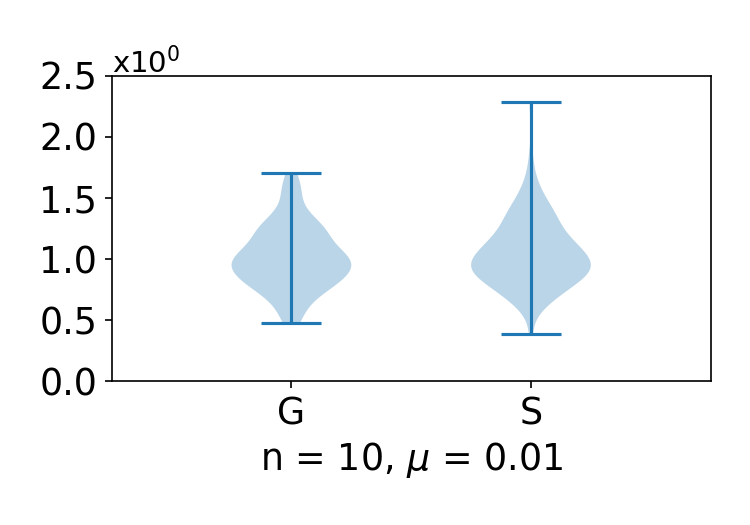}  
    \includegraphics[scale =  \SCALE]{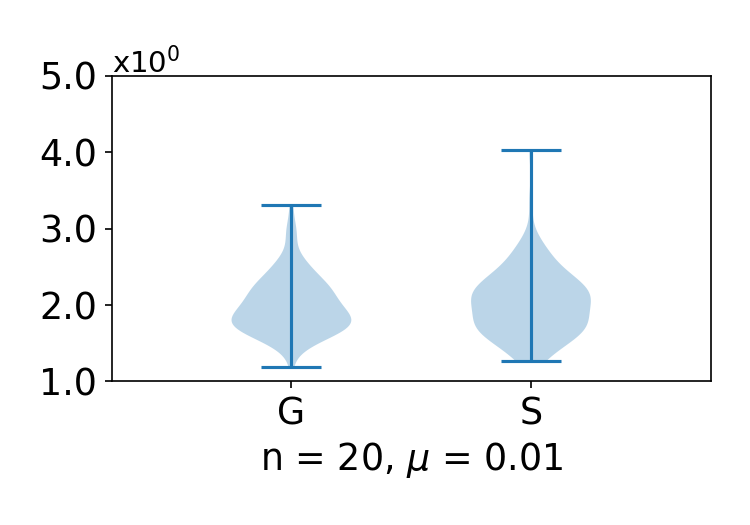}  
    \includegraphics[scale =  \SCALE]{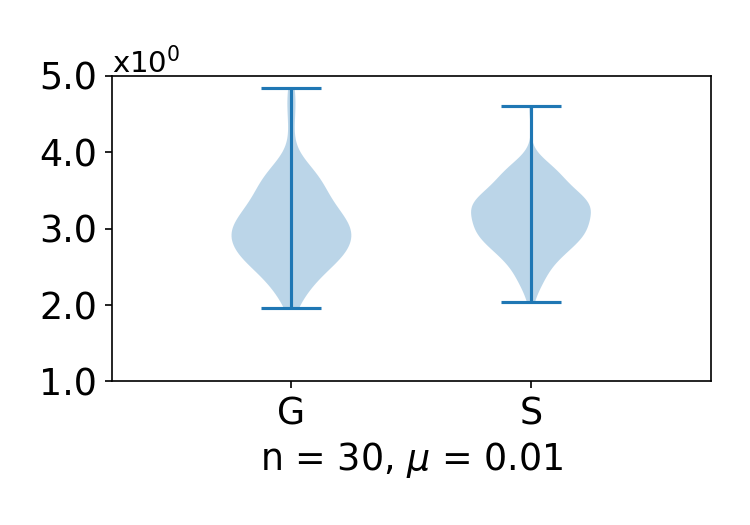} \\
    \includegraphics[scale =  \SCALE]{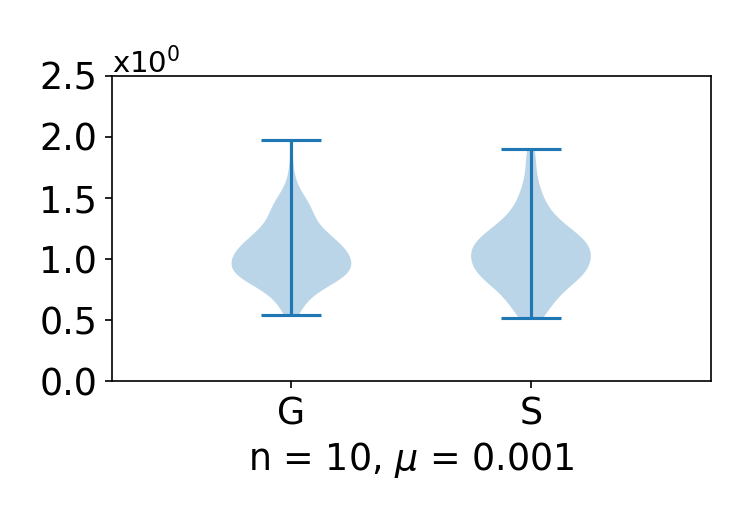}  
    \includegraphics[scale =  \SCALE]{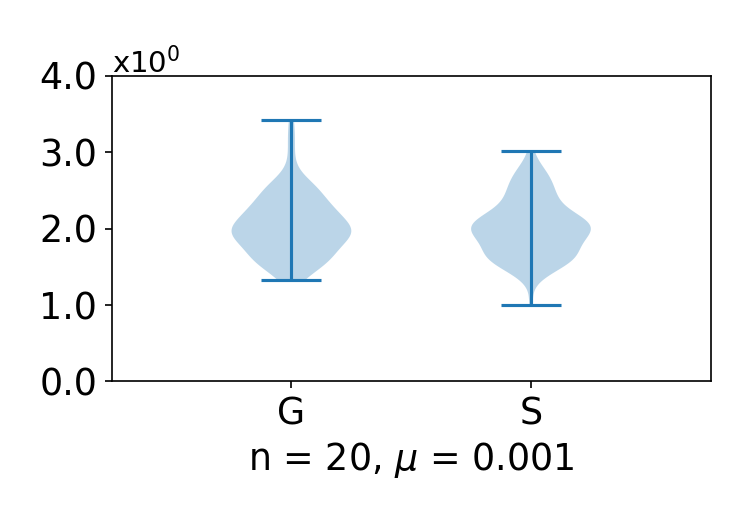}  
    \includegraphics[scale =  \SCALE]{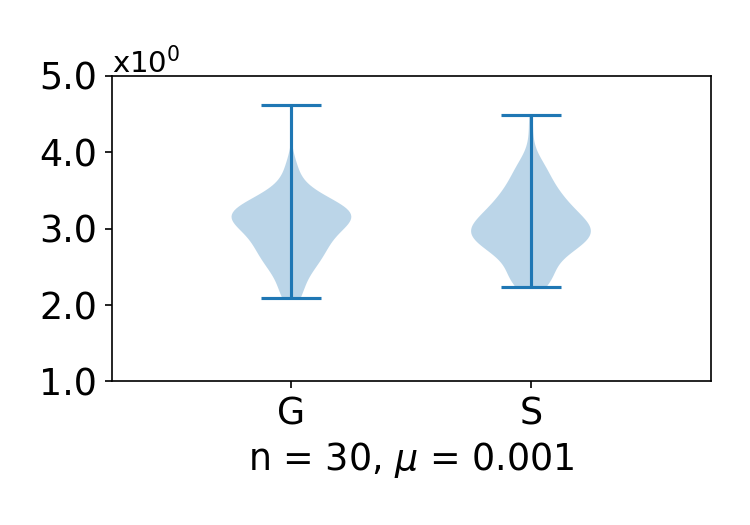}  
    \vspace*{-0.3cm} 
    \caption{Plot of $ \left\| \wh{\grad} f\big|_p (v_1, v_2, \cdots, v_m) - \grad f \big|_p \right\| $ for Setting 3a.
    \label{fig:error-3a} }
\end{figure}

\begin{figure}[H]
    \centering
    \includegraphics[scale =  \SCALE]{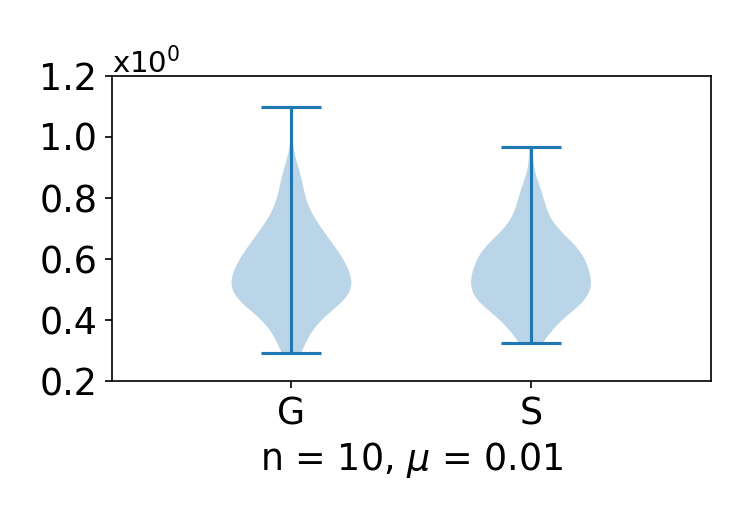} 
    \includegraphics[scale =  \SCALE]{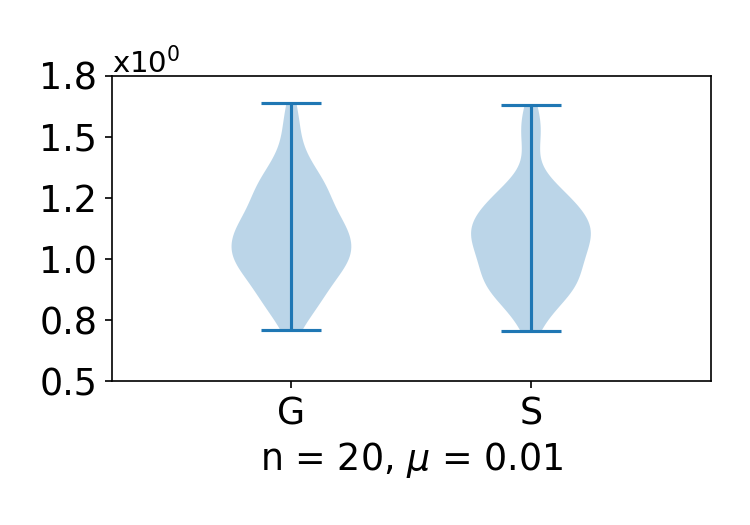} 
    \includegraphics[scale =  \SCALE]{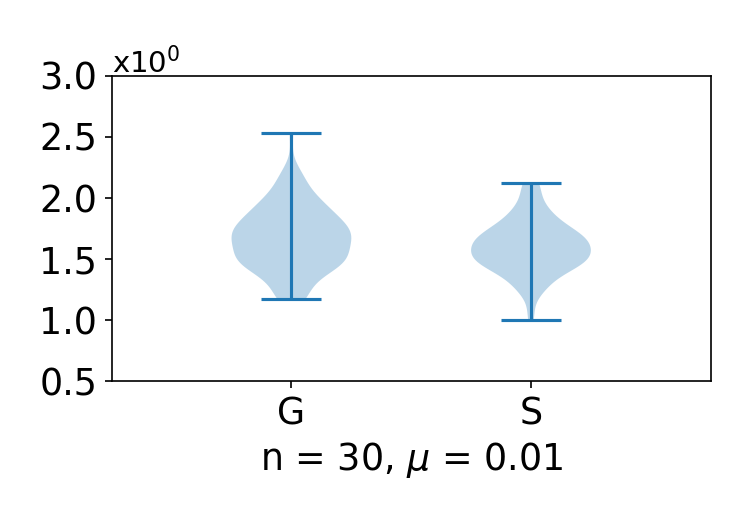} \\
    \includegraphics[scale =  \SCALE]{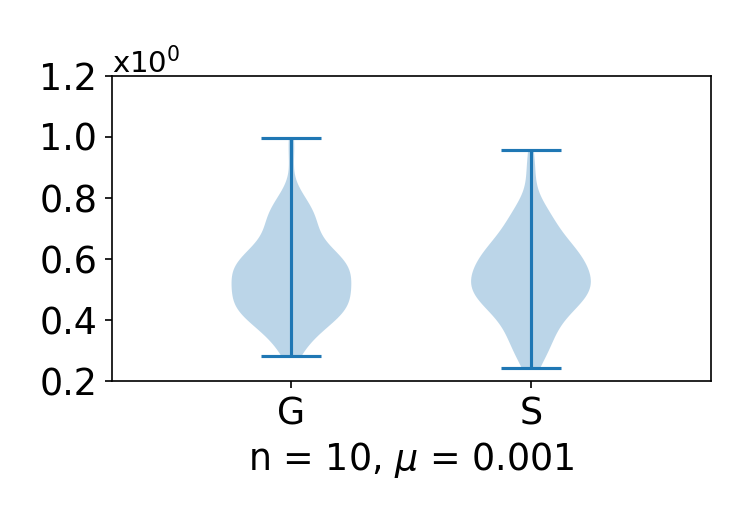} 
    \includegraphics[scale =  \SCALE]{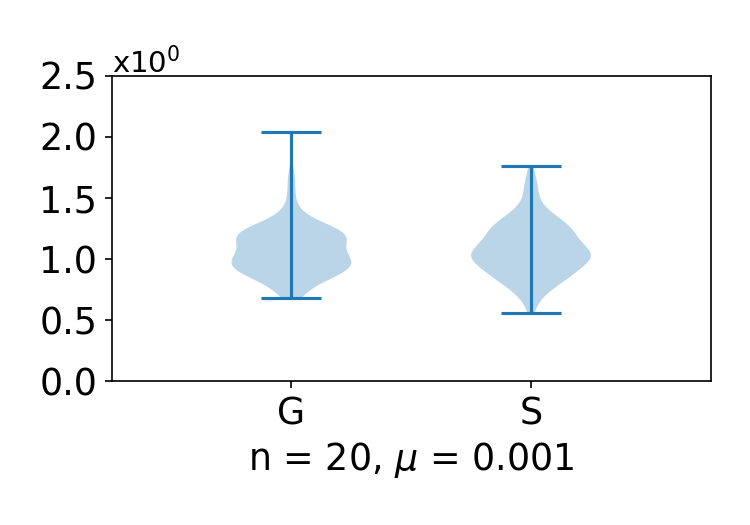} 
    \includegraphics[scale =  \SCALE]{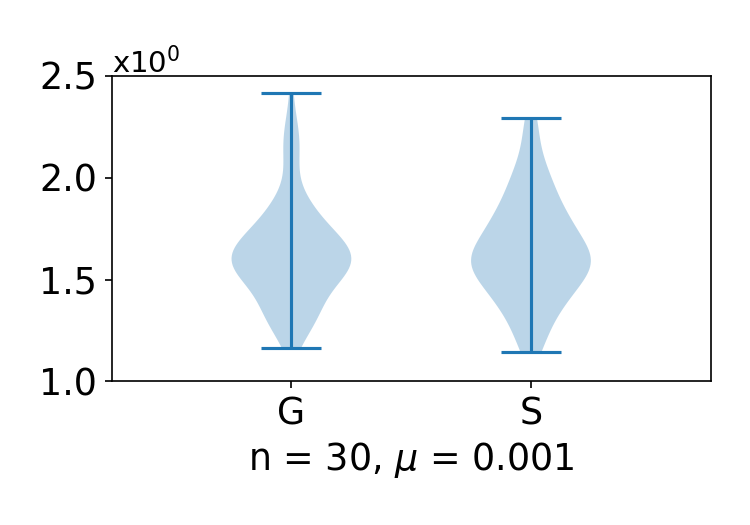} 
    \vspace*{-0.3cm} 
    \caption{Plot of $ \left\| \wh{\grad} f\big|_p (v_1, v_2, \cdots, v_m) - \grad f \big|_p \right\| $ for Setting 3b.
    \label{fig:error-3b} }
\end{figure}


\section{Application to Online Convex Optimization}


In online learning with bandit feedback, each time $t$, the learner chooses points $x_t$ and $z_t$, and the environment picks a convex loss function $f_t$. The learner suffers and observes loss $f_t (x_t)$ and observes the function value $f_t ( z_t )$. The learner then picks $x_{t+1}$ based on historical observations and the learning process moves on. 
Online learning agents, unlike their offline learning (or batch learning) counterpart, do not have knowledge about $f_t$ when they choose $x_t$. Such models capture real-world scenarios such as learning with streaming data. In the machine learning community, such problem settings are often referred to as the \textit{Online Convex Optimization} problem with bandit-type feedback.

\subsection{Online Convex Optimization over Hadamard Manifolds: an Algorithm} 

We consider an online convex optimization problem (with bandit-type feedback) over a negatively curved manifold. Let there be a Hadamard manifold $\M$ and a closed geodesic ball $\D \subseteq \M$. Each time $t$, the agent and the environment executes the above learning protocol, with $f_t$ defined over $\M$ and $x_t \in \D$. The goal of the agent is to minimize the $T$-step hindsight regret: $ Reg (T) := \sum_{t=1}^T f_t (x_t) - f_t (x^*) $, where $x^* \in \arg\min_{x \in \D } \sum_{t=1}^T f_t (x)$. 

With the approximation guarantee in Theorem \ref{thm:grad-error}, one can solve this online learning problem. At each $t$, we can approximate gradient at $ x_t $ using the function value $f_t \( z_t \) $, and perform an online gradient descent step. In particular, we can generalize the online convex optimization algorithm (with bandit-type feedback) \citep{flaxman2005online} to Hadamard manifolds. 

By Theorem \ref{thm:grad-error}, we have 
\begin{align}
    \left\| \grad f \big|_p - \E_{v \sim \S_p} \[ \frac{n}{\mu} \( f (\Exp_p (\mu v) ) - f (p) \) v \] \right\| 
    \le 
    \frac{ L_1 n \mu }{ 2 }, \qquad \forall p \in \M \label{eq:bias-2-pt}
\end{align} 

Thus the gradient of $f_t$ at $x_t$ can be estimated by
\begin{align*}
    \grad f_t \big|_{x_t} \approx \E_{v_t \sim \S_{x_t}} \[ \frac{n}{\mu } \( f_t \( \Exp_{x_t} \( \mu v_t \)  \) - f_t (x_t) \) v_t \], 
\end{align*}
where $v_t$ is uniformly sampled from $\S_{x_t}$. This estimator generalizes the two-evaluation estimator in the Euclidean case \citep{duchi2015optimal}. As shown in (\ref{eq:bias-2-pt}), the bias of this estimator is also bounded by $ \frac{ L_1 n \mu }{ 2 } $. 

With this gradient estimator, one can perform estimated gradient descent by 
\begin{align*} 
    x_{t+\frac{1}{2}} \leftarrow \Exp_{x_t} \( - \eta \frac{n}{\mu } \( f_t \( \Exp_{x_t} \( \mu v_t \)  \) - f_t (x_t) \) v_t \), 
\end{align*}
where $v_t$ is uniformly sampled from $ \S_{x_t} $, and $\eta$ is the learning rate. 

To ensure that the algorithm stays in the geodesic ball $\D$, we need to project $x_{t+\frac{1}{2}}$ back to a subset of $\D$, which is 
\begin{align*}
    (1 - \beta) \D := B (c (\D), (1-\beta) r (\D) ), \qquad \text{ for some } \beta \in (0, 1) ,  
\end{align*}
where $c (\D)$ and $r (\D)$ are the center and radius of the geodesic ball $\D$. 
Since the set $(1 - \beta) \D$ is a geodesic ball in a Hadamard manifold, the projection onto it is well-defined. Thus we can project $ x_{t+\frac{1}{2}} $ onto $ (1-\beta) \D $ to get $x_{t+1}$, or let $ x_{t+1} = \Proj_{(1 - \beta) \D} \(x_{t + \frac{1}{2}} \) $. Now one iteration finishes and the learning process moves on to the next round. This learning algorithm is summarized in Algorithm \ref{alg}.

\begin{algorithm}[h]
    \caption{}  
    \label{alg}
    \begin{algorithmic} 
        \STATE \textbf{Input:} Algorithm parameter: $\mu $, $\eta$, $\beta$; time horizon: $T$;
        \STATE \textbf{Initialization:} Arbitrarily pick $x_1 \in \D$. 
        \FOR {$t = 1, 2,  \dots ,T$} 
        	\STATE Environment picks loss function $ f_t$. The agent suffers and observes loss $f_t (x_t)$. 
            \STATE Sample $v_t \sim \S_{x_t} $. 
        	\STATE Observe function value $ f_t (z_t)$, where 
        	$z_t = \Exp_{x_t} ( \mu v_t ) $, and compute $ w_t = \frac{ n }{ \mu } \( f_t (z_t) - f_t (x_t) \) v_t $. 
        	/* $w_t$ is the gradient estimate. */ 
        	\STATE Perform gradient update 
        	$x_{t+\frac{1}{2}} = \Exp_{x_t} ( -\eta w_t )$. 
        	\STATE Project the point to set $(1 - \beta) \D$, $x_{t+1} = \Proj_{(1 - \beta) \D} \(x_{t + \frac{1}{2}} \) $. 
        	
        	/* In practice, the projection can be approximated by a point in $(1 - \beta) \D$ that is close to $x_{t+\frac{1}{2}}$. */
        \ENDFOR 
    \end{algorithmic} 
\end{algorithm}

\subsection{Online Convex Optimization over Hadamard Manifolds: Analysis} 

\begin{theorem}
\label{thm:rate}
    If, for all $t \in [T]$, the loss function $ f_t$ satisfies that is geodesically $\mu$-smooth, 
    then the regret of Algorithm \ref{alg} satisfies:
    \begin{align}
        \E \[ Reg (T) \] \le \frac{D^2}{2 \eta} +
        \frac{\zeta (\kappa, D) \eta n^2 G^2 T }{2\mu^2 } + \( \frac{D L_1 n \mu }{2}  + 2 \beta D \) T, 
    \end{align}
    where $D$ is the diameter of $\D$, and $G := \max_{p \in \D} f (p)$.  
\end{theorem}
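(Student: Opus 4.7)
The plan is to decompose the expected regret into three pieces---a standard online-gradient-descent piece driven by the stochastic estimate $w_t$, a bias piece because $w_t$ only approximates $\grad f_t|_{x_t}$, and a boundary piece because the iterates live in $(1-\beta)\D$ while $x^\ast$ may lie anywhere in $\D$---and to bound each separately. Set the anchor $y^\ast := \Proj_{(1-\beta)\D}(x^\ast)$ and split
\[
f_t(x_t) - f_t(x^\ast) = \bigl(f_t(x_t) - f_t(y^\ast)\bigr) + \bigl(f_t(y^\ast) - f_t(x^\ast)\bigr).
\]
On the first summand I would use geodesic convexity, $f_t(x_t) - f_t(y^\ast) \le \langle \grad f_t|_{x_t}, -\Exp_{x_t}^{-1}(y^\ast) \rangle$, and then add and subtract $w_t$ so that it further splits into an OGD inner product $\langle -w_t, \Exp_{x_t}^{-1}(y^\ast)\rangle$ and a bias inner product $\langle \grad f_t|_{x_t} - \E[w_t \mid x_t], -\Exp_{x_t}^{-1}(y^\ast)\rangle$.

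The OGD step rests on the Hadamard-manifold cosine inequality (a Toponogov-type comparison as used by Zhang--Sra): for every $y$ in a geodesic ball of diameter $D$ and every $v \in T_{x_t}\M$,
\[
d^2\bigl(\Exp_{x_t}(v), y\bigr) \le d^2(x_t, y) - 2\langle v, \Exp_{x_t}^{-1}(y)\rangle + \zeta(\kappa, D)\,\|v\|^2.
\]
I would apply this with $v = -\eta w_t$ and $y = y^\ast$, combine with the non-expansiveness of projection onto the geodesically convex ball $(1-\beta)\D$ (so $d(x_{t+1}, y^\ast) \le d(x_{t+1/2}, y^\ast)$), rearrange, telescope in $t$, and take expectations to obtain
\[
\E\Bigl[\sum_{t=1}^T \langle -w_t, \Exp_{x_t}^{-1}(y^\ast)\rangle\Bigr] \le \frac{D^2}{2\eta} + \frac{\zeta(\kappa,D)\,\eta}{2}\sum_{t=1}^T \E\|w_t\|^2.
\]
Boundedness of $f_t$ by $G$ on $\D$ gives $|f_t(z_t) - f_t(x_t)| \le G$, so with $\|v_t\| = 1$ we get $\|w_t\| \le nG/\mu$, producing the $\zeta(\kappa,D)\,\eta n^2 G^2 T /(2\mu^2)$ term.

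For the bias, condition on $x_t$ and use $\E[v_t] = 0$ to write $\E[w_t \mid x_t] = (n/\mu)\E_{v_t}[f_t(\Exp_{x_t}(\mu v_t))\,v_t]$; Theorem \ref{thm:grad-error} with $\alpha = 1$ then gives $\|\E[w_t \mid x_t] - \grad f_t|_{x_t}\| \le L_1 n \mu / 2$, and Cauchy--Schwarz with $\|\Exp_{x_t}^{-1}(y^\ast)\| \le D$ yields a per-round bias of $D L_1 n \mu / 2$. For the boundary term, since $\D$ is a geodesic ball of diameter $D$ and $y^\ast = \Proj_{(1-\beta)\D}(x^\ast)$, we have $d(y^\ast, x^\ast) \le \beta D$, and a Lipschitz-type bound (which follows from geodesic $L_1$-smoothness together with the $G$-boundedness on $\D$) gives $f_t(y^\ast) - f_t(x^\ast) \le 2\beta D$ per round. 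Summing the three contributions over $t = 1, \dots, T$ delivers the claimed bound.

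The principal technical obstacle will be setting up the Hadamard cosine inequality with the correct curvature-dependent constant $\zeta(\kappa, D)$ and checking that all iterates $x_t$ and the anchor $y^\ast$ remain inside a geodesic ball on which Toponogov's comparison \emph{and} the non-expansiveness of projection are simultaneously valid---the shrink factor $(1-\beta)$ is introduced precisely to guarantee this containment. A secondary, more cosmetic point is that Theorem \ref{thm:grad-error} controls the bias in the Riemannian norm at $x_t$, so it pairs cleanly with $\Exp_{x_t}^{-1}(y^\ast) \in T_{x_t}\M$ with no parallel-transport bookkeeping needed---a small but pleasant consequence of the decomposition developed in Section 3.
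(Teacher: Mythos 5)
Your proof is essentially correct and follows the same overall architecture as the paper's: geodesic convexity, add-and-subtract $w_t$, the Zhang--Sra comparison inequality (Lemma~\ref{lem:zhang}) to drive the telescoping OGD term, $\|w_t\|\le nG/\mu$ for the variance term, and Theorem~\ref{thm:grad-error} for the bias term. The one genuine difference is how you handle the projection. The paper anchors the telescoping directly at $x^*$, then after the Zhang--Sra step inserts the extra term $d^2(x_{t+1},x^*)-d^2(x_{t+1/2},x^*)$ and controls it by arguing that the projection onto $(1-\beta)\D$ moves $x_{t+1/2}$ by only $O(\beta)$, giving $2\beta D$. You instead anchor at $y^*:=\Proj_{(1-\beta)\D}(x^*)$, invoke non-expansiveness of the metric projection onto the geodesically convex set $(1-\beta)\D$ (so $d(x_{t+1},y^*)\le d(x_{t+1/2},y^*)$ with no leftover term), and recover $f_t(y^*)-f_t(x^*)$ separately via a Lipschitz bound. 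Your route is cleaner conceptually because it exploits non-expansiveness rather than trying to control a squared-distance increase after projection, and it avoids any ambiguity about whether $x_{t+1/2}$ stays in a ball where Lemma~\ref{lem:zhang}'s $\zeta(\kappa,D)$ applies (since the anchor $y^*$ sits inside $(1-\beta)\D$). The trade-off is that your boundary term requires a Lipschitz constant for $f_t$ on $\D$, which is not among the stated hypotheses; you appeal to ``$L_1$-smoothness plus $G$-boundedness,'' but the constant that would turn $d(y^*,x^*)\le \beta D/2$ into $f_t(y^*)-f_t(x^*)\le 2\beta D$ is not pinned down. The paper's version has the mirror-image imprecision (its claim that $d(x_{t+1/2},x_{t+1})\le\beta$ is stated without justification and has the wrong units), so neither argument is fully airtight on that one constant, but your structure makes the needed missing hypothesis (a gradient bound on $\D$) more transparent.
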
 

In particular, when $\eta = \Theta \( T^{-3/4} \)$, $\mu = \Theta \( T^{-1/4} \)$ and $\beta = \Theta \( T^{-1/4} \)$, the regret satisfies $ \E \[ Reg (T) \] \le \mathcal{O} \( T^{3/4} \) $. This sublinear regret rate implies that, when $f_1, f_2, \cdots, f_T$ are identical, the algorithm converges to the optimal point at a rate of order $\mathcal{O} \( T^{-1/4} \)$. We now start the proof of Theorem \ref{thm:rate}, by presenting an existing result by Zhang and Sra. 


\begin{lemma}[Corollary 8, \cite{zhang2016first}] 
    \label{lem:zhang}
    Let $\M$ be a Hadamard manifold, 
    with sectional curvature $ \mathcal{K}_p (u,v) \ge \kappa > -\infty$ for all $p \in \M$ and $u,v \in T_p \M$. Then for any $x, x_s$, and $x_{s+1} = \Exp_{x_s} (-\eta v)$ for some $v \in T_{x_s} \M$, then it holds that 
    \begin{align}
        \< - v , \Exp_{x_s}^{-1} (x) \>_{x_s} \le& \frac{1}{2 \eta } \( d^2 (x_s, x) - d^2 (x_{s+1}, x) \) + \frac{\zeta (\kappa, d(x_s, x)) \eta }{2} \| v \|^2, \nonumber
    \end{align}
    where $ \zeta (\kappa, D) = \frac{ \sqrt{|\kappa|} D }{ \tanh (\sqrt{|\kappa|} D ) } $. 
\end{lemma}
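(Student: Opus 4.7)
The plan is to view the inequality as a statement about the geodesic triangle with vertices $x_s$, $x_{s+1}$, and $x$, apply Toponogov's hinge comparison to transfer the question to the surface of constant curvature $\kappa$, and then extract the claimed bound from the hyperbolic law of cosines. Write $a := d(x_{s+1}, x)$, $b := d(x_s, x_{s+1}) = \eta\|v\|$, $c := d(x_s, x)$, and let $A$ denote the angle at $x_s$ between $\Exp_{x_s}^{-1}(x_{s+1}) = -\eta v$ and $\Exp_{x_s}^{-1}(x)$. Then $\<-\eta v, \Exp_{x_s}^{-1}(x)\>_{x_s} = bc\cos(A)$, and after multiplying through by $2\eta$ the claim is equivalent to the triangle-style inequality
\begin{align*}
a^2 \le c^2 + \zeta(\kappa, c)\,b^2 - 2bc\cos(A). \tag{$\star$}
\end{align*}

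Since $\M$ is Hadamard with sectional curvature bounded below by $\kappa$, Toponogov's hinge comparison gives $a \le \tilde{a}$, where $\tilde{a}$ is the length of the third side of the comparison hinge with sides $b$, $c$ and included angle $A$ in the complete simply connected surface of constant sectional curvature $\kappa$. That surface is hyperbolic when $\kappa < 0$ and Euclidean when $\kappa = 0$; the latter case reduces immediately to the usual cosine law (with $\zeta(\kappa, c) = 1$), so I assume $\kappa < 0$ and set $k := \sqrt{|\kappa|}$. The hyperbolic law of cosines then yields
\[
\cosh(k\tilde{a}) = \cosh(kb)\cosh(kc) - \sinh(kb)\sinh(kc)\cos(A),
\]
and it suffices to prove $(\star)$ with $\tilde{a}$ in place of $a$.

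The main obstacle is this scalar reduction: converting the transcendental identity above into a quadratic inequality with the sharp coefficient $\zeta(\kappa, c) = kc/\tanh(kc)$. My approach is to treat both sides of $(\star)$ as functions of $b$ at fixed $c$ and $A$. At $b = 0$ both sides equal $c^2$; implicit differentiation of the hyperbolic cosine law gives $\tilde{a}'(0) = -\cos(A)$, matching the linear term of the right-hand side; a second implicit differentiation yields
\[
\frac{d^2(\tilde{a}^2)}{db^2}\bigg|_{b=0} = 2\cos^2(A) + 2\sin^2(A)\,\frac{kc}{\tanh(kc)} \le 2\,\zeta(\kappa, c),
\]
which is at most the constant second derivative $2\zeta(\kappa, c)$ of the right-hand side of $(\star)$. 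A global monotonicity/convexity argument in $b$—verifying that $d^2(\tilde{a}^2)/db^2$ remains bounded by $2\zeta(\kappa, c)$ throughout $[0, \infty)$, leveraging that $x \mapsto x/\tanh(x)$ is nondecreasing on $(0,\infty)$—then extends this infinitesimal comparison to all $b \ge 0$. This convexity extension is the technical crux of the argument (and precisely where the coefficient $\zeta(\kappa, c)$ is forced). Once $(\star)$ is established, dividing through by $2\eta$ and reinstating $\<-\eta v, \Exp_{x_s}^{-1}(x)\>_{x_s}$ for $bc\cos(A)$ recovers the stated inequality.
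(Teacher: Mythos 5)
The paper offers no proof of this lemma: it is imported verbatim as Corollary 8 of \citet{zhang2016first}, whose Lemma 5 contains the underlying trigonometric comparison, so the only benchmark is that cited argument. Your overall strategy coincides with it: identify the inequality with a statement about the geodesic triangle $(x_s, x_{s+1}, x)$, apply the Toponogov hinge comparison (correctly oriented — a lower curvature bound gives $a \le \tilde a$ for a fixed hinge), and then compare the hyperbolic law of cosines against the quadratic bound $c^2 + \zeta(\kappa,c)b^2 - 2bc\cos A$. The reduction to the scalar inequality $(\star)$, the handling of $\kappa = 0$, and the matching of the value and first derivative at $b = 0$ are all correct.

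The gap is exactly where you flag it: the global bound $\frac{d^2(\tilde a^2)}{db^2} \le 2\zeta(\kappa, c)$ for all $b \ge 0$ is asserted rather than proven, and it is essentially the entire content of the lemma, not a routine convexity extension. Worse, the mechanism you invoke — monotonicity of $x \mapsto x/\tanh x$ — cuts the \emph{wrong} way on part of the range. Normalizing $k = 1$, implicit differentiation of the law of cosines together with the identity $1 - (\tilde a')^2 = \sinh^2(c)\sin^2(A)/\sinh^2(\tilde a)$ gives
\begin{align*}
\frac{1}{2}\frac{d^2(\tilde a^2)}{db^2} = (\tilde a')^2 + \frac{\tilde a}{\tanh \tilde a}\bigl(1 - (\tilde a')^2\bigr) = 1 + \sin^2(A)\,\sinh^2(c)\cdot\frac{\tilde a\cosh\tilde a - \sinh\tilde a}{\sinh^3\tilde a},
\end{align*}
a convex combination of $1$ and $\tilde a/\tanh\tilde a$. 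Since $\tilde a$ exceeds $c$ for large $b$, monotonicity of $x/\tanh x$ then yields an upper bound \emph{larger} than $\zeta(\kappa, c)$, not smaller. What actually closes the argument is that $g(x) := (x\cosh x - \sinh x)/\sinh^3 x$ is decreasing, so the required inequality $\sin^2(A)\,g(\tilde a) \le g(c)$ is immediate when $\tilde a \ge c$; in the regime $\tilde a < c$ one must additionally use that $\tilde a$ is minimized over $b$ at a point where $\sinh\tilde a_{\min} = \sinh(c)\sin(A)$, and then verify $\phi\bigl(\sinh^{-1}(s\sinh c)\bigr) \le s\,\phi(c)$ for $\phi(x) = x\cosh x - \sinh x$ and $s = \sin A$ (which follows from convexity of the left-hand side in $s$). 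None of this appears in your proposal, so as written the proof is incomplete — though the route is viable and, once the scalar inequality is supplied, does recover the lemma.
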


\begin{proof}[Proof of Theorem \ref{thm:rate}]
    By convexity of $f_t$, we have 
    $$ f_t(x_{t}) - f_t(x^*) \le  \< - \grad f_t\big|_{x_{t} }, \Exp_{ x_{t} }^{-1} (x^*) \> .$$ 
    Together with the Lemma of Zhang and Sra, we get
    \begin{align*}
        &f_t ( x_{t } ) - f_t (x^*) \\
        \le& \< - \grad f_t\big|_{x_{t} }, \Exp_{x_{t } }^{-1} (x^*) \> \\
        \le& \< - w_t, \Exp_{x_{t } }^{-1} (x^*) \> + \< w_t - \grad f_t\big|_{x_{t} }, \Exp_{x_{t } }^{-1} (x^*) \> \\
        \le& 
        \frac{1}{2 \eta } 
        \( d^2 (x_{t}, x^*) - d^2 
        (x_{t+\frac{1}{2}}, x^*) \) + 
        \frac{\zeta (\kappa, D) \eta }{2} \| w_t \|^2 \\
        &+ \< w_t - \grad f_t \big|_{x_{t} }, \Exp_{x_{t } }^{-1} (x^*) \>,
    \end{align*}
    where the last line uses Lemma \ref{lem:zhang}. Taking expectation (over $v_t \sim \S_{x_t} $) on both sides gives 
    \begin{align}
        &\E \[ f_t ( x_{t } ) \] - \E \[ f_t (x^*) \] \nonumber \\
        \le& 
        \frac{1}{2 \eta } 
        \E \[ d^2 (x_{t}, x^*) - d^2 
        (x_{t+\frac{1}{2}}, x^*) \] \hspace{-2pt} + \hspace{-2pt}
        \E \[ \frac{\zeta (\kappa, D) \eta }{2} \| w_t \|^2 \] \nonumber \\
        &+ \< \E \[ w_t\] - \grad f_t \big|_{x_{t} }, \Exp_{x_{t } }^{-1} (x^*) \> \nonumber \\ 
        \le& 
        \frac{ \E \hspace{-2pt} \[ d^2 ( \hspace{-1pt} x_{t}, \hspace{-1pt} x^* \hspace{-1pt} )  \hspace{-2pt} - \hspace{-2pt} d^2 ( \hspace{-1pt} x_{t+\frac{1}{2}}, \hspace{-1pt} x^* \hspace{-1pt} ) \hspace{-1.5pt} \] }{ 2 \eta } \hspace{-2pt} + \hspace{-2pt}
        \frac{\zeta (\kappa,\hspace{-2pt} D) \eta n^2 G^2 }{2 \mu^2 } \hspace{-2pt} + \hspace{-3pt} \frac{D L_1 n \mu }{2  }, \label{eq:anchor} 
    \end{align} 
    where the last line uses Theorem \ref{thm:grad-error}, and that $ \|w_t \|^2 \le \frac{n^2 G^2 }{ \mu^2 } $ ($G \ge f (p) $ for all $p \in \D$). 
    Henceforth, we will omit the expectation operator $\E$ for simplicity. 
    
    By definition of $ (1 - \beta) \D $, we have, $ d \(x_{t+\frac{1}{2}} , x_{t+ 1} \) \le \beta $. 
    Plugging this back to (\ref{eq:anchor}), we get, in expectation, 
    \begin{align*}
        f_t ( x_t ) - f_t (x^*) 
        \le&
        \frac{1}{2 \eta } 
        \( d^2 (x_{t}, x^*) - d^2 
        (x_{t+1}, x^*) \) \nonumber \\
        &+ 
        \frac{\zeta (\kappa,\hspace{-2pt} D) \eta n^2 G^2 }{ 2 \mu^2  } + \frac{D L_1 n \mu }{2}  + d^2 ( x_{t+1}, x^* ) - d^2 ( x_{t+\frac{1}{2}}, x^* ) \\ 
        \le&  
        \frac{1}{2 \eta } 
        \( d^2 (x_{t}, x^*) - d^2 
        (x_{t+1}, x^*) \) + 
        \frac{\zeta (\kappa,\hspace{-2pt} D) \eta n^2 G^2 }{2 \mu^2  } + \frac{D L_1 n \mu }{2} + 2 \beta D. 
    \end{align*} 
    Summing over $t$ gives, in expectation, 
    \begin{align*}
        &\sum_{t=1}^T f_t ( x_t ) - f_t (x^*) \\
        \le& \frac{1}{2 \eta }
        \sum_{t=1}^T \( d^2 (x_{t}, x^*) - d^2 (x_{t+1}, x^*)  \) +
        \frac{\zeta (\kappa, D) \eta n^2 G^2 T }{2\mu^2 } + \( \frac{D L_1 n \mu }{2} + 2 \beta D \) T \\ 
        \le& 
        \frac{D^2}{2 \eta} +
        \frac{\zeta (\kappa, D) \eta n^2 G^2 T }{2\mu^2 } + \( \frac{D L_1 n \mu }{2}  + 2 \beta D \) T, 
    \end{align*}
    where the last line telescopes out the summation term. 
    
\end{proof}

\subsection{Online Convex Optimization over Hadamard Manifolds: Experiments} 

In this section, we study the effectiveness of our algorithm on the benchmark task of finding the Riemannian center of mass of several positive definite matrices \citep{bini2013computing}. The Riemannian center of mass of a set of $N$ symmetric positive definite matrices $\{A_i\}_{i=1}^N$ seeks to minimize 
\begin{align}
    f(X, \{A_i\}_{i=1}^N ) = \sum_{i=1}^N \| \log(X^{-1/2} A_i X^{-1/2}) \|_F^2,  \label{eq:exp}
\end{align} 
where $X$ is symmetric and positive definite. 

This objective is non-convex in the Euclidean sense, but is geodesically convex on the manifold of positive definite matrices. The gradient update step for objective (\ref{eq:exp}) over the manifold of symmetric and positive definite matrices is 
\begin{align*} 
    X_{t+1} = X_t^{1/2} \exp \( - \eta \;  \grad f_t \big|_{x_t} \) X_t^{1/2}. 
\end{align*}

For our experiments, we can only estimate the gradient $\grad f_t\big|_{x_t}$ by querying function values of $f_t$. We randomly sample several positive definite matrices $A_i$, and apply Algorithm \ref{alg} to search for $X$ to minimize (\ref{eq:exp}). The results are summarized in Figure \ref{fig:bandit}. 


\begin{figure}
    \centering
     \begin{subfigure}[b]{0.49\textwidth}
         \centering
         \includegraphics[width=\textwidth]{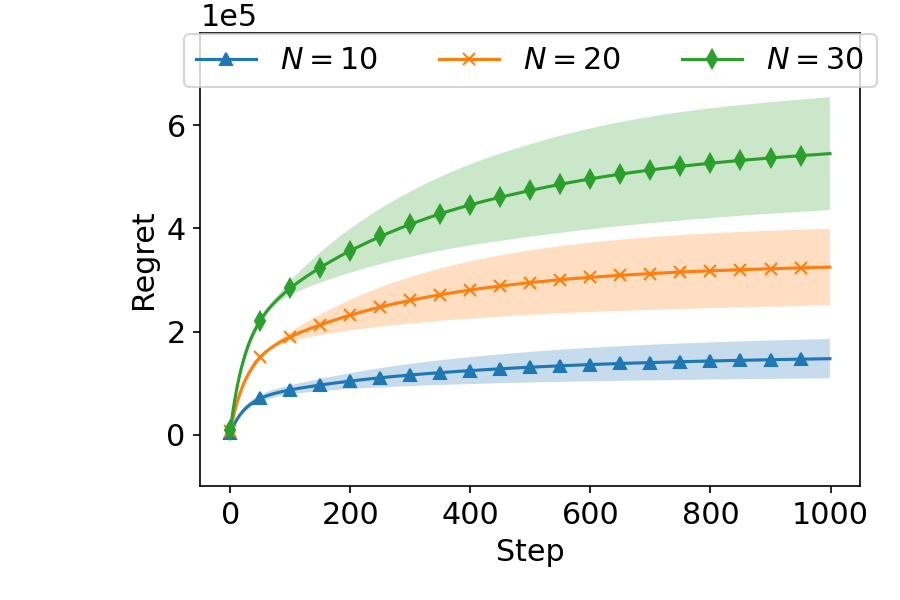}
         \caption{Dimension $ 5 \times 5$}
         \label{fig:bandit-1}
     \end{subfigure}
     \hfill
     \begin{subfigure}[b]{0.49\textwidth}
         \centering
         \includegraphics[width=\textwidth]{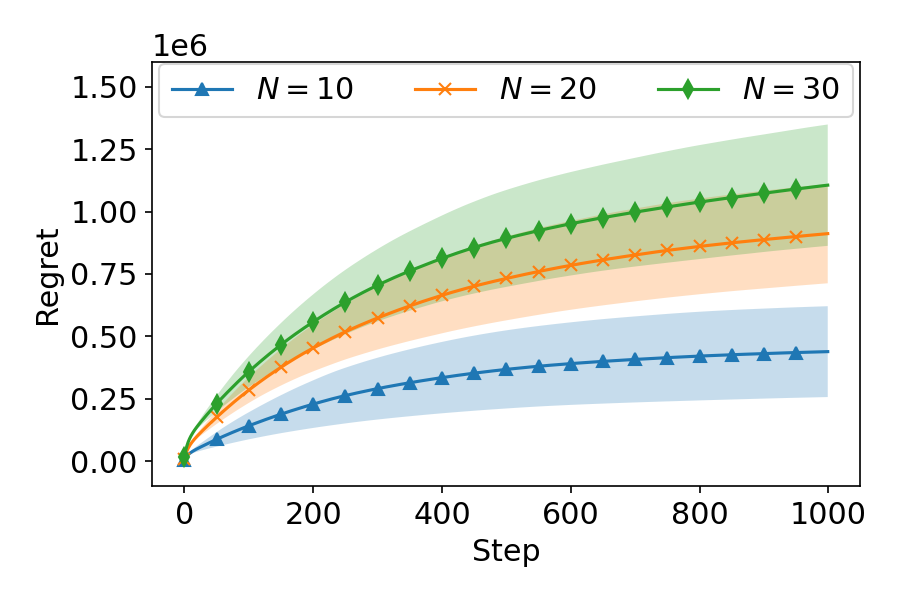}
         \caption{Dimension $10 \times 10 $}
         \label{fig:bandit-2}
     \end{subfigure}

    \caption{Results of Algorithm \ref{alg} on the task of finding the Riemannian center of mass (\ref{eq:exp}). Each line plots the average regret of 5 runs and the shaded area around the line indicates one standard deviation above and below the average. The left (resp. right) subfigure plots the result over the manifold of symmetric positive definite matrices of dimension $5 \times 5$ (resp. $10 \times 10$). The $N$ in the legends is the number of matrices used in Riemannian center of mass objective (\ref{eq:exp}).  \label{fig:bandit} }
    
\end{figure} 

\section{Conclusion}

In this paper, we study the GW convolution/approximation and gradient estimation over Riemannian manifolds. Our new decomposition trick of the GW approximation provides a tools for computing properties of the GW convolution. Using this decomposition trick, we derive a formula for how the curvature of the manifold will affect the curvature of the function via the GW convolution. 
Empowered by our decomposition trick, we introduce a new gradient estimation method over Riemannian manifolds. Over an $n$-dimensional Riemannian manifold, we improve best previous gradient estimation error \citep{li2020stochastic}. We also apply our gradient estimation method to bandit convex optimization problems, and generalize previous results from Euclidean spaces to Hadamard manifolds. 



\bibliographystyle{apalike} 
\bibliography{references} 



\end{document}